\documentclass[nohyperref]{article}
\usepackage[T1]{fontenc}

\usepackage{PRIMEarxiv}

\usepackage{microtype}
\usepackage{graphicx}
\usepackage{subfigure}
\usepackage{booktabs} 

\usepackage{algorithm}
\usepackage[mathscr]{eucal}

\usepackage{tabularx}
\usepackage{multirow}
\usepackage{enumerate}


\usepackage{colortbl}
\usepackage{dcolumn}
\newcolumntype{.}{D{.}{.}{1.3}}

\newcommand\dashrule{\leavevmode\xleaders\hbox{-}\hfill\kern0pt}


\def\diag{\operatorname{diag}}

\newcommand{\bu}{\bvec{u}}


\newcommand{\bA}{{\bf A}}
\newcommand{\bB}{{\bf B}}
\newcommand{\bC}{{\bf C}}

\newcommand{\bI}{{\bf I}}

\newcommand{\bL}{{\bf L}}

\newcommand{\bQ}{{\bf Q}}

\newcommand{\bS}{{\bf S}}

\newcommand{\bT}{{\bf T}}
\newcommand{\bU}{{\bf U}}
\newcommand{\bV}{{\bf V}}
\newcommand{\bW}{{\bf W}}
\newcommand{\bX}{{\bf X}}
\newcommand{\bY}{{\bf Y}}
\newcommand{\bZ}{{\bf Z}}




\newcommand{\btheta}{\mbox{\boldmath $\theta$}}




\newcommand{\1}{\mbox{\boldmath $1$}}



\newcommand{\be}{\begin{eqnarray}}
\newcommand{\ee}{\end{eqnarray}}

\newcommand{\matrixb}{\left[ \begin{array}}
\newcommand{\matrixe}{\end{array} \right]}





\newcommand{\tr}{\mathop{\rm tr}\nolimits}

\def\*{\circledast}

\newcommand{\bvec}[1]{\boldsymbol{#1}}

\def\vectorize{\operatorname{vec}}
\newcommand{\vtr}[1]{\vectorize\hspace{-.3ex}\left(#1\right)}

 \newcommand{\tensor}[1]{\boldsymbol{\mathscr{\MakeUppercase{#1}}}} 
\newcommand{\tA}{\tensor{A}}
\newcommand{\tB}{\tensor{B}}
\newcommand{\tC}{\tensor{C}}

\newcommand{\tG}{\tensor{G}}

\newcommand{\tU}{\tensor{U}}
\newcommand{\tV}{\tensor{V}}
\newcommand{\tW}{\tensor{W}}
\newcommand{\tX}{\tensor{X}}
\newcommand{\tY}{\tensor{Y}}
\newcommand{\tZ}{\tensor{Z}}

\newcommand{\minitab}[2][l]{\begin{tabular}{@{}#1}#2\end{tabular}}
\usepackage[vlined,ruled,commentsnumbered,algo2e]{algorithm2e}


\usepackage{aurical}






\usepackage{arydshln}



\newcommand{\ttprod}{\bullet}

\usepackage{stackengine,graphicx,scalerel}
\newcommand\CircArrowRight[1]{\stackengine{.25ex}{#1}{\hspace{-.2ex}{\CAR}}{O}{c}{F}{F}{L}}
\newcommand\CircArrowLeft[1]{\stackengine{.25ex}{#1}{\hspace{0.2ex}{\CAL}}{O}{c}{F}{F}{L}}
\newcommand\CAR{\scaleto{\circlearrowright}{1.1ex}}
\newcommand\CAL{\scaleto{\circlearrowleft}{1.1ex}}

\newcommand\circlellbracket{\mathopen{{\CircArrowRight{[}}}}
\newcommand\circlerrbracket{\mathclose{{\CircArrowLeft{]}}}}



\newcommand{\rememberlines}{\xdef\rememberedlines{\number\value{AlgoLine}}}
\newcommand{\resumenumbering}{\setcounter{AlgoLine}{\rememberedlines}}

\usepackage{amsmath}
\usepackage{amssymb}
\usepackage{mathtools}
\usepackage{amsthm}
\usepackage{pythonhighlight}


\newtheorem{theorem}{Theorem}[section]
\newtheorem{lemma}[theorem]{Lemma}
 \newtheorem{definition}[theorem]{Definition}
\theoremstyle{remark}
\newtheorem{remark}[theorem]{Remark}

\usepackage[textsize=tiny]{todonotes}
\usepackage{enumitem}

\graphicspath{{./TC/}{./TT/}{./images/}{./Resnet/}}




\newcounter{example} 
    
\newenvironment{example}
{\refstepcounter{example}\vspace{10pt}\par\noindent 
\textbf{Example \theexample\ }
}
{}%
  
\title{How to Train Unstable Looped Tensor Network}

 \author{
 \minitab[c]{Anh-Huy Phan$^{1}$, Konstantin Sobolev$^{1}$, Dmitry Ermilov$^{1}$, Igor Vorona$^{1}$, \\ Nikolay Kozyrskiy$^{1}$, Petr Tichavsk\'y$^{2}$ and Andrzej Cichocki$^{1}$} \\
  1 Skolkovo Institute of Science and Technology \\
  Moscow, Nobelya Ulitsa 3, 121207, Russia \\
  2 Academy of Sciences of the Czech Republic, Institute of Information Theory and Automation\\
  Prague, Pod vodarenskou vezi 4, 18200, Czech Republic
}


\begin{document} 

\maketitle

\begin{abstract}
A rising problem in the compression of Deep Neural Networks is how to reduce the number of parameters in convolutional kernels and the complexity of these layers by low-rank tensor approximation.
Canonical polyadic tensor decomposition (CPD) and Tucker tensor decomposition (TKD) are two solutions to this problem and provide promising results. However, CPD often fails due to degeneracy, making the networks unstable and hard to fine-tune. 
TKD does not provide much compression if the core tensor is big.
This motivates using a hybrid model of CPD and TKD, a decomposition with multiple Tucker models with small core tensor, known as block term decomposition (BTD). This paper proposes a more compact model that further compresses the BTD by enforcing core tensors in BTD identical. We establish a link between the BTD with shared parameters and a looped chain tensor network (TC). Unfortunately, such strongly constrained tensor networks (with loop) encounter severe numerical instability, as proved by \cite{Landsberg} and \cite{Handschuhth}.
We study perturbation of chain tensor networks, provide interpretation of instability in TC, demonstrate the problem. We propose novel methods to gain the stability of the decomposition results, keep the network robust and attain better approximation. Experimental results will confirm the superiority of the proposed methods in compression of well-known CNNs, and TC decomposition under challenging scenarios.


\end{abstract}

\section{Introduction}
 
%
Despite the outstanding efficiency of convolutional neural networks (CNNs), their practical application is hampered by computational complexity and high resources consumption. Based on the observation that the weights of convolutional networks contain redundant information, they can be compressed without large losses in network performance by structural pruning \cite{Guo_2021_GDP}, sparsification \cite{Singh2020sparse}, quantization \cite{Kryzhanovskiy_2021_QPP} and low-rank approximation \cite{Yin_2021_HTKD_RNN, Kim_2019_CVPR, PhanECCV2020}.
%
Prior works have explored a wide variety of methods to weight factorization \cite{Panagakis2021TMinDL}: singular value decomposition \cite{Kim_2019_CVPR}, Canonical Polyadic decomposition \cite{PhanECCV2020}, Tucker decomposition \cite{Yin_2021_HTKD_RNN} and Tensor Train decomposition \cite{Novikov2015,Yin_2021_TT}.

{\bf Canonical polyadic tensor decomposition (CPD)} was the first low-rank model applied to compress CNN \cite{Denton2014}.  
The CP-convolutional layer composes separable convolution kernel matrices. CPD often encounters degeneracy, 
the estimated model is sensitive to a slight change of the parameters; this makes the entire CNN unstable.

{\bf Tucker-2 decomposition (TKD2) \cite{tucker1963implications}.}
 An alternative method \cite{Kim2016} is to compress the input and output dimensions of the convolutional kernel, $\tY = \tB \times_1 \bA \times_3 \bC$ (see Figure \ref{fig:tensor2tkd}).
Compared to CPD, TKD is more stable, and the ranks of the decomposition can be determined using SVD or VBMF. However, in practice, the dimensions of input and output modes in TKD can be large and make the compression less efficient than CPD.

 {\bf Block-term decomposition (BTD) \cite{Lath-BCM12}} is a hybrid of CPD and TKD, constrains the core tensor $\tG$ to be sparse, block diagonal, and thereby BTD comprises a smaller number of parameters than TKD. More precisely, BTD models the data as sum of multiple Tucker terms, 
\be
\tY = \sum_{t = 1}^{T} \tB_t \times_1 \bA_t \times_3 \bC_t \label{eq_btd_shared}
\ee
where $\tB_t$ are order-3 core tensors of size $R_t \times I_2  \times S_t$, $\bA_t$ and $\bC_t$ are factor matrices of size $I_1 \times R_t$ and $I_3 \times S_t$, respectively, $t = 1, \ldots, T$.
So far, there are no available proper selection criteria of the block size (rank of BTD) and the number of terms.


\begin{figure}[t]
\centering
 \begin{minipage}[b]{.45\linewidth}
\centering
\subfigure[TKD]{\includegraphics[width=.95\linewidth]{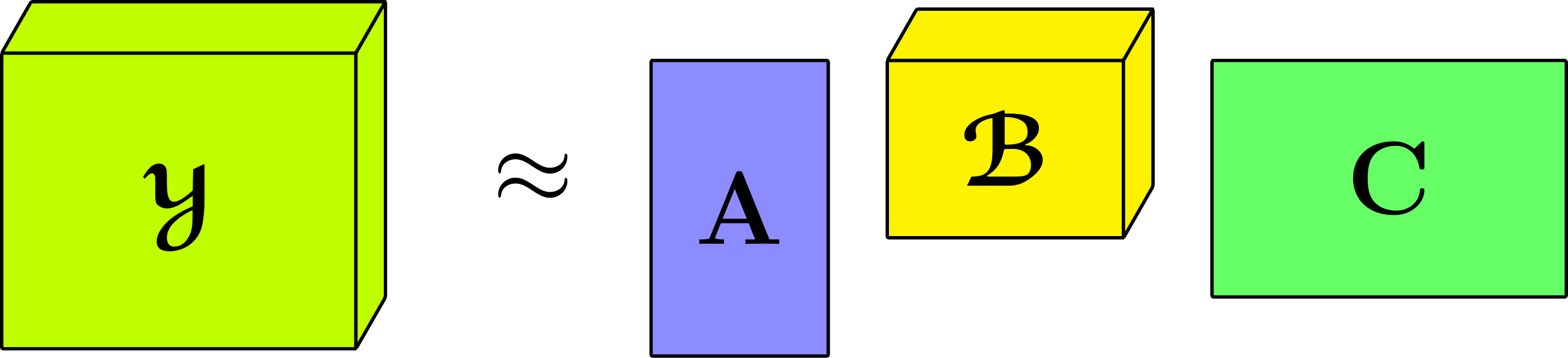}\label{fig:tensor2tkd}}
\\
\subfigure[TC]{
\hfill
\includegraphics[width=.6\linewidth, trim = 0.0cm 0.0cm 0.0cm 0cm,clip=true]{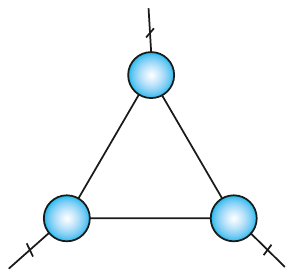}\label{fig:tensor2tc}}
%
\end{minipage}
\hfill
\subfigure[BTD]{\includegraphics[width=.45\linewidth,  trim = 0.0cm 0.0cm 0.0cm 0cm,clip=true]{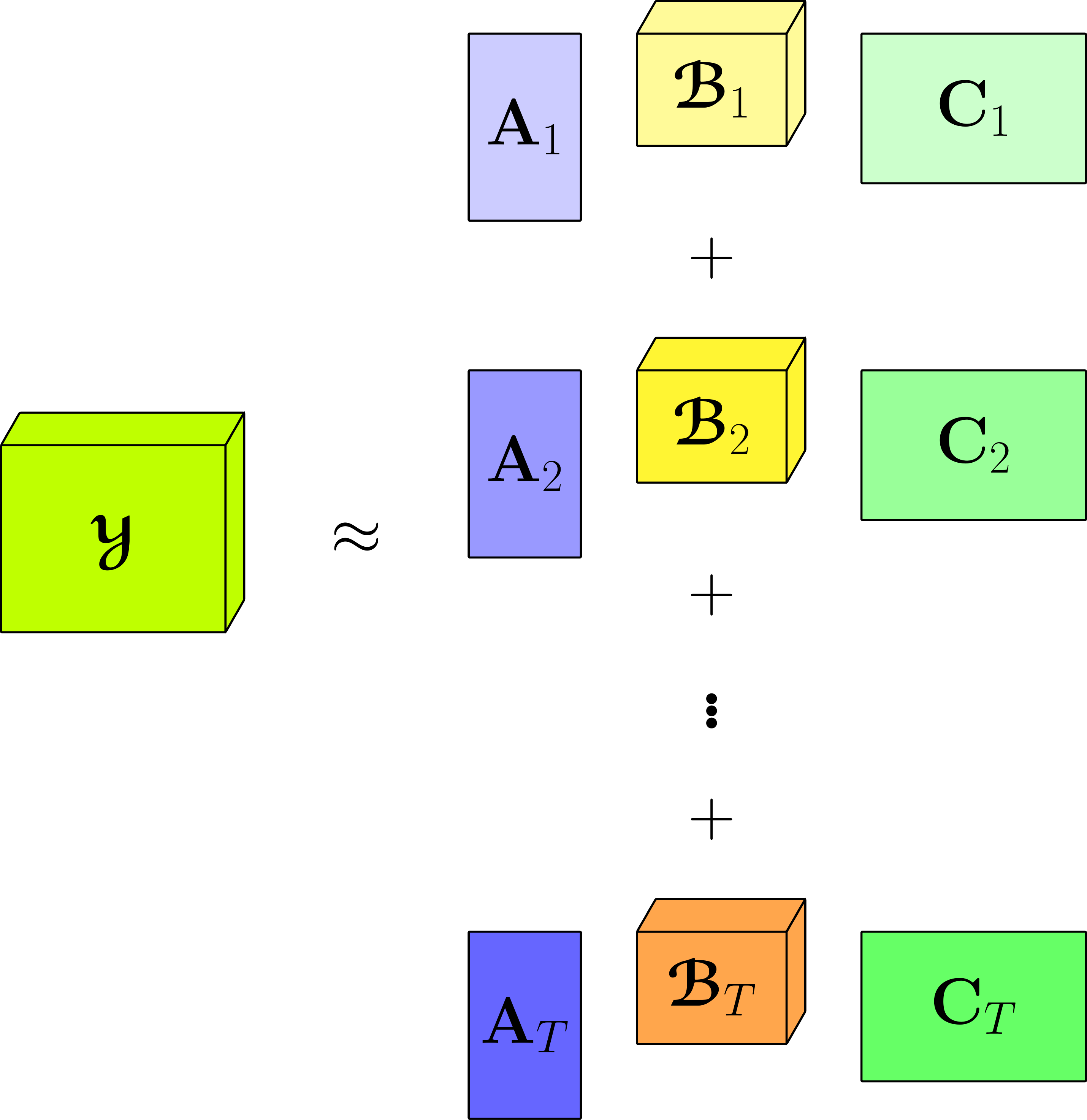}\label{fig:tensor2btd}}
\caption{Approximation of an order-3 tensor using  (a) TKD-2, or (c) BTD, comprising multiple TKDs (c). BTD with shared core tensors, $\tB_1 = \ldots = \tB_T$, forms the Tensor Chain(b).}\label{fig_models}
\end{figure}

{\bf BTD with shared core tensors}. In this paper, we propose a BTD with shared core tensors to reduce parameters in BTD, see illustration in Figure~\ref{fig:tensor2btd}. 
In particular case, all core tensors are identical, i.e., $\tB_1 = \ldots = \tB_T$.
We will show that this parameter-shared BTD is equivalent to a looped tensor network (Tensor chain - TC)\cite{Khoromskij-SC, Espig_2011}.
Using this connection, we propose a sensitivity correction procedure to overcome the problem with instability in this class of tensor models.

Such strongly constrained BTD or TC is not closed, i.e.,  the set of TC tensors of a fixed rank (bond dimension) is not Zariski closed. The openness of the set of fixed rank-r tensors implies that for some rank-$r$ tensors, one can approximate it with arbitrary precision by a tensor of a smaller rank $r_b$. For the canonical rank, the smallest rank $r_b$ is called the border rank of the tensor.
For TC, we refer to Section 3, ``Closedness of tensor network formats'' in \cite{Espig_2011}, 
and the work of \cite{LandsburgQY12}, which addresses the question of L. Grasedyck arising in quantum information theory, whether a tensor network containing a cycle (loop) is Zariski closed.  
One of the important conclusions is that {\emph{``if the tensor network graph $G$ is not a tree (it contains cycles), then the induced tensor network $U_G$ is in general not closed}''}, see \cite{LandsburgQY12},  also Remark 2.1.12. Ph.D. thesis, \cite{Handschuhth}.
 The looped TN leads to severe numerical instability problem in ﬁnding the best approximation, see Theorem 14.1.2.2\cite{Landsberg} and \cite{Handschuhth}.
 

{\bf Contributions.} The problem of instability in the TC model was identified early in \cite{Espig_2011, Handschuhth}.
 However, the problem is not well understood, and there is no method to deal with it. In this paper, we will study the sensitivity in TC and illustrate this type of degeneracy.
We propose novel methods to stabilize the estimated TC results and introduce a new TC layer or shared-parameters BTD convolutional layer. Finally, our primary aim is to propose a new convolutional layer with kernel in the form of TC or BTD with shared core tensors.
The proposed algorithms in this paper can be applied to tensor decomposition in other applications. 

    
    
  
We provide results of extensive experiments to confirm the efficiency of the proposed algorithms. Particularly, we empirically show that the neural network with weights in TC  format obtained using our algorithms is more stable during fine-tuning and recovers faster (close) to initial accuracy.

\section{Looped Tensor Network - Tensor Chain}

For simplicity, we first present TC for order-3 data. Extension of TC to higher-order tensor can be made straightforwardly.
\cite{Khoromskij-SC, Espig_2011} introduced the looped Tensor Chain as an extension of the Tensor Train (TT)\cite{oseledets2010tt}. Since there are no first and last core tensors, TC is expected to overcome the imbalance rank issue in TT decomposition. Tensor Ring is the same tensor network model  
inspired by Tensor Chain. See
illustration in Figure~\ref{fig:tensor2tc}.  We use the name Tensor Chain to honor the original authors who have invented it. 
The TC for an order-3 tensor $\tY$ of size $I_1 \times I_2 \times I_3$ reads 
\be
\tY = \sum_{r_1 = 1}^{R_1} \, \sum_{r_2 = 1}^{R_2}  \sum_{r_{3} = 1}^{R_{3}} \tA(r_1,:,r_2) \circ \tB(r_2,:,r_3) \circ \tC(r_{3},:,r_1), \label{eq_tc3}
\ee
where `$\circ$' represents the outer product, $\tA$, $\tB$ and $\tC$ are core tensors of size $R_1 \times I_1  \times R_2$, $R_2 \times I_2  \times R_3$, $R_3 \times I_3  \times R_1$, respectively, and its element can write as $y_{i j k} = \tr({\tA(:,i,:) \tB(:,j,:) \tC(:,k,:)})$.
We use shorthand notation for TC as
$\tY = \circlellbracket \tA, \tB, \tC\circlerrbracket$.
The links between TC and BTD are revealed in the following Lemma.


\begin{lemma}[Equivalence of BTD with shared core tensors and TC]\label{lem::btd_tc}
The constrained BTD with shared core tensors in (\ref{eq_btd_shared}) is a TC model
{\normalfont
$
\tY =  \circlellbracket \tA, \tB, \tC\circlerrbracket$}
where $\tA$ is of size $R_1 \times I_1 \times R_2$ with slices $\tA(t,:,:) = \bA_{t}$, $\tC$ of size $R_3 \times I_3 \times R_1$ and 
$\tC(:,:,t) = \bC_{t}$, $t = 1, \ldots, R_1$. (Proof is provided in Appendix~\ref{secA::proof_btd_tc}.)
\end{lemma}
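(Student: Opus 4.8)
The plan is to verify the claimed equivalence entrywise, reducing both models to explicit scalar formulas for $y_{ijk}$ and observing that they coincide once the core tensors are identified as in the statement. This is purely an algebraic identity, so no topology of the variety (border rank, closedness) enters here.

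First I would write out the entrywise form of the shared-core BTD. Setting $\tB_1 = \cdots = \tB_T = \tB$ in (\ref{eq_btd_shared}) forces the block sizes to be uniform, so each $\tB_t$ has size $R_2 \times I_2 \times R_3$, each $\bA_t$ has size $I_1 \times R_2$, and each $\bC_t$ has size $I_3 \times R_3$. The $t$-th Tucker term then contributes
\[
(\tB \times_1 \bA_t \times_3 \bC_t)_{ijk} = \sum_{r=1}^{R_2}\sum_{s=1}^{R_3} \bA_t(i,r)\,\tB(r,j,s)\,\bC_t(k,s),
\]
so that $y_{ijk} = \sum_{t=1}^{T}\sum_{r,s}\bA_t(i,r)\,\tB(r,j,s)\,\bC_t(k,s)$. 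Next I would expand the TC model through the trace form $y_{ijk}=\tr\!\bigl(\tA(:,i,:)\,\tB(:,j,:)\,\tC(:,k,:)\bigr)$. Using the identifications $\tA(t,:,:)=\bA_t$ and $\tC(:,:,t)=\bC_t$, i.e. $\tA(t,i,r)=\bA_t(i,r)$ and $\tC(s,k,t)=\bC_t(k,s)$, the $R_1\times R_1$ matrix $M:=\tA(:,i,:)\,\tB(:,j,:)\,\tC(:,k,:)$ has entries $M_{t,t'}=\sum_{r,s}\bA_t(i,r)\,\tB(r,j,s)\,\bC_{t'}(k,s)$. Taking the trace retains only $t'=t$, which reproduces the BTD expression above once $T=R_1$.

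The crux of the argument, and the only place where any idea beyond bookkeeping enters, is recognizing that the trace in the TC contraction is exactly what enforces the block-diagonal coupling of BTD: it discards every off-diagonal product $M_{t,t'}$ with $t\neq t'$, so a factor $\bA_t$ is always paired with the $\bC_t$ carrying the same loop index $t$ and never with $\bC_{t'}$. This is the algebraic shadow of the loop in the network diagram closing back on the shared bond $R_1=T$. I expect no genuine obstacle; the remaining work is purely mechanical, namely checking that uniform block sizes are precisely what let the shared core $\tB$ serve as a legitimate TC core, and keeping the transpose convention $\tC(s,k,t)=\bC_t(k,s)$ straight when embedding each $\bC_t$ as a lateral slice of $\tC$. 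As an alternative I could instead match the outer-product form (\ref{eq_tc3}) directly, which makes the identification of the slices $\tA(t,:,:)$ and $\tC(:,:,t)$ with the BTD factors even more transparent and avoids the trace entirely.
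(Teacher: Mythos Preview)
Your proposal is correct and matches the paper's approach: the paper's proof is a one-line expansion of the shared-core BTD into the triple sum of outer products $\sum_{t,r,s}\bA_t(:,r)\circ\tB(r,:,s)\circ\bC_t(:,s)$ and then identifies this with the TC outer-product form~(\ref{eq_tc3}) via the slice definitions---exactly the alternative you describe in your last sentence. Your primary route through the trace formula is an equivalent (slightly longer) bookkeeping exercise that lands on the same identity.
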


The TC for a higher order tensor, $\tY$, of size $I_1 \times I_2 \times \cdots \times I_N$ can be generalized as  
\be
\tY = \sum_{r_1 = 1}^{R_1} \cdots \sum_{r_{N} = 1}^{R_{N}} \tA_1(r_1,:,r_2) \circ \tA_2(r_2,:,r_3) \circ \cdots \circ \tA_N(r_{N},:,r_1), \notag
\ee
where $\tA_1$, \ldots, $\tA_N$ are core tensors of size $R_n \times I_n \times R_{n+1}$, $R_{N+1} = R_1$. 
We can regard TC of order-$N$ as a nested TC of order-3 whose the core tensor $\tB$ in (\ref{eq_tc3}) is a Tensor-Train of ($N-2$) core tensors $\tA_2$, \ldots, $\tA_{N-1}$, i.e.,  
$\tB = \tA_{2} \ttprod \tA_{3} \ttprod \cdots \ttprod \tA_{N-1}$, 
where the train-contraction ``$\ttprod$'' is defined in Appendix \ref{def_boxtime}.

\subsection{Algorithms for TC decomposition}\label{sec::algs}

\cite{Espig_2011} proposed nonlinear block Gauss-Seidel algorithms including alternating least squares (ALS), density-matrix renormalization group (DMRG), and adaptive cross approximation for contracted tensor networks, in which TC is a special case. 
Thanks to links between TC and BTD, TC and structured TKD, we can also use algorithms for BTD, e.g., the nonlinear least squares (NLS) \cite{SDF}, Krylov-Levenberg-Marquardt (KLM) \cite{PetrKLM} algorithms or the OPT algorithm based on   Limited-memory BFGS method\cite{TRWOPT}. 

Like other tensor decompositions, the ALS\cite{Espig_2011,Espig2012,Handschuhth} is still considered the best algorithm for TC, especially when DMRG cannot be applied. Various variants of these two update schemes were proposed e.g., for the tensor completion problem \cite{zhao2016tensor,wang2017efficient,asif2020low,mickelin2018algorithms,
he2019remote, huang2020robust,
ahad2020hierarchical, 
yu2020low,  ding2020tensor}, hyperspectral super-resolution \cite{xu2020hyperspectral}. For determination of bond dimensions,we refer to  \cite{cheng2020novel,Farnaz2021}. 
Nevertheless, all existing TC algorithms are not robust to perturbation of parameters. 
For the first time, we propose
the algorithm that provides optimal TC decomposition with low(est) sensitivity and considerably alleviates the problem of stacking in local minima of algorithms for TC.




\subsection{Major problem: Instability}

 The TC was inspired by two successful models, TT and BTD, to overcome the high intermediate ranks in TT and enforce a more compact model for BTD. 
 As mentioned earlier, \cite{Landsberg} and \cite{Espig_2011} pointed out the problem with TC. 
  We show that any TC model can be unstable with very high intensity and sensitivity. First, we define TC intensity. 
 
\begin{remark} The TC model, 
\normalfont{$\tY = \circlellbracket \tA, \tB, \tC\circlerrbracket$}, is not unique up to scaling 
{\normalfont
\be
\tY =  \, \circlellbracket \alpha_1 \tA, \alpha_2\tB, \alpha_3\tC\circlerrbracket
\ee
}
with arbitrary factors $\alpha_1$, $\alpha_2$, and $\alpha_3$ such that $\alpha_1 \alpha_2 \alpha_3 = 1$.
 \end{remark}

\begin{remark} The TC model, 
\normalfont{$\tY = \circlellbracket \tA, \tB, \tC\circlerrbracket$}, is also non-unique up to rotation 
{\normalfont
\be
\tY =  \, \circlellbracket \tA \ttprod \bQ,  \bQ^{-1} \ttprod \tB, \tC\circlerrbracket
\ee}
where $\bQ$ is an arbitrary invertible matrix of size $R_2 \times R_2$.
 \end{remark}
 
\begin{definition}[TC intensity] 
For a given TC model, \normalfont{$\tY = \circlellbracket \tA, \tB, \tC\circlerrbracket$}, we can always normalize core tensors to unit norm, $\tilde{\tA} = {\tA}/\|\tA\|_F$, $\tilde{\tB} = {\tB}/\|\tB\|_F$,$\tilde{\tC} = {\tC}/\|\tC\|_F$
then $\tY = \alpha \, \circlellbracket \tilde{\tA}, \tilde{\tB}, \tilde{\tC}\circlerrbracket$
where $\alpha = \|\tA\|_F \|\tB\|_F \|\tC\|_F$ is called the TC intensity.
\end{definition}

\begin{lemma}[TC Degeneracy] \label{lem_tc_degenerarcy}
For a given TC model, \normalfont{$\tY = \circlellbracket \tA, \tB, \tC\circlerrbracket$}, there is always a sequence of equivalent TC models with diverging TC intensities. (Proof is provided in Appendix~\ref{sec::proofTCdegeneracy}.) 
\end{lemma}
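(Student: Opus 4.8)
The plan is to exploit the two gauge (non-uniqueness) freedoms recorded in the Remarks preceding the statement, and to show that the rotation freedom alone can be driven to make the product of core norms blow up. First I would observe that the scaling freedom {\normalfont$\tY = \circlellbracket \alpha_1\tA, \alpha_2\tB, \alpha_3\tC\circlerrbracket$} with $\alpha_1\alpha_2\alpha_3 = 1$ cannot by itself change the intensity, since $\|\alpha_1\tA\|_F\|\alpha_2\tB\|_F\|\alpha_3\tC\|_F = |\alpha_1\alpha_2\alpha_3|\,\|\tA\|_F\|\tB\|_F\|\tC\|_F$ is invariant. Hence the divergence must come from the rotation freedom {\normalfont$\tY = \circlellbracket \tA\ttprod\bQ, \bQ^{-1}\ttprod\tB, \tC\circlerrbracket$}, whose intensity is $\|\tA\ttprod\bQ\|_F\,\|\bQ^{-1}\ttprod\tB\|_F\,\|\tC\|_F$, so the task reduces to choosing a sequence $\bQ_n$ of invertible matrices on the $R_2$-bond that makes the first two factors diverge.

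Second, I would build $\bQ_n$ as an increasingly ill-conditioned rank-one shear. Fix unit vectors $\vu,\vv\in\Real^{R_2}$ with $\vv^\top\vu = 0$, and set $\bQ_n = \bI + n\,\vu\vv^\top$, so that $\bQ_n^{-1} = \bI - n\,\vu\vv^\top$ (using $\vv^\top\vu=0$) and $\det\bQ_n = 1$. A direct computation gives $\tA\ttprod\bQ_n = \tA + n\,(\tA\ttprod\vu)\circ\vv$ and $\bQ_n^{-1}\ttprod\tB = \tB - n\,\vu\circ(\vv^\top\ttprod\tB)$, where $\tA\ttprod\vu$ and $\vv^\top\ttprod\tB$ denote the contractions of $\vu$ and $\vv$ into the shared $R_2$-mode. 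Provided $\tA\ttprod\vu\neq 0$ and $\vv^\top\ttprod\tB\neq 0$, the triangle inequality yields $\|\tA\ttprod\bQ_n\|_F \geq n\,\|\tA\ttprod\vu\|_F - \|\tA\|_F$ and likewise for the second factor, so both grow linearly in $n$ and the intensity grows like $n^2\to\infty$, while $\tY$ is left unchanged. This exhibits the required sequence of equivalent models.

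Third, the point that needs care --- and the main obstacle --- is guaranteeing the existence of admissible $\vu,\vv$. Since $\vu\mapsto\tA\ttprod\vu$ vanishes identically only if $\tA=0$, and similarly for $\tB$, the sets of ``bad'' directions are proper subspaces $K_A,K_B\subsetneq\Real^{R_2}$; whenever the bond is nontrivial ($R_2\geq 2$) one can pick $\vu\notin K_A$ and then $\vv\in\vu^\perp\setminus K_B$ by a dimension count, so the construction goes through. I would also remark that at least one bond dimension must exceed one for the rotation to be non-scalar: if every bond has dimension one the TC collapses to a single rank-one term and the intensity is genuinely fixed, so the statement is to be read in the generic (nontrivial-rank) regime, which is exactly where the instability is of interest. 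Finally, I would note that the construction is not special to the $R_2$-bond --- the analogous shear on the $R_1$- or $R_3$-bond works identically --- and that even dropping $\vv^\top\vu=0$ still forces divergence (at a linear rather than quadratic rate), showing that the phenomenon is robust rather than an artifact of the particular choice.
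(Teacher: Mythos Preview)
Your argument is correct and takes a genuinely different, somewhat more elementary route than the paper's. The paper first performs an SVD-based preprocessing step: it rewrites $\tA\ttprod\tB = \tU\ttprod\bS\ttprod\tV$ with $\bU,\bV$ having orthonormal columns/rows, so that the rotated norms reduce exactly to $\|\bQ\|_F$ and $\|\bQ^{-1}\bS\|_F$. It then takes (for $R_2=2$) the near-singular symmetric matrix $\bQ=\bigl(\begin{smallmatrix}1&x\\x&1\end{smallmatrix}\bigr)$ and computes the intensity in closed form as $\frac{(1+x^2)\sqrt{2(s_1^2+s_2^2)}}{|1-x^2|}\|\tC\|_F$, which blows up as $x\to 1$; the general $R_2$ case is handled by the analogous matrix with two off-diagonal entries equal to $x$. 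In contrast, you skip the SVD entirely, work with the original cores, use a unipotent shear $\bQ_n=\bI+n\,\vu\vv^\top$, and bound the norms below via the reverse triangle inequality rather than computing them exactly. What the paper's approach buys is an explicit divergence rate and a very clean two-line formula; what your approach buys is that it works uniformly for all $R_2\ge 2$ without a separate $R_2=2$ reduction, and it makes the nondegeneracy hypotheses ($\tA,\tB\neq 0$, at least one bond $\ge 2$) explicit rather than tacit. One small point: your ``dimension count'' in the third paragraph is slightly terse in the edge case $\dim K_B=R_2-1$, where $\vu^\perp$ could coincide with $K_B$; but since that forces $\vu$ to lie on a single line, and $K_A$ is a proper subspace, one can always perturb $\vu$ off that line while staying outside $K_A$, so the argument goes through.
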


\begin{remark}[TC instability]
The first observation is that TC models estimated by any iterative algorithms can encounter large TC-intensity. In many cases, the TC-intensity increases quickly with the iterations. Without proper processing, the algorithm gets stuck in a false local minimum. The decomposition is more challenging, especially when the dimension of a core tensor is smaller than its ranks, e.g., $R_1 R_2 > I_1$, or when components of the core tensors are highly collinear, or decomposition with missing entries.
Such a type of degeneracy in TC happens quite often and is similar to that in CPD. 
\end{remark}






\begin{figure}[t]
\subfigure[]{\includegraphics[width=.50\linewidth, trim = 0.0cm 0cm 0.0cm 0cm,clip=true]{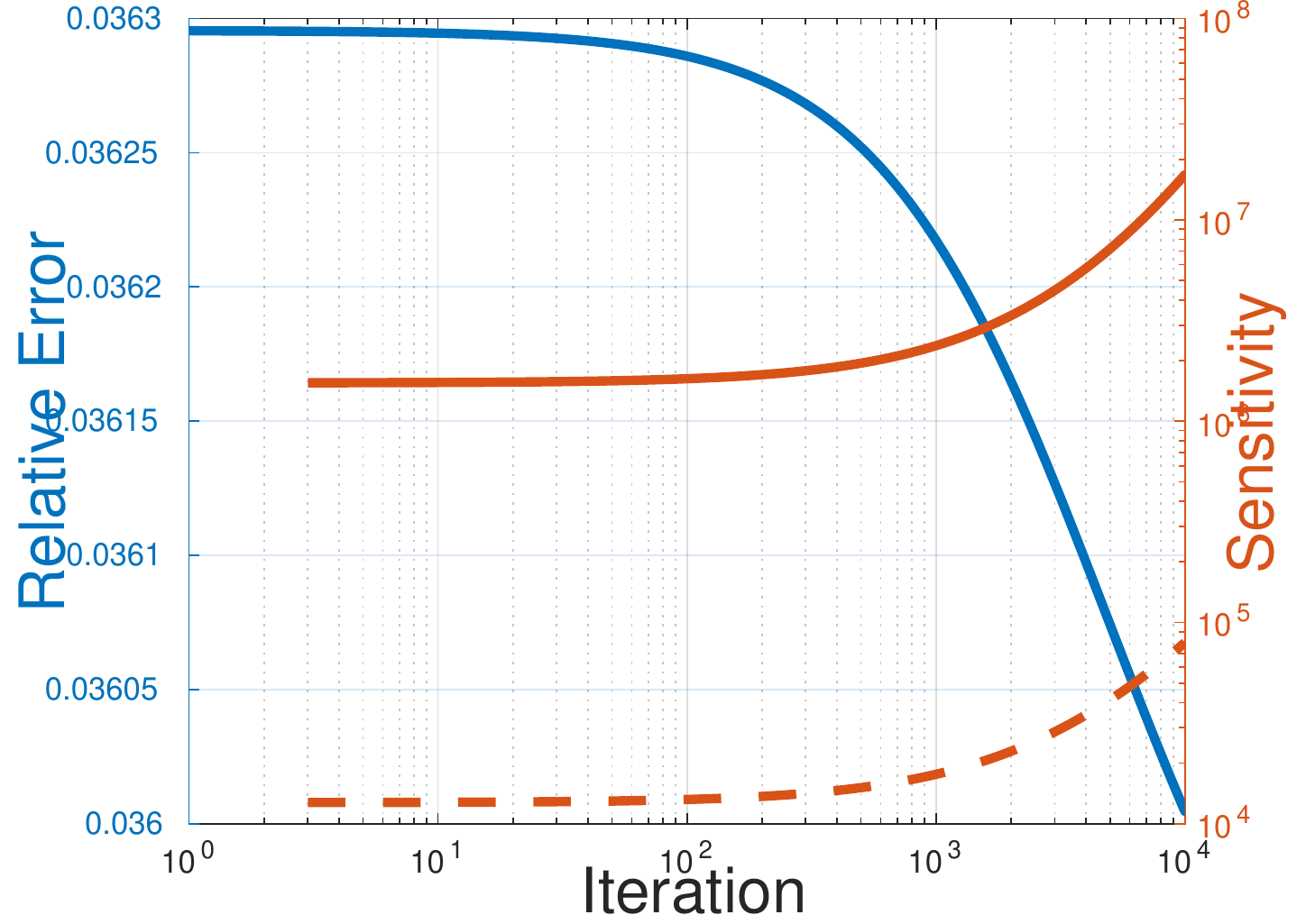}\label{fig_ex777_relerr}}
\hfill
\subfigure[]
{\includegraphics[width=.45\linewidth, trim = 0.0cm 0cm 0cm 0cm,clip=true]{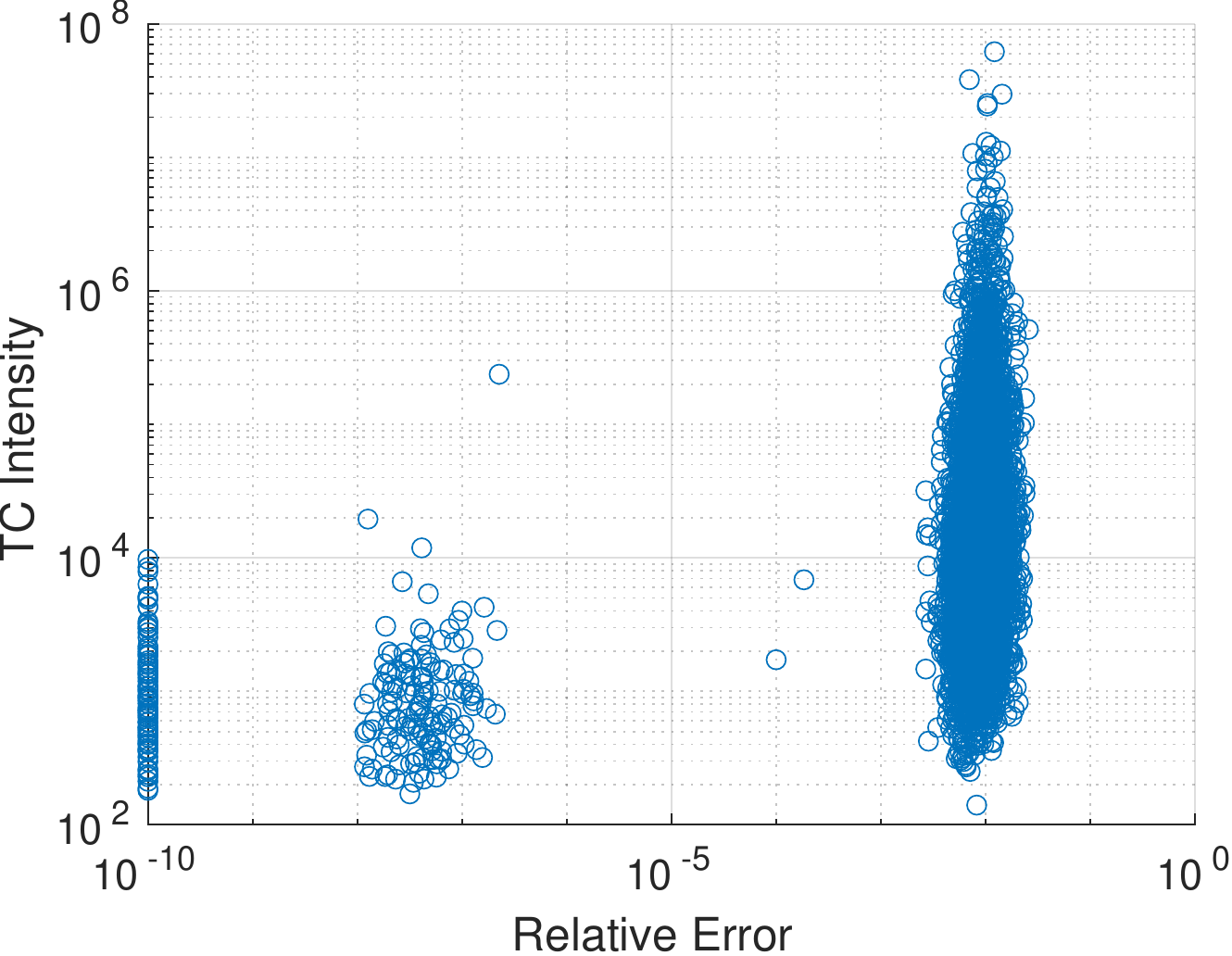}\label{fig_tc_uni_I7R3_als_normcore}}
\caption{TC decomposition of tensors of size $7\times 7 \times 7$ in Example~\ref{ex_7x7x7}. 
(a) Relative error of the ALS in one TC decomposition. The red solid and dashed curves show the TC sensitivity and intensity, increasing drastically during the estimation. (b) The Scatter plot shows coherence between high intensity and high approximation error (bad decomposition).}
\label{fig_successrate_7x7x7}
\end{figure}

\begin{example}[{TC with bond dimensions exceeding tensor dimensions}]\label{ex_7x7x7} We provide an illustrative example for TC degeneracy in the decomposition of noise-free synthetic tensors of size $7\times 7 \times  7$ with bond dimension $(3-3-3)$, composed from 3 core tensors randomly generated. 
The decomposition using the ALS algorithm in 5000 iterations succeeds in less than 3\% in 10000 independent TC decompositions, see Figure~\ref{fig_ex777_relerr}.

{\bf Why ALS and many other algorithms for TC fail?}
 The TC intensity of the estimated tensor quickly increases after several thousand iterations, as seen in Figure~\ref{fig_ex777_relerr} for illustration of relative errors in one run. The TC intensity exceeds $10^7$ after 10000 iterations, making the algorithm converge to local minima with a relative approximation error of 0.036. 
 Scatter plot of intensity and relative approximation errors over 10000 TC decompositions in Figure~\ref{fig_tc_uni_I7R3_als_normcore} indicates coherence between bad decomposition results with high intensity.  
 The above example shows a difficult case when core tensors, $\tU_n$, have fat factor matrices, $\bU_n$, $R_{n-1} R_{n} = 9 > 7$.
 \end{example}

\begin{example}[{TC with highly collinear loading components}]\label{ex_27x27x27_rank25}
 We decompose order-3 tensors of size $27 \times 27 \times 27$ which admit the TC model, $\tY =  \, \circlellbracket \tU_1, \tU_2, \tU_3\circlerrbracket$, with bond dimensions $(5-5-5)$. The factor matrices ${\bU}_n$ of size $27 \times 25$ have highly collinear loading components, $0.97 \leq {\bU}^{T}_n(:,r) {\bU}_n(:,s)\leq 0.99$, $n=1, 2, 3$.
 The relative approximation errors of the ALS shown in Figure~\ref{fig_ex272727}(a) indicate that ALS failed in this example. The intensity (dashed red curve) of the estimated TC tensors increased quickly and exceeded $2.7 \times 10^6$, whereas its sensitivity (dotted blue curve) passed the level of $10^6$ after 6000 iterations as shown in Figure~\ref{fig_ex272727}(b). 
 The OPT(WOPT) \cite{TRWOPT} and NLS algorithms \cite{Sorber-tensorlab} also failed. 
\end{example}

\begin{figure}[t]
\centering
 \subfigure[]
 {\includegraphics[width=.47\linewidth, trim = 3.5cm 8.5cm 4cm 8.9cm,clip=true]{./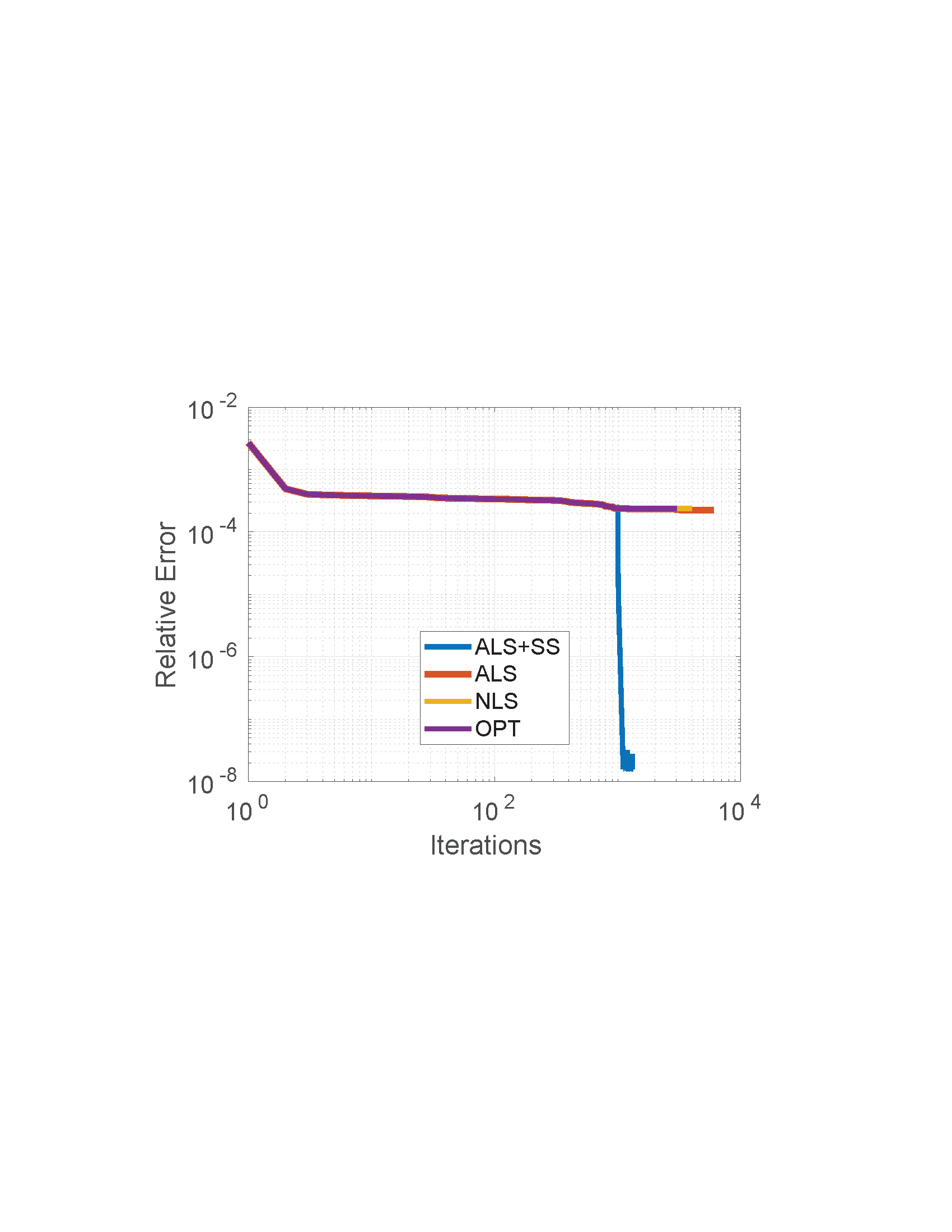}
 \label{fig_tc_collinear_N3_R25_err}} 
 \subfigure[]
 {\includegraphics[width=.49\linewidth, trim = 0.0cm 0cm 0cm 0cm,clip=true]{./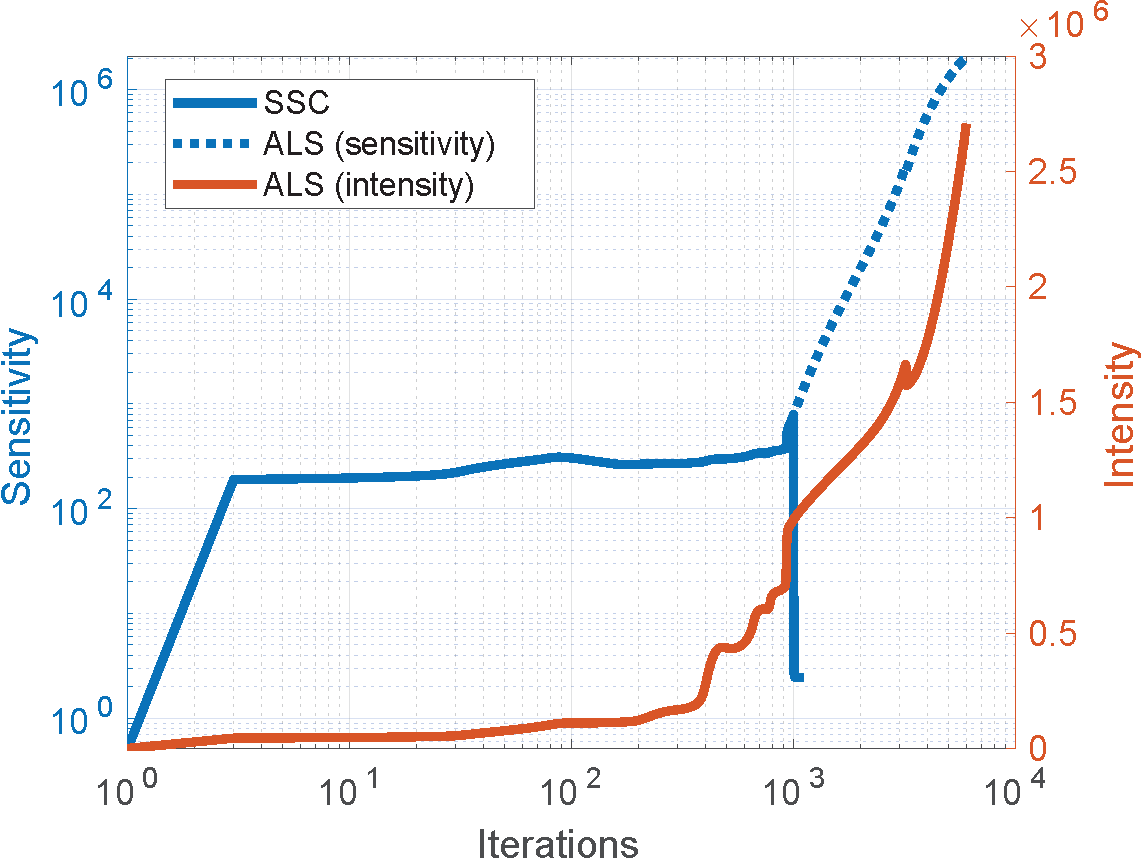}
 \label{fig_tc_collinear_N3_R25_ss}}
 \caption{Illustration for performances obtained by algorithms used in Example~\ref{ex_27x27x27_rank25}. ALS+SS is the ALS with sensitivity correction. Without the SS correction, the estimated tensors have very high sensitivity and TC intensity.}\label{fig_ex272727}
\end{figure}

\begin{example}[TC for incomplete data]
\label{ex_incompletTC}
We demonstrate a simple TC decomposition for tensors of size $9 \times 9 \times 9$ with bond dimensions $(3-3-3)$. 
The considered tensors can be factorized quickly. However, when 50\%  of the tensor elements are randomly removed, the tensors are challenging to any TC algorithms. The success rate for OPT\cite{TRWOPT} and ALS is less than 11\%. Figure~\ref{fig_incompleteTC} illustrates the convergence of OPT in one TC decomposition and scatter plot of the sensitivity and relative approximation errors. The algorithms get stuck in local minimal and cannot attain exact decomposition.
\end{example}

\begin{figure}[t]
\centering
 {\includegraphics[width=.48\linewidth, trim = 0.0cm 0cm 0cm 0cm,clip=true]{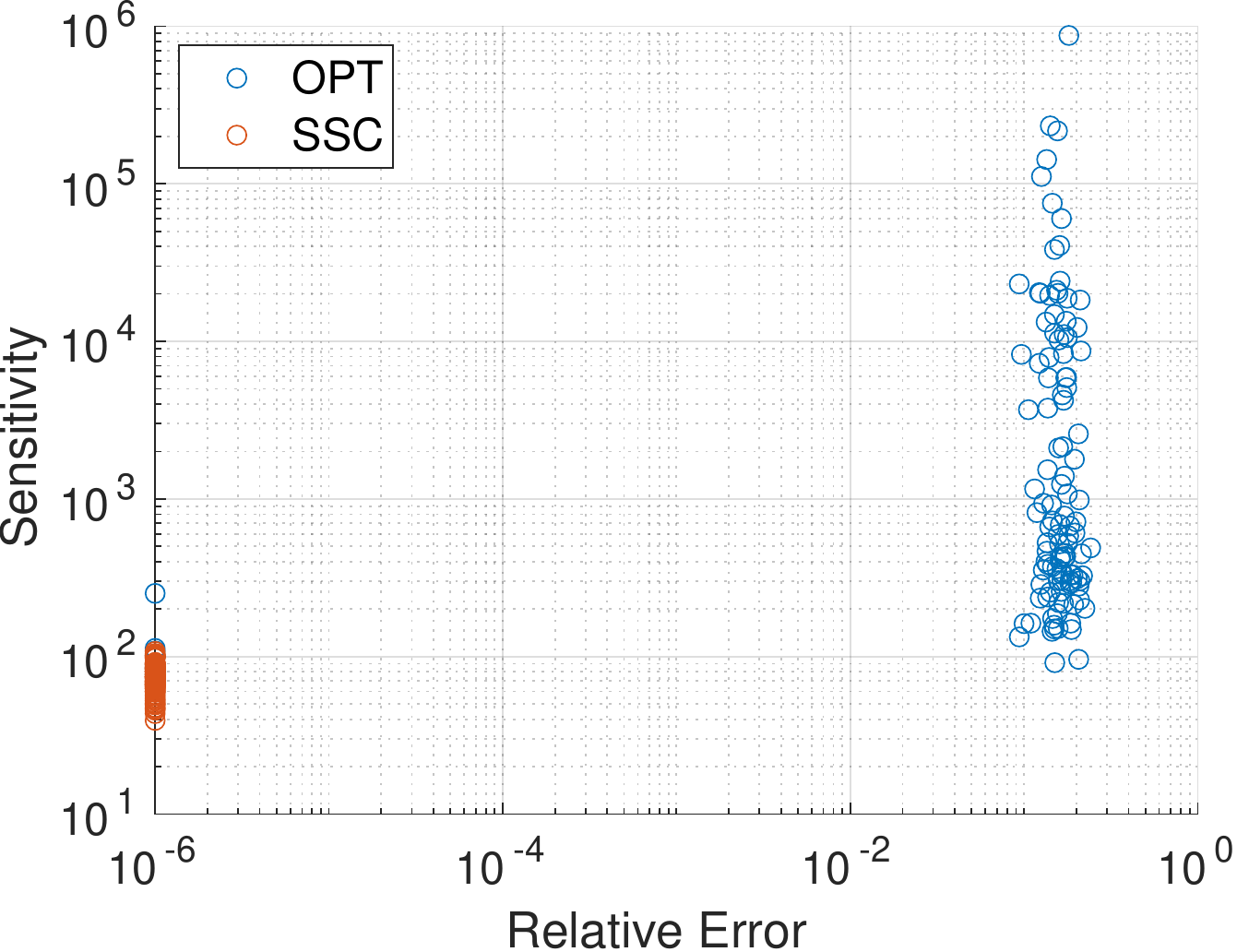}} 
 {\includegraphics[width=.48\linewidth, trim = 0cm 0cm 0cm 0cm,clip=true]{./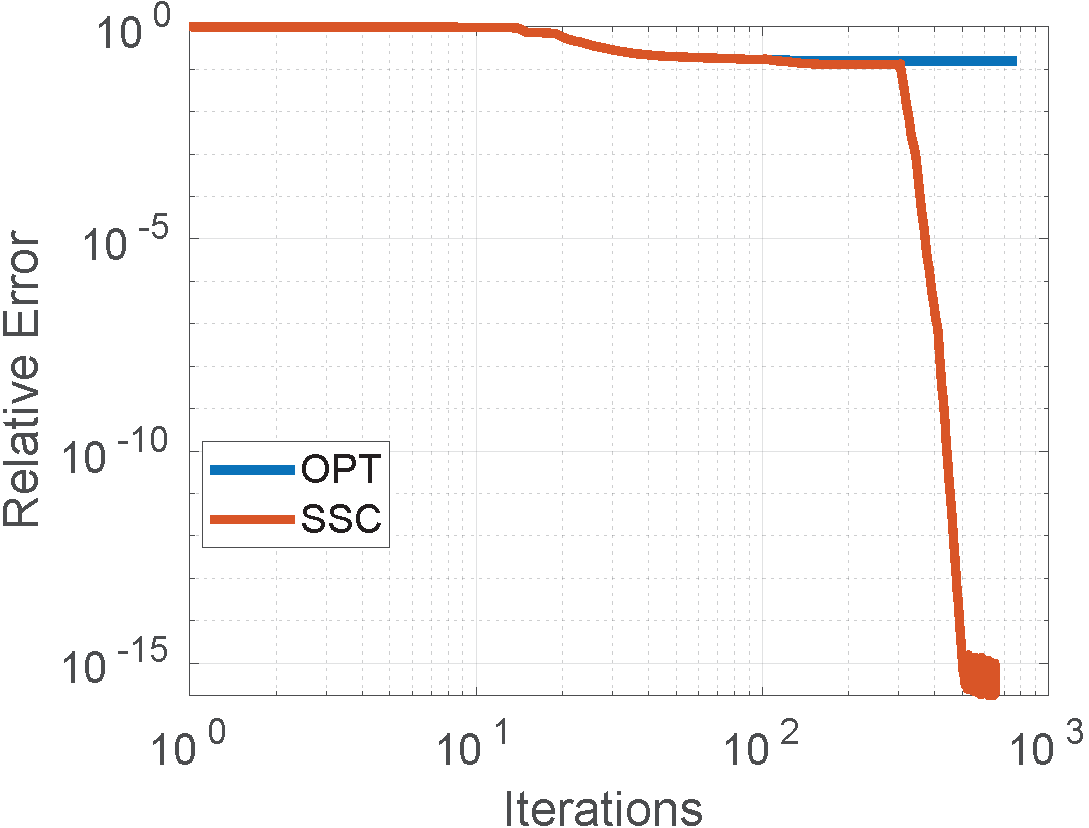}}
 \caption{Illustration for performances obtained by OPT in Example~\ref{ex_incompletTC} for incomplete tensor decomposition. With sensitivity correction, the decomposition can obtain the exact model. }\label{fig_incompleteTC}
\end{figure}

In Appendix~\ref{sec::synexample}, we provide more examples for which the TC algorithms fail to decompose higher-order tensors. Besides high TC-intensity, the sensitivity of the estimated TC model significantly increased, and it prevented the algorithm from converging to the exact model. 

\section{Sensitivity for TC}
 
 We earlier show that any TC model can be unstable since its TC-intensity can grow to infinity. This section introduces the sensitivity of the TC model and its properties. 
 
\begin{definition}[\bf Sensitivity (SS)] Given a TC model {\normalfont{$\tY = \circlellbracket \tA_1, \dots, \tA_N \circlerrbracket$}}.
Denote by $\delta_{A_1}, \dots, \delta_{A_N}$ random Gaussian distributed perturbations with element distributed independently with zero mean and variance $\sigma^2$.
Sensitivity of the TC model {\normalfont{$\circlellbracket \tA_1, \dots,\tA_N \circlerrbracket$}} is defined as 
\be
ss(\tA_1, \dots,\tA_N) =  \lim_{\sigma^2 \rightarrow 0}\frac{1}{\sigma^2} E\left\{\|\tY - \tY_{\delta}\|_F^2 \right\} \label{eq_sstc} ,
\ee
where $\tY_{\delta} = \circlellbracket \tA_1 +\delta_{A_1}, \ldots,\tA_N + \delta_{A_N} \circlerrbracket$. 
\end{definition}
 %
 {{The above sensitivity measure is standard and widely used in the analysis of perturbation of a model or function to the weights e.g., in \cite{ChoiC92}, \cite{Yeung2010}
(Section 2.3.2), \cite{XiangZWLY21}. In principle, it measures the mean of the total error variance. 

The sensitivity of the function can also be computed as the average
Frobenius norm of the Jacobian (or Jacobian norm) as in \cite{Novak2018}, \cite{Pizarroso2020}. The latter definition is approximate of the
Frobenius norm of the output difference. Both would lead to a similar
compact formula for computing the sensitivity.

We will show that the final expression of the sensitivity is relatively simple and
can be written in quadratic forms for each core tensor. It 
allows us
to formulate sub-optimization problems for updating core tensors as
constrained quadratic programming that can be solved in closed-form.
}}

\begin{lemma}\label{lem_ss}
{\bf Sensitivity (SS)} of a TC model, {\normalfont{$\circlellbracket \tA_1, \dots,\tA_N \circlerrbracket$}}, is computed as 
{\normalfont
\be
 ss(\tA_1,\dots,\tA_N) = \sum_{n=1}^N I_n \, \| \tA_{-n}  \|_F^2.
 \label{eq_sstc2}
\ee}  
where $\tA_{-n} = \tA_{n+1} \ttprod \cdots  \ttprod \tA_N \ttprod \tA_1 \ttprod \cdots \ttprod \tA_{n-1}$.
Proof is provided in Appendix~\ref{sec::ssc_highorder}. 
\end{lemma}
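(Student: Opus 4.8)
The plan is to exploit that the TC contraction $\circlellbracket \cdot \circlerrbracket$ is multilinear in its core tensors, so that $\tY_\delta$ is a polynomial in the perturbations $\delta_{A_1}, \ldots, \delta_{A_N}$. First I would organize this polynomial by the number of perturbation factors it contains, writing
\[
\tY_\delta - \tY = \sum_{n=1}^N D_n + \calR, \qquad D_n = \circlellbracket \tA_1, \ldots, \tA_{n-1}, \delta_{A_n}, \tA_{n+1}, \ldots, \tA_N \circlerrbracket,
\]
where $D_n$ is the part linear in the single perturbation $\delta_{A_n}$ (all other cores left intact) and $\calR$ gathers every term containing a product of two or more perturbations. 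Since each entry of $\delta_{A_n}$ has standard deviation $\sigma$, we have $D_n = O(\sigma)$ and $\calR = O(\sigma^2)$ entrywise.

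Next I would take the expectation of $\|\tY_\delta - \tY\|_F^2$ and argue that only the diagonal linear terms survive at order $\sigma^2$. Expanding the square gives $\sum_{n,m}\langle D_n, D_m\rangle + 2\langle \sum_n D_n, \calR\rangle + \|\calR\|_F^2$. The off-diagonal terms $\langle D_n, D_m\rangle$ with $n \neq m$ are bilinear in the independent zero-mean pair $(\delta_{A_n}, \delta_{A_m})$ and vanish in expectation; the term $\langle \sum_n D_n, \calR\rangle$ is a sum of monomials in at least three perturbation entries, so by the vanishing of odd Gaussian moments (and since any surviving even monomial here already carries $\sigma^4$) its expectation is $O(\sigma^4)$; and $\|\calR\|_F^2 = O(\sigma^4)$. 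Hence $E\|\tY_\delta - \tY\|_F^2 = \sum_{n=1}^N E\|D_n\|_F^2 + O(\sigma^4)$.

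The remaining step is to evaluate $E\|D_n\|_F^2$. Using the cyclic invariance of the trace, the entries of $D_n$ can be written as $(D_n)_{i_1 \cdots i_N} = \tr(\delta_{A_n}(:,i_n,:)\,\bM_{\mathbf j})$, where $\bM_{\mathbf j}$ is the $R_{n+1}\times R_n$ matrix slice of the contracted tensor $\tA_{-n} = \tA_{n+1} \ttprod \cdots \ttprod \tA_N \ttprod \tA_1 \ttprod \cdots \ttprod \tA_{n-1}$ obtained by fixing the remaining physical indices $\mathbf j = (i_{n+1}, \ldots, i_{n-1})$. Writing the trace as $\sum_{\alpha,\beta}\delta_{A_n}(\alpha,i_n,\beta)\,\bM_{\mathbf j}(\beta,\alpha)$ and using $E\{\delta_{A_n}(\alpha,i_n,\beta)\,\delta_{A_n}(\alpha',i_n,\beta')\} = \sigma^2\,[\alpha=\alpha'][\beta=\beta']$, each squared entry has expectation $\sigma^2\|\bM_{\mathbf j}\|_F^2$. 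Summing over $i_n$ (which contributes the factor $I_n$, since $\bM_{\mathbf j}$ is independent of $i_n$) and over $\mathbf j$ (which reassembles $\sum_{\mathbf j}\|\bM_{\mathbf j}\|_F^2 = \|\tA_{-n}\|_F^2$) yields $E\|D_n\|_F^2 = \sigma^2 I_n \|\tA_{-n}\|_F^2$. Dividing by $\sigma^2$ and letting $\sigma^2 \to 0$ gives $ss = \sum_{n=1}^N I_n \|\tA_{-n}\|_F^2$, as claimed.

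I expect the main obstacle to be the bookkeeping in the second step: one must verify that every cross term and every higher-order contribution genuinely enters at order $\sigma^4$ or vanishes by the moment structure of the Gaussian perturbations, so that the clean quadratic expression is \emph{exact} in the $\sigma^2 \to 0$ limit rather than merely a heuristic first-order approximation. The slice identity $(D_n)_{i_1\cdots i_N} = \tr(\delta_{A_n}(:,i_n,:)\,\bM_{\mathbf j})$, which packages the whole remaining loop into $\tA_{-n}$ via the train-contraction, is the lever that makes the final moment computation routine.
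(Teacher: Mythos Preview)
Your proposal is correct and follows essentially the same route as the paper: multilinear expansion of $\tY_\delta-\tY$ into single-perturbation terms plus higher-order remainder, vanishing of the cross expectations by independence, cyclic shift so that the lone perturbation sits in the first slot against $\tA_{-n}$, and then a Gaussian second-moment computation yielding $\sigma^2 I_n\|\tA_{-n}\|_F^2$. If anything, your treatment of the $O(\sigma^4)$ remainder is more careful than the paper's, which handles those terms in a single sentence; the final moment step in the paper is phrased via the mode-$2$ unfolding $E\{\|[\delta_{\tA_n}]_{(2)}[\tA_{-n}]_{(1,N)}\|_F^2\}$ rather than your entrywise trace identity, but the two computations are equivalent.
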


In principle, TCs with high sensitivity are less stable than those with a smaller SS. A simple normalization that scales the core tensors can reduce the SS. 

\begin{lemma}[Balanced norm for minimal sensitivity]
\label{lem_balancednorm_ss}
A TC model {\normalfont{$\circlellbracket  \tA_1, \dots, \tA_N \circlerrbracket$}} can be scaled to give a new equivalent model {\normalfont{$\circlellbracket  \tA_1, \dots, \tA_N \circlerrbracket = \circlellbracket\alpha_1 \tA_1, \dots, \alpha_N \tA_N \circlerrbracket$}}
with the minimal sensitivity, where $\alpha_n = \frac{\beta_n}{\beta}$, 
{\normalfont{$\beta_n = \sqrt{I_n} \, \|  \tA_{-n} \|_F$}} and 
$\beta = \sqrt[N]{\prod_{n=1}^N \beta_n}$.
\end{lemma}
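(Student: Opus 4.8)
The plan is to reduce the statement to a scalar optimization over the scaling factors and solve it with the AM--GM inequality. First I would invoke the scaling non-uniqueness (the scaling Remark, extended to order $N$): replacing each core $\tA_n$ by $\alpha_n \tA_n$ multiplies every entry $y_{i_1\cdots i_N} = \tr(\tA_1(:,i_1,:)\cdots \tA_N(:,i_N,:))$ by $\prod_{n=1}^N \alpha_n$, so the rescaled network represents the same tensor exactly when $\prod_{n=1}^N \alpha_n = 1$. Hence the admissible scalings form the constraint set $\{(\alpha_1,\dots,\alpha_N): \alpha_n>0,\ \prod_{n}\alpha_n = 1\}$ (signs are irrelevant since only $\alpha_n^2$ will enter the objective), and the task becomes minimizing the sensitivity over this set.

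Next I would compute how the sensitivity transforms under such a scaling. Since $\tA_{-n}$ is the train-contraction of all cores except $\tA_n$ and the contraction $\ttprod$ is multilinear in its arguments, rescaling $\tA_m \mapsto \alpha_m \tA_m$ scales $\tA_{-n}$ by the factor $\prod_{m\neq n}\alpha_m$, which equals $1/\alpha_n$ under the constraint $\prod_m \alpha_m = 1$. Substituting into the sensitivity formula of Lemma~\ref{lem_ss} gives $ss = \sum_{n=1}^N I_n\, \alpha_n^{-2}\,\|\tA_{-n}\|_F^2 = \sum_{n=1}^N \beta_n^2/\alpha_n^2$, with $\beta_n = \sqrt{I_n}\,\|\tA_{-n}\|_F$ exactly as defined in the statement.

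It then remains to minimize $f(\alpha) = \sum_{n} \beta_n^2/\alpha_n^2$ subject to $\prod_{n}\alpha_n = 1$. Writing $x_n = \alpha_n^2$ (so $\prod_n x_n = 1$) and applying AM--GM to the $N$ nonnegative terms $\beta_n^2/x_n$ yields $f \geq N\,\bigl(\prod_{n}\beta_n^2/x_n\bigr)^{1/N} = N\,\bigl(\prod_{n}\beta_n^2\bigr)^{1/N}$, with equality iff all terms are equal, i.e. $\beta_n^2/\alpha_n^2 = c$ for a common constant $c$. Imposing $\prod_n\alpha_n = 1$ pins down $c = \bigl(\prod_n \beta_n^2\bigr)^{1/N}$ and hence $\alpha_n = \beta_n/\beta$ with $\beta = \sqrt[N]{\prod_n \beta_n}$; one checks $\prod_n \alpha_n = (\prod_n \beta_n)/\beta^N = 1$, so this optimizer is feasible, and the AM--GM equality case certifies it as the global minimum (the same point appears as the unique stationary point of the associated Lagrangian).

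The only delicate point is the multilinearity claim used to obtain the scaling of $\tA_{-n}$; I would justify it directly from the definition of $\ttprod$ in Appendix~\ref{def_boxtime}, observing that each core enters the contraction linearly so the scalar factors pull out, and that the single omitted index is precisely $n$, giving the factor $\prod_{m\neq n}\alpha_m = 1/\alpha_n$. Everything after that is the routine AM--GM argument above.
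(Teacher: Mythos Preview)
Your proposal is correct and follows essentially the same approach as the paper: both reduce the problem to minimizing $\sum_{n=1}^N \beta_n^2/\alpha_n^2$ subject to $\prod_n \alpha_n = 1$, apply AM--GM, and read off $\alpha_n = \beta_n/\beta$ from the equality case. Your version is slightly more explicit about justifying the multilinearity of $\tA_{-n}$ in the cores and verifying feasibility of the optimizer, but the argument is otherwise identical to the paper's.
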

Next section presents more efficient algorithms for sensitivity correction.  
\begin{remark} TC intensity is an upper bound of SS of the model. 
From SS in (\ref{eq_sstc}), we have 
\be
ss(\btheta) \le \sum_{n=1}^N {I_n} \, \prod_{k=1, k \neq n}^N \|\tA_k\|_F^2 .\label{eq_ss_intensity}
\ee
where $\btheta$ is the vector of parameters.
\end{remark}

\section{How to deal with Instability in TC- Sensitivity Correction Method}

 TC's instability in TC is similar to degeneracy in Canonical Polyadic  Decomposition (CPD), which is hard to avoid.
 We propose to correct the unstable estimated model by seeking a new tensor, $\hat{\tY}$, which preserves the approximation error but has smaller sensitivity.

\subsection{Rotation method}\label{sec::rotation}
A simple method is that we scale core tensors following Lemma~\ref{lem_balancednorm_ss}. 
An alternative method is that we rotate two consecutive core tensors by invertible matrices such that the new TC representation has minimum sensitivity 
\be
\tY = \circlellbracket \bQ_3^{-1} \, \ttprod \tA \ttprod \bQ_1, \bQ_1^{-1} \ttprod \tB \ttprod \bQ_2, \bQ_2^{-1} \ttprod \tC \ttprod \bQ_3  \circlerrbracket \,.
\ee
For simplicity, we derive the algorithm to find the optimal matrix, $\bQ$ of size $R_2 \times R_2$ which rotates the first two core tensors, $\tA$ and $\tB$, and gives a new equivalent TC tensor $\tY_{\bQ} = \circlellbracket \tA \ttprod \bQ, \bQ^{-1} \ttprod \tB, \tC  \circlerrbracket$. 

The optimal matrix $\bQ$ which minimizes the sensitivity of $\tY_{\bQ}$ is found in the following optimization
\begin{align}
\min_{\bQ} \;  ss(\tY_{\bQ}) = & I_3 \|\tA \ttprod \tB\|_F^2  + I_1 \|\bQ^{-1} \ttprod \tB \ttprod \tC\|_F^2 \notag \\
& + I_2 \|\bC \ttprod \tA \ttprod \bQ\|_F^2 \,.\label{eq_rotss}
\end{align}
We represent the product $\bQ\bQ^T = \bU \bS \bU^T$ in form of EVD, where $\bU$ is an orthogonal matrix of size $R_2 \times R_2$ and $\bS = \diag(s_1, \ldots, s_{R_2})$.
Instead of seeking $\bQ$, we find an orthogonal matrix $\bU$ and eigenvalues $s_r$.

The optimal eigenvalues are given in closed form as $\displaystyle s_r^{\star} = \sqrt{ \frac{I_1 \bu_r^T \bT_1 \bu_r}{I_2 \bu_r^T \bT_2 \bu_r}}$, for $r = 1, \ldots, R_2$, 
where 
$\bT_1 = \sum_{i_2 = 1}^{I_2}  \tB(:,i_2,:) (\bC_{(1)} \bC_{(1)}^T) \tB(:,i_2,:)^T$, and 
$\bT_2 = \sum_{i_1 = 1}^{I_1}  \tA(:,i_1,:)^T (\bC_{(3)} \bC_{(3)}^T) \tA(:,i_1,:)$, 
$\bC_{(1)}$ and $\bC_{(3)}$ are mode-1 and -$3$ unfoldings of the core tensor $\tC$.
The optimization problem to find the matrix $\bU$ is simplified to 
\be
\min_{\bU \in  St_{R_2}} \quad \sum_{r = 1}^{R_2} \sqrt{(\bu_{r}^T \bT_1 \bu_{r}) (\bu_{r}^T \bT_2 \bu_{r})} \,
\label{eq_obj_U}
\ee
and can be solved using the conjugate gradient algorithm on the Stiefel manifold \cite{Zaiwen_2012}. 
Algorithm~\ref{alg_rotation} in Appendix summarizes pseudo-codes of the proposed method, which first rotates $\tA$ and $\tB$, then performs cyclic-shift of dimensions in the tensor $\tY$ to give $\circlellbracket \tA ,  \tB, \tC  \circlerrbracket = \circlellbracket \tB ,  \tC, \tA  \circlerrbracket$.
A complete derivation of the rotation method for {\emph{higher order tensors}} is presented in Appendix~\ref{sec::rotation_full}.


\subsection{Alternating Sensitivity Correction Method}\label{sec::assc}

Both scaling and rotation methods preserve the approximation, transform a TC tensor with high SS to a new equivalent one with a smaller SS. 
This section proposes another algorithm for sensitivity correction, which updates one core tensor in each iteration.
The new algorithm further suppresses the sensitivity to a much lower value.
Similar to the rotation method, we formulate the problem of sensitivity correction as minimization of sensitivity with a bound constraint which for order-3 tensor is given by
\begin{align}
    \min  \quad & ss(\btheta) = I_1 \|\tB \ttprod \tC\|_F^2 + I_2 \|\tC \ttprod \tA\|_F^2 +  I_3 \|\tA \ttprod \tB\|_F^2 \,  \label{eq_ssmin}\\
\text{s.t.}\quad &  c(\btheta) =  \|\tY - \hat{\tY}\|_F^2 \le \delta^2 \notag ,
\end{align}
where 
$\hat{\tY} =  \circlellbracket \tA, \tB, \tC  \circlerrbracket$.
$\delta$ can be the approximation error of the current TC model, i.e., $\delta = \|\tY - \hat{\tY}_0\|_F$.

The objective and constraint functions are nonlinear in all 
core tensors. In order to solve (\ref{eq_ssmin}), we rewrite the objective function and the constraint function for a single core tensor and solve it using the alternating update scheme. For example, the optimization problem to update the core tensor $\tA$ is given by
\be
\min_{\tA} \quad && ss(\btheta) = I_1 \|\tB \ttprod \tC\|_F^2 +  \tr(\bQ \bA_{(2)}^T  \bA_{(2)})    \label{eq_ssA}\\ 
\text{s.t.}\quad &&\|\bY_{(1)} - \bA_{(2)}  \bZ^T \|_F^2 \le \delta^2
\notag 
\ee
where $\bQ = I_2 (\bI_{R_2} \otimes \bC_{(3)}\bC_{(3)}^T) + I_3 (\bB_{(1)} \bB_{(1)}^T \otimes \bI_{R_1})$, $\bZ$ is mode-(1,4) unfolding of the sub-network $\tB \ttprod \tC$,
$\bA_{(2)}$ is mode-2 unfolding of the core tensor $\tA$, 
$\bY_{(1)}$ is mode-1 unfolding of the tensor $\tY$.
The above optimization problem is quadratic.  
 $\tA$ can be found in closed form as in Spherical Constrained Quadratic Programming (SCQP) \cite{GANDER1989815, Phan_EPC, Phan_QPS}.

 We summarize pseudo code of the alternating SS correction (SSC) in Algorithm~\ref{alg_ACEP}. After each update, we perform cyclic shift to the tensor $\tY$ and $\hat{\tY}$. The order of the TC tensor will be $\circlellbracket  \tB, \tC, \tA\circlerrbracket$. 
 The algorithm for higher-order tensor is presented in Appendix~\ref{sec::ssc_highorder}. A similar algorithm can be applied to correct the TC intensity, where $\bQ$ in (\ref{eq_ssA}) is an identity matrix. We often correct intensity before correction of sensitivity.
 
 The entire procedure for efficient TC decomposition with SS control is listed in Algorithm~\ref{alg_ALSSSC}. One can start the decomposition with any algorithm in Section~\ref{sec::algs}.
 When SS of the estimated tensor is high exceeds a predefined value, e.g., $10^7, 10^8$, the decomposition will converge slowly, and the model tends to be unstable. Algorithm~\ref{alg_ALSSSC} will execute the sensitivity correction in Algorithm~\ref{alg_ACEP}. The TC decomposition will resume from a new tensor after SSC.


\begin{algorithm}[t] 
\caption{{\tt{Sensitivity Correction (SSC)}}\label{alg_ACEP}}
\DontPrintSemicolon \SetFillComment \SetSideCommentRight
\KwIn{Tensor $\tY$:  $(I_1 \times I_2 \times I_3)$,  and bond dimensions $R$ and error bound $\delta$}
\KwOut{$\hat{\tY} = \circlellbracket  \tA, \tB, \tC\circlerrbracket$:\;
$\min ss(\hat{\tY})$  \; s.t. \;  $\|\tY - \hat{\tY} \|_F^2  \le \delta^2 $}
\rememberlines
\Begin{
 Initialize $\hat{\tY}$ \; 

\Repeat{a stopping criterion is met}{
Apply balanced normalization,
and Rotation method in Algorithm~\ref{alg_rotation}\;

\For{$n = 1, 2, 3$}
	{
	    $\tZ = \tB \ttprod \tC$\;

	    $\bQ = I_2 (\bI_{R_2} \otimes \tC_{(3)}\tC_{(3)}^T) + I_3 (\tB_{(1)} \tB_{(1)}^T \otimes \bI_{R_1})$\;
	    
	    Solve $\tA = \arg\min_{\bX}   \tr(\bX \bQ \bX^T)
		\quad  \textrm{s.t.} \;\|\bY_{(1)} - \bX \bZ_{(1,4)} \|_F^2 \le \delta^2$\;

		Cyclic-shift of dimensions in $\tY$ and $\hat{\tY}$\;
	}
}
}
\end{algorithm}

%
\begin{algorithm}[t]
\caption{{\tt{TC with SS control (SSCTrl)}}\label{alg_ALSSSC}}
\DontPrintSemicolon \SetFillComment \SetSideCommentRight
\KwIn{tensor $\tY$:  $(I_1 \times I_2 \times I_3)$,  and bond dimensions $R$}
\KwOut{$\hat{\tY} = \circlellbracket  \tG_1, \tG_2, \tG_3\circlerrbracket$}
\resumenumbering
\Begin{
\Repeat{a stopping criterion is met}
	{
	    Perform TC decomposition $\hat{\tY}$: $\min \;  \|\tY - \hat{\tY}\|_F^2$\;
	    
	    \If {$ss(\btheta) \ge ss_{max}$}{
	         Apply Alg.~\ref{alg_ACEP} with $\delta = \|\tY - \hat{\tY}\|_F^2$\;
	        }
	}
}
\end{algorithm}



  

\section{TC convolutional layer}\label{sec::tclayer}



%
We apply the proposed algorithms for sensitivity correction in the application for CNN compression. 
In \cite{wide_compression}, the authors replace the convolutional kernel and fully connected kernels with the TC model and train the model from scratch. In the case of the pre-trained network given, Aggarwal et al. perform the decomposition of the kernels with parameters randomly generated from a Gaussian distribution.
We propose a sequence of 5 layers to replace a convolutional layer. 
Our method for CNN compression includes the following main steps:
\begin{enumerate}[noitemsep]
    \item  Each convolutional kernel is approximated by a tensor decomposition (TC or CPD ).
    \item  The TC decompositions with diverging components are corrected. The result is a new TC model with minimal sensitivity. CP is also corrected to have minimal sensitivity\cite{Phan_EPC}.
    \item  An initial convolutional kernel is replaced with a tensor in TC or CPD format, which is equivalent to replacing one convolutional layer with a sequence of convolutional layers with a smaller total number of parameters.
    \item  The entire network is then fine-tuned. 
\end{enumerate}

\textbf{TC Block} results in three convolutional layers $W_1, W_2, W_3$ with shapes ($C_{in} \times R_1R_2 \times 1 \times1$),~~3D ($R_2 \times R_3 \times 1 \times D \times D$) and ($R_3R_1 \times C_{out} \times 1 \times1$), respectively, and two permute/reshape layers $T_1, T_2$ with transform (from $R_1R_2 \times H \times W$ to $R_2 \times R_1 \times H \times W$) and (from $R_3 \times R_1 \times H \times W$ to $ R_1R_3 \times H \times W$), respectively, where $H$ and $W$ are the input dimensions, $R_1, R_2, R_3$ are TC ranks and $D$ is kernel dimension. Layer order is following: $W_1, T_1, W_2, T_2, W_3$. See Figure~\ref{fig:tc_layer_structure}  in Appendix~\ref{sec::tclayer_2}).
 
\textbf{CPD Block} results in three convolutional layers with shapes ($C_{in} \times R \times 1 \times1$),
depthwise ($R \times R \times D \times D$) and ($R \times C_{out} \times 1 \times1$), respectively. Here $R$ is CP rank and $D$ is kernel dimension. (See Figure~\ref{fig:cp_layer_structure} in Appendix~\ref{sec::tclayer_2}).

\textbf{Rank Search Procedure.} For CP, the smallest rank is chosen such that drop after single layer fine-tuning does not exceed a predefined threshold EPS. TC ranks are selected over the grid with a constraint that the model compressed with TC has fewer FLOPs than the corresponding model compressed with CP.
\section{Experiments}


\paragraph{Datasets and Computational Resources}
We test our algorithms on two representative CNN architectures for image classification: \textit{VGG-16} \cite{SimonyanZ14a}, \textit{ResNet-18}\cite{HeZRS15}. The networks after fine-tuning are evaluated through  \textit{top 1} and \textit{top 5} accuracy on  \textit{ILSVRC-12} \cite{imagenet_cvpr09} and \textit{CIFAR-10} \cite{cifar09}. 
The experiments were conducted with the popular neural networks framework \textit{Pytorch} on a GPU server with NVIDIA V-100 GPUs. As a baseline for \textit{CIFAR-10}, we used a pre-trained model with 95.17\% top-1 accuracy.

For fine-tuning, we used SGD with weight decay of $5\times 10^{-4}$. For the single layer fine-tuning model (Example~\ref{ex_rsn18_chkpoint95}), we fine-tuned the model for 30 epochs with a $10^{-4}$ learning rate. For full model compression (Example~\ref{ex_rsn18_chkpoint95_full}), models were fine-tuned for 120 epochs by SGD with initial learning rate $10^{-3}$ and decreased every 30 epochs by 10.

\begin{example}[Single layer compression in ResNet-18 trained on ILSVRC-12]\label{ex_resnet92}
We decomposed all convolutional kernels except layers 1, 8, and 13 were decomposed using ALS. The ranks-$(R_1, R_2, R_3)$ were chosen to give the number of model parameters close to those in CPD with rank-200. We then applied SSC (Algorithm~\ref{alg_ACEP}) after 3000 ALS updates. 
Figure~\ref{fig:resnet18_}(a) compares the relative approximation errors obtained by ALS in 13000 iterations and ALS+SSC. 
Figure~\ref{fig:resnet18_}(c) illustrates the convergence of the TC decomposition using ALS and ALS+SSC. After the SSC, the decomposition converged to a lower approximation error.

The relative change in approximation error, shown in Figure~\ref{fig:resnet18_}(b), can be 10\% for layers 2-6 and smaller for the other layers. We can say that the kernels of convolutional layers 2, 3, \ldots, 7, are well represented by low-rank tensors. Kernels in the last layers have much higher ranks. In our experiment, the approximation error for the last layer was even 0.7489. For this case, SSC is not much helpful. 

Figure~\ref{fig:resnet18_}(b) compares the SS of TC models estimated by ALS with and without SSC. The SS was very high for the last layers' decomposition.
The SS grows rapidly to a large number while the approximation error is still high. 
Approximation of those kernels by a relatively low-rank model will cause degeneracy. 
Using SSC, we can significantly reduce 
SS in all decompositions.
In summary, SSC will improve the approximation for tensors with good TC approximation and make the decomposition results more stable. 
\end{example}

\begin{figure*}
\centering
{\includegraphics[width=.29\linewidth]{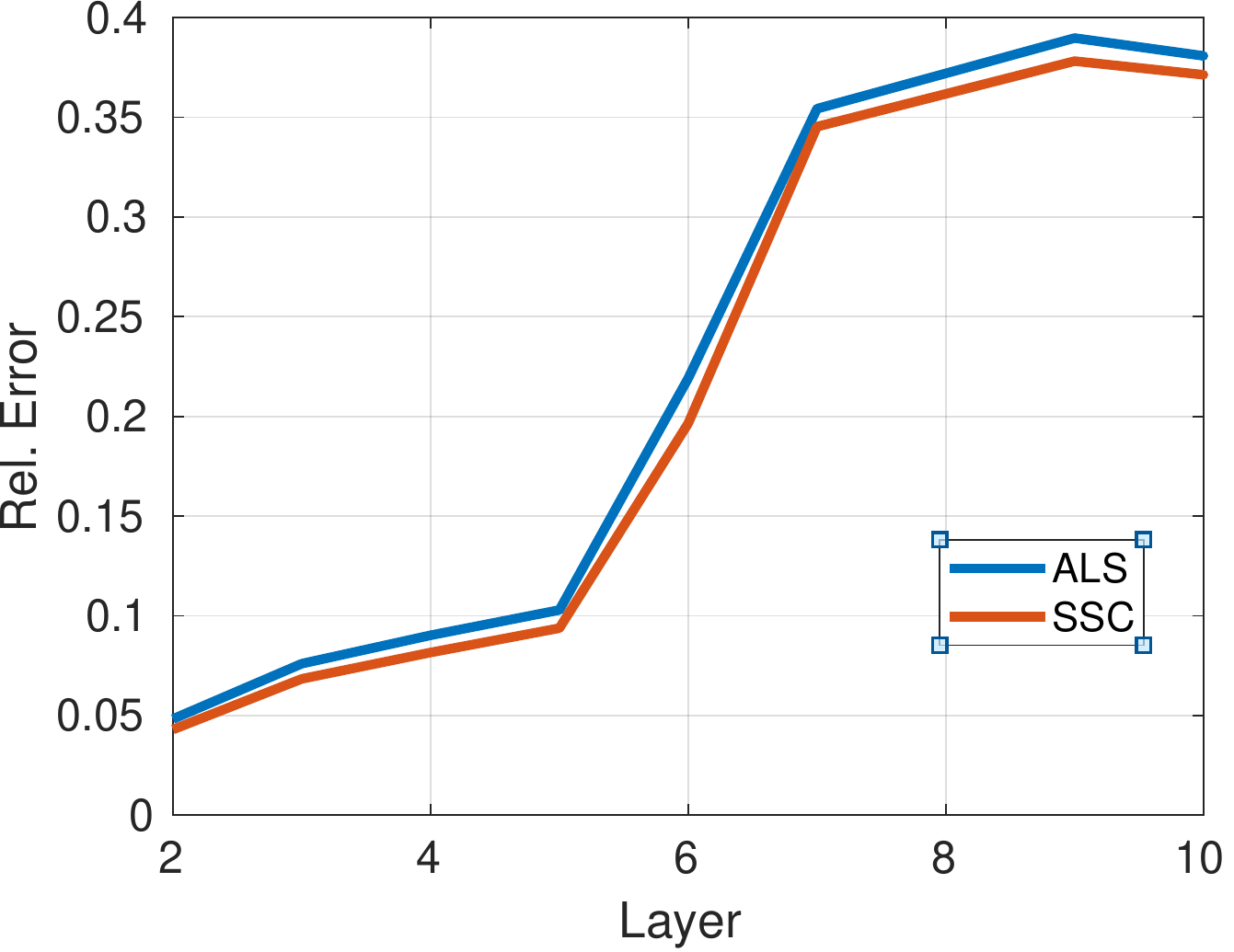}}\hfill
{\includegraphics[width=.30\linewidth]{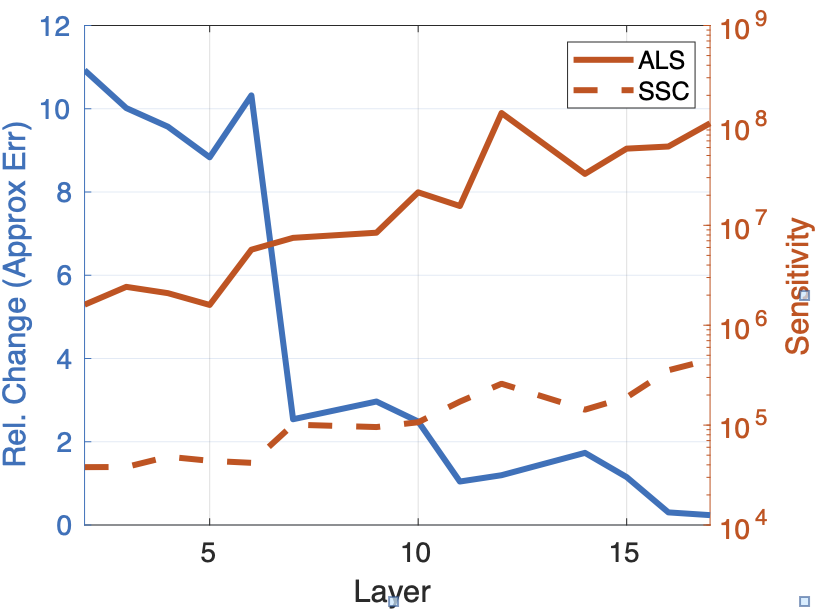}}\hfill
{\includegraphics[width=.29\linewidth]{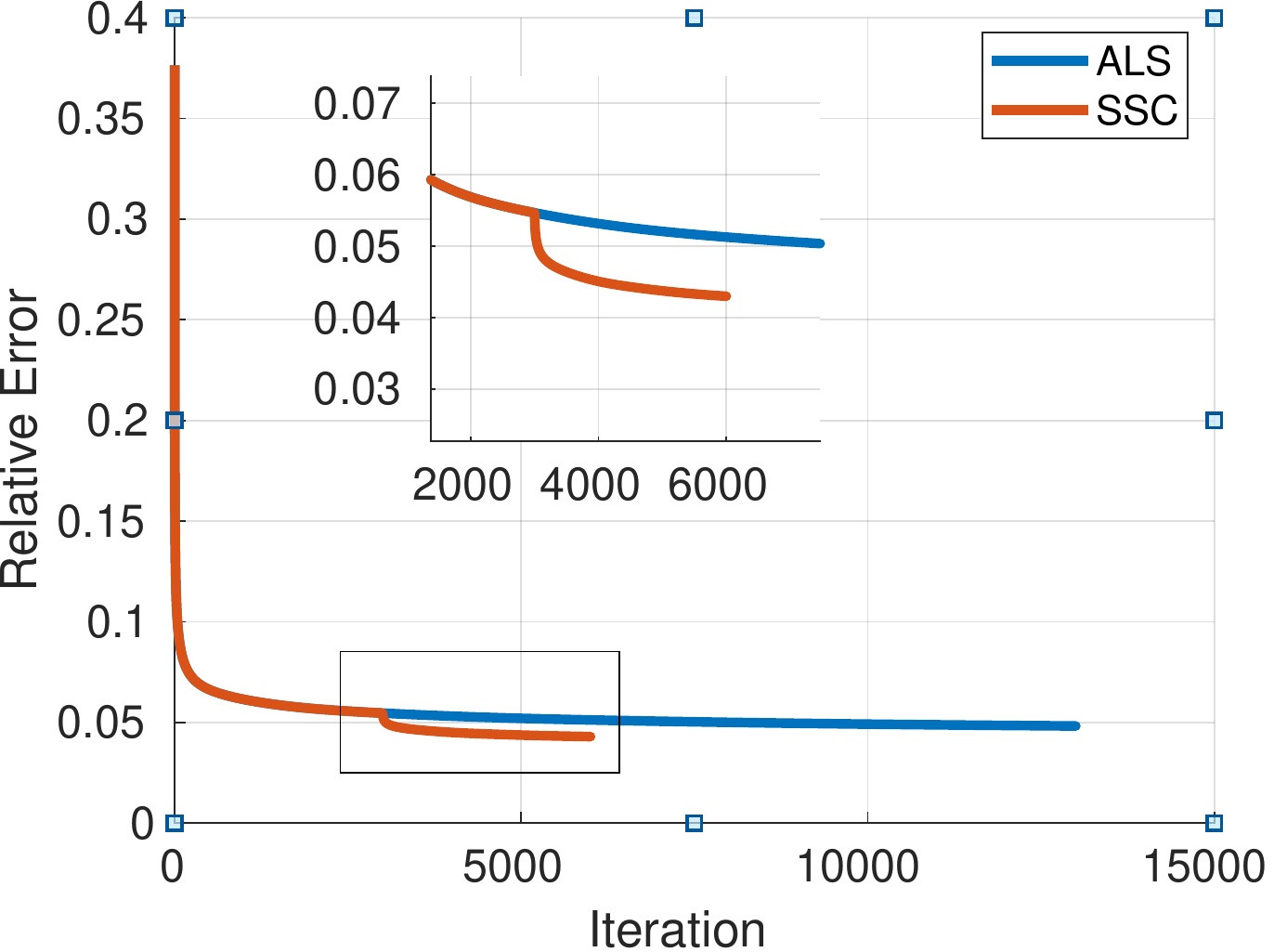}}
\caption{Decomposition of convolutional kernels in ResNet-18 in Example~\ref{ex_resnet92}.
(a) Relative errors obtained by ALS and ALS+SSC, (b) relative changes of the approximation error using SSC, and the SS of estimated tensors, (c) Convergence shown with and without SSC.}
    \label{fig:resnet18_}
\end{figure*}

\begin{figure*}[!ht]
\centering
\subfigure[]{\includegraphics[width=.33\linewidth, height = .2\linewidth]{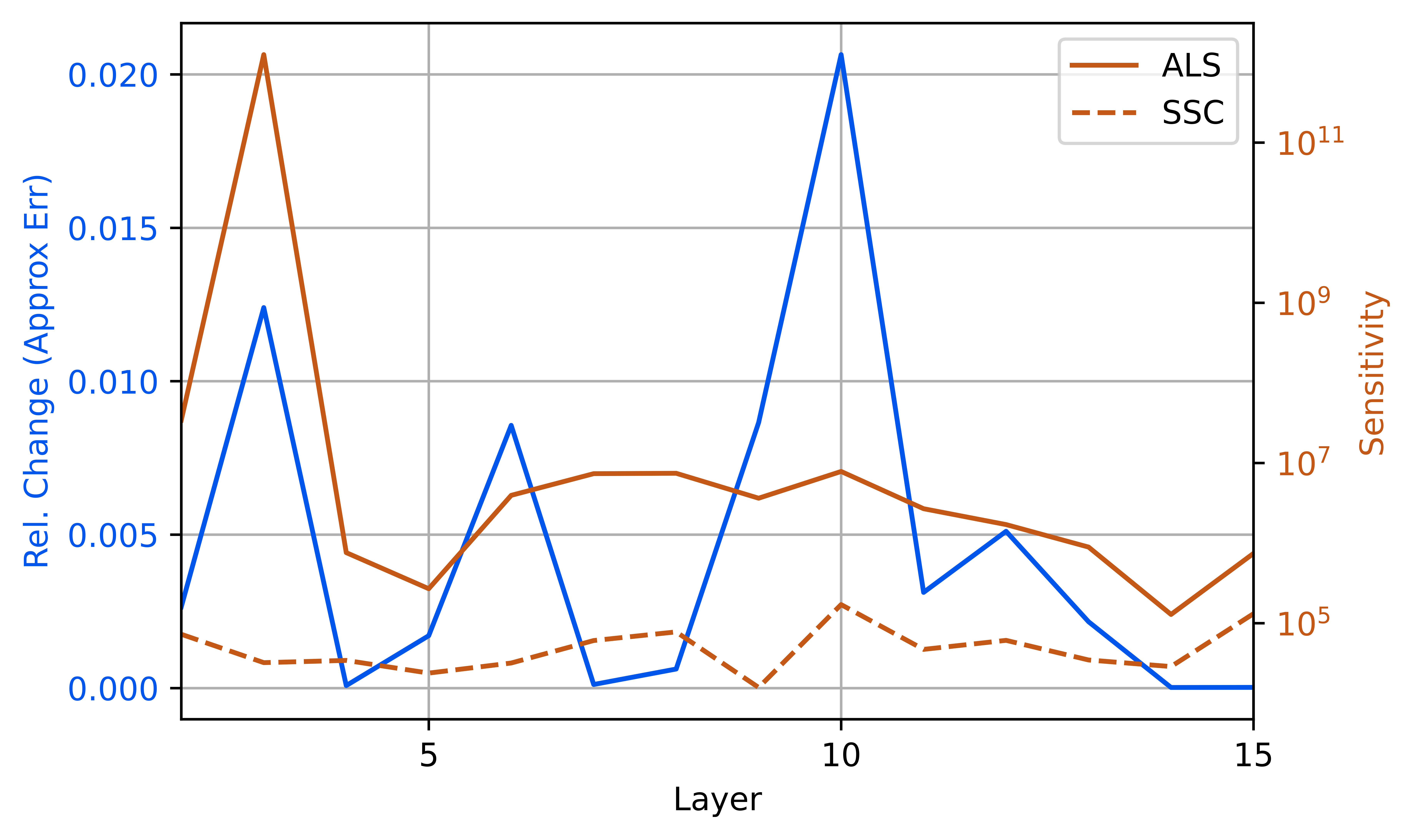}\label{fig:ex4_approx_err}}
    \subfigure[]{\includegraphics[width=.3\linewidth,height = .2\linewidth]{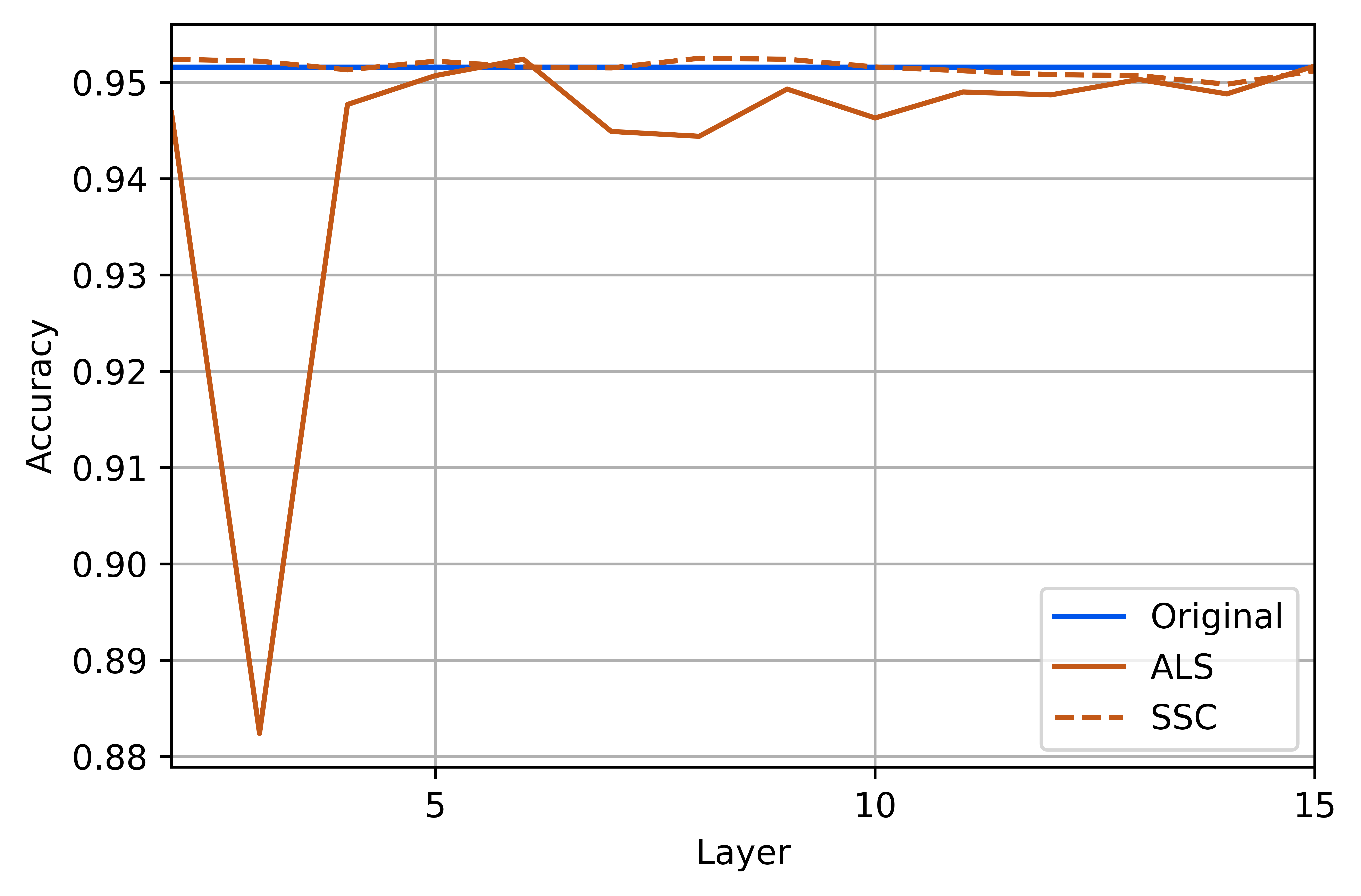}\label{fig:ex4_acc}}
    \subfigure[]{\includegraphics[width=.3\linewidth,height = .2\linewidth]{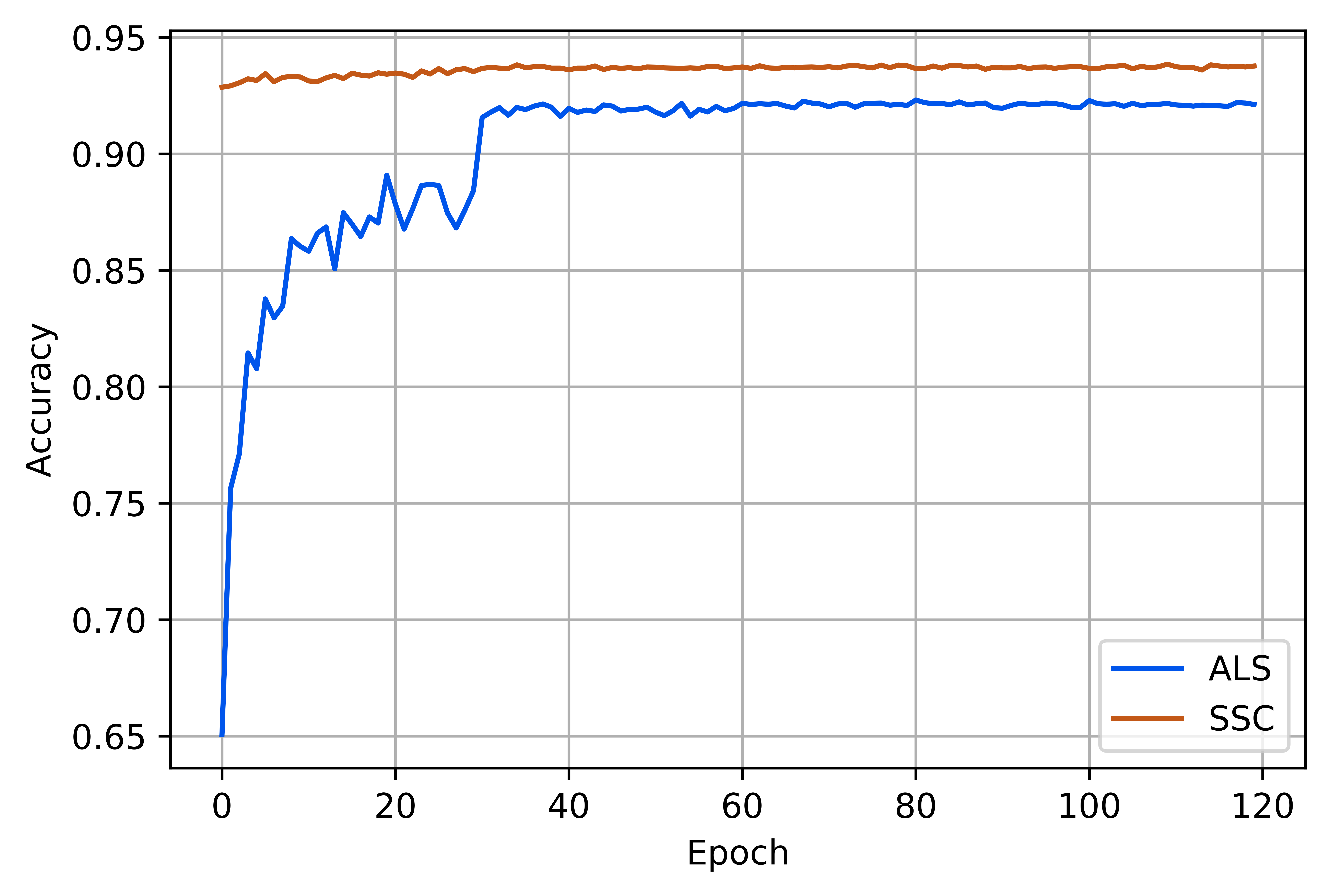}\label{fig:ex5_acc}}
    \caption{ ResNet-18 trained on CIFAR-10. (a) Sensitivity and relative change in approximation error of layers. (b) Accuracy of per layer fine-tuning. (c) Accuracy of full model compression and fine-tuning.}
    \end{figure*}

\begin{figure}[t]
\centering
\subfigure[Accuracy]{\includegraphics[width=.45\linewidth]{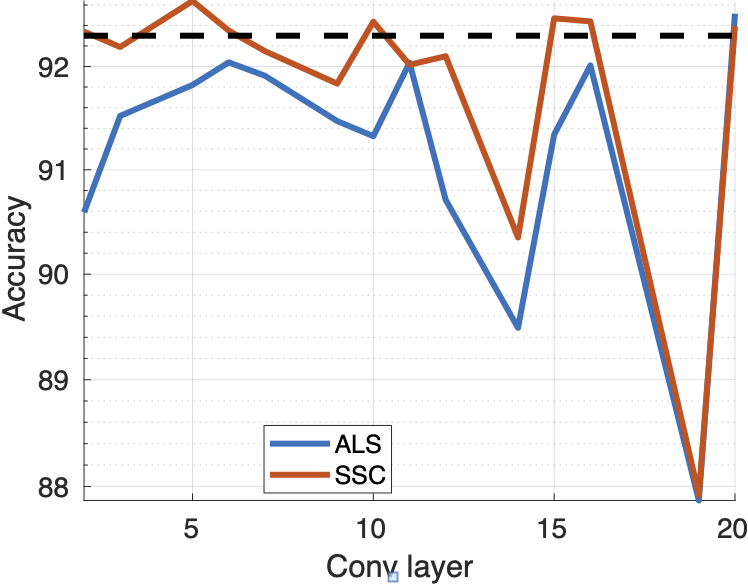}\label{fig:resnet18_ex3a}}
\subfigure[Layer 2]{\includegraphics[width=.45\linewidth]{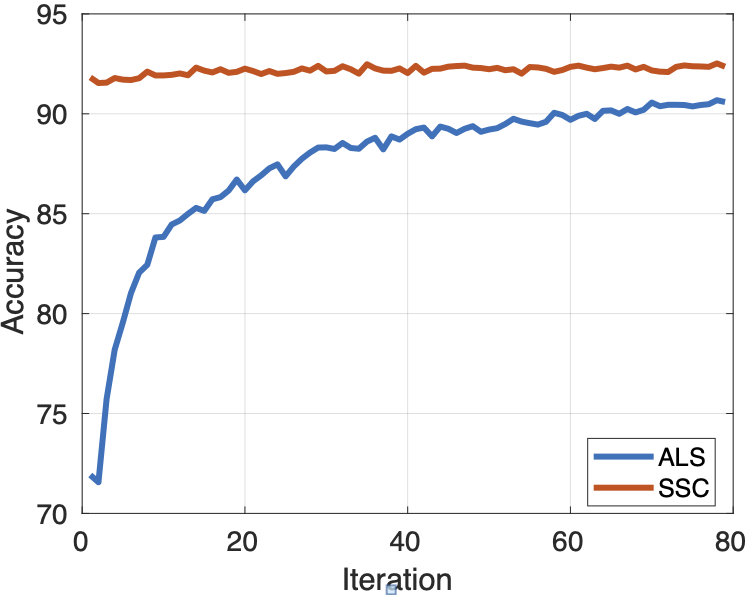}\label{fig:resnet18_ex3b}}
\caption{Performance comparison for ALS and (ALS+)SSC in the task for compression of single layers in ResNet-18 finetuned on CIFAR-10. The results are reported for Example~\ref{ex_resnet18_cifar10_v2}.}
\label{fig:resnet18_ex3}
\end{figure}


\begin{example}[Single layer compression of ResNet-18 on CIFAR-10]\label{ex_resnet18_cifar10_v2}
We follow the decomposition of kernels in Example~\ref{ex_resnet92}.
The last layer in ResNet-18, trained on ILSVRC-12, is modified and finetuned to work with the CIFAR-10 dataset.
Each convolutional layer is replaced with a TC layer presented in Section~\ref{sec::tclayer}. 
The new ResNet-18 is finetuned on the CIFAR-10 dataset
to update the TC-layer, while the other convolutional layers are frozen. We compared the accuracy of the new type ResNet-18 with TC-layer initialized by the results obtained with ALS and another network initialized by TC tensor obtained with SSC.

The original accuracy of this ResNet-18 for CIFAR-10 is 92.90\%. In addition, we do not compress 1x1 convolutional layers, which are layers 8 and 13.
TC approximates kernels in layers 2-11 with ranks-(10-10-10), 
while kernels in layers 12, 14-20 are compressed as in Example~\ref{ex_resnet92}. 

Figure~\ref{fig:resnet18_ex3} compares accuracy of the two ResNet-18 models. SSC improves the approximation errors and stabilizes the TC network, making the finetuned network attain the best accuracy faster than ResNet-18 using TC with ALS, as seen in Figure~\ref{fig:resnet18_ex3}(b) convergence of the accuracy for the 2nd convolutional layer.


\end{example}



\begin{example}[Single layer compression of ResNet-18 on CIFAR-10 with estimation of TC ranks]\label{ex_rsn18_chkpoint95}
%
We train the PyTorch ResNet-18 network adopted for the CIFAR-10 dataset for this experiment. We compute CP ranks for convolutional kernels and TC ranks using the rank selection described in Section \ref{sec::tclayer}. SSC gives smaller approximation error and smaller sensitivity than ALS (Figure~\ref{fig:ex4_approx_err}). Each TC layer is finetuned. Figure~\ref{fig:ex4_acc} shows that SSC helps to get better accuracy than ALS.

\end{example}



\begin{example}[Full network compression on CIFAR-10]\label{ex_rsn18_chkpoint95_full}
%
For this experiment we replace all convolutional layers of ResNet-18 except for \textit{conv1}, \textit{layer1.0.conv1} and \textit{layer4.1.conv2} by
TC layer with ALS and SSC as described in Example \ref{ex_rsn18_chkpoint95}, remaining convolutional layers are replaced by CPD layers \cite{PhanECCV2020}.  
Figure \ref{fig:ex5_acc} show that the model with SSC not only converges faster than ALS but also has a significantly higher final accuracy (93.77 \% vs. 92.12\%).

\end{example}

\begin{example}[Full network compression on ILSVRC-12]\label{ex::resnet18_ILSVRC}
 We provide extra comparison of full network compression for ResNet-18 and VGG-16 on  ILSVRC-12 summarized in Table~\ref{table:overall_results}. SSC  showed compression results comparable to existing methods. It opens a new direction for combined architectures with CPD-EPC and TC-SSC 
(See Example~\ref{ex_rsn18_chkpoint95_full}). 
 In addition, we validate our evaluation of TC-SSC and TC-ALS for single layer fine-tuning on ILSVRC-12 (see Figure \ref{fig::resnet18_imagennet_layerwise} in Appendix). Thank to lower sensitivity, TC-SSC exhibits stable convergence to a higher accuracy than TC-ALS.
\end{example}




\begin{table}[!h]
\caption{Comparison of different model compression methods on ILSVRC-12 validation dataset.}\label{table:overall_results}
%
{
\begin{tabular*}{1\linewidth}{@{\extracolsep{\fill}}l@{\hspace{1ex}}l@{}c@{}c@{}c@{}c}
\textbf{NN}
& \textbf{Method}                     & \textbf{$\downarrow$ FLOPs} & \textbf{$\downarrow$ Params} & \textbf{$\Delta$ top-1} & \textbf{$\Delta$ top-5} \\ \cline{1-6}

\multirow{5}{*}{\rotatebox[origin=c]{90}{VGG-16} }
&  {Asym}     & $\approx5.00$                                    &   - & -    &  -1.00     \\ 
                                &  {TKD+VBMF}     &  4.93  & - &   -    &  -0.50     \\
&   {CPD-EPC}     &  5.24    & 1.10  &    -0.94   &  -0.33   \\ 
&  {\bf SSC[Ours]}     &  5.30 & 1.10 &   -6.68   &  -3.93    \\
&  {\bf SSC[Ours]}     &  3.76  &  1.09 &  -1.47  &  -0.61    \\\cline{1-6}

\multirow{8}{*}{\rotatebox[origin=c]{90}{ResNet-18 }}
& {CG} & 1.61 & - & -1.62 & -1.03  \\
& {DCP} & 1.89 & - & -2.29 & -1.38\\
& {FBS} & 1.98 & - & -2.54 & -1.46\\
& {MUSCO} & 2.42 & - & -0.47 & -0.30\\
&  CPD-EPC & 3.09 & 3.82 & -0.69 & -0.15 \\ 
&  {\bf SSC[Ours]}     &  3.15 & 4.05 &   -1.97   &  -0.92    \\
&  {\bf SSC[Ours]}    &  2.49  &  3.76 &  -0.86   &  -0.3    \\\cline{1-6}
\end{tabular*}}
\begin{tabular}{@{}ll}
    Asym. \cite{zhang15accelerating} & TKD+VBMF \cite{Kim2016} \\
    DCP  \cite{zhuang2018discrimination} & CPD-EPC \cite{PhanECCV2020}\\
    FBS \cite{gao2018dynamic} & MUSCO \cite{gusak2019automated}\\
    CG \cite{hua2018channelGating} & 
\end{tabular}
%
%
\end{table}

\section{Related Works}

Since TC was introduced in \cite{Khoromskij-SC, Espig_2011}, many algorithms have been developed for various applications, see Section~\ref{sec::algs}. However, most studies do not realize the instability problem with TC. The decomposition for incomplete data is even more challenging, as seen in Example~\ref{ex_incompletTC}. 
No existing algorithm for TC is related to our proposed methods. 

Regarding the application for compression of CNN, the authors in \cite{wide_compression} encountered the problem of obtaining a good TC decomposition; the authors carefully chose the variance of the initial parameters. However, they were unaware of the numerical instability problem in their decomposition results and did not propose a decomposition method for the TC. Figure 6 in \cite{wide_compression} shows that the networks converged slowly and can take 80000 iterations. Slow convergence with the neural network training with ordinary TC layers, i.e., without sensitivity correction, can also be observed in Figure~\ref{fig:resnet18_ex3a} and Figure~\ref{fig:ex4_acc} for training ResNet-18 with CIFAR-10 dataset, Figure~\ref{fig::resnet18_imagennet_layerwise}(in Appendix) for ResNet-18 trained on ILSVRC-12 dataset.
With Sensitivity Correction, we can train the compressed neural networks quickly and obtain good performances, which are very close to the accuracy of the original neural networks, see, for example, Figure~\ref{fig:resnet18_ex3a} and Figure~\ref{fig:ex4_acc}.
\cite{pan2019compressing} compressed Recurrent Neural Networks with TC layer and implemented the layer as a sum of TT layers.
Despite the similarity in applications, the main targets in our work and other studies are different.

 \section{Conclusions}
 
 This paper presents a novel work on the Block term decomposition with shared core tensors (sBTD) and the Tensor Chain. We show that any TC/sBTD model can be unstable with diverging intensity, see Lemma~\ref{lem_tc_degenerarcy}. 
 We proposed sensitivity for TC as a measure of stability, 
and confirm the analysis in examples for synthetic data, images, and decomposition of convolutional kernels in ResNet-18. The most important contribution is the novel algorithms that can stabilize the TC/sBTD model and improve the convergence of the decomposition. For compression of CNNs, we proposed a new TC/sBTD layer, which comprises 3 convolutional layers. We show that our proposed methods can help the compressed CNN quickly attain the original accuracy in a few iterations. In contrast, the compressed network cannot be fine-tuned or converge very slowly without sensitivity correction, thereby demanding many iterations.  
 



\bibliographystyle{unsrt}  
\bibliography{BIBTENSORS2018,egbib}

\newpage
\appendix
\onecolumn

This supplementary presents proofs of Lemmas introduced in the main manuscript and provides detailed derivation of the Rotation method introduced in Section~4.1 and more illustrative figures for Examples~1-4.

\section{Tensor Contraction}

\begin{definition}[Tensor train contraction]\label{def_train_contract}\label{def_boxtime}
performs a tensor contraction between the last mode of $\tA$ and the first mode of $\tB$, 
to yield a tensor $\tC = \tA \bullet \tB$ of size $I_1  \times \cdots \times I_{N-1} \times J_2 \times \cdots \times J_K$
 the elements of which are given by 
\be
c_{i_1,\ldots,i_{N-1},j_2, \ldots,j_K} = \sum_{i_N = 1}^{I_N}  a_{i_1,\ldots,i_{N-1},i_N} \, b_{i_N,j_2, \ldots,j_K}  \notag 
.
\ee
A Tensor train can be expressed as train contraction of core tensors, $\tX = \tX_1 \bullet \tX_2 \bullet \cdots  \bullet \tX_N$. 
\end{definition}


\section{Proof of Lemma~\ref{lem::btd_tc} (Equivalence of BTD with shared core tensors and TC)}\label{secA::proof_btd_tc}

\begin{lemma}[Equivalence of BTD with shared core tensors and TC]
The constrained BTD with shared core tensors in (\ref{eq_btd_shared}) is a Tensor chain model
{\normalfont
$
\tY =  \circlellbracket \tA, \tB, \tC\circlerrbracket$}
where $\tA$ is of size $R_1 \times I_1 \times R_2$ with horizontal slices $\tA(t,:,:) = \bA_{t}$, $\tC$ of size $R_3 \times I_3 \times R_1$ and 
$\tC(:,:,t) = \bC_{t}$, $t = 1, \ldots, R_1$.
\end{lemma}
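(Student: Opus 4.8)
The plan is to prove the identity \emph{entrywise}, starting from the scalar contraction formula $y_{ijk} = \tr(\tA(:,i,:)\,\tB(:,j,:)\,\tC(:,k,:))$ recorded just below \eqref{eq_tc3}, and showing that the summation over the loop-closing bond index $r_1$ reproduces precisely the summation over the $T$ Tucker terms of the shared-core BTD in \eqref{eq_btd_shared}. First I would set up the dictionary of sizes forced by the sharing constraint $\tB_1 = \cdots = \tB_T =: \tB$: the common block fixes $\tB$ to have size $R_2 \times I_2 \times R_3$, every $\bA_t$ to have size $I_1 \times R_2$, and every $\bC_t$ to have size $I_3 \times R_3$, with the number of terms $T = R_1$. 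Under the stated slicing $\tA(t,:,:) = \bA_t$ and $\tC(:,:,t) = \bC_t$ (the latter read with the natural transpose convention, i.e. $\tC(r_3,k,t) = \bC_t(k,r_3)$), the core entries become $\tA(r_1,i,r_2) = \bA_{r_1}(i,r_2)$ and $\tC(r_3,k,r_1) = \bC_{r_1}(k,r_3)$.

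Next I would expand the trace into a triple sum over the bond indices and substitute these entries:
\be
y_{ijk} = \sum_{r_1=1}^{R_1}\sum_{r_2=1}^{R_2}\sum_{r_3=1}^{R_3} \bA_{r_1}(i,r_2)\,\tB(r_2,j,r_3)\,\bC_{r_1}(k,r_3)\,. \notag
\ee
The decisive step is to pull the $r_1$-summation to the front and relabel $t := r_1$. Because both $\bA_{r_1}$ and $\bC_{r_1}$ carry the \emph{same} index $r_1$, each value of $t$ pairs the $t$-th left factor with the $t$-th right factor while they share the single common core $\tB$; the inner double sum over $r_2,r_3$ is then exactly the $(i,j,k)$ entry of the mode product $\tB \times_1 \bA_t \times_3 \bC_t$. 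This yields $y_{ijk} = \sum_{t=1}^{R_1} (\tB \times_1 \bA_t \times_3 \bC_t)(i,j,k)$, which is \eqref{eq_btd_shared} with $T = R_1$ and every $\tB_t$ replaced by the shared core $\tB$; since this holds for all $(i,j,k)$, the two models coincide.

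I expect the main obstacle to be conceptual rather than computational: one must recognise that the bond dimension $R_1$ closing the TC loop between $\tC$ and $\tA$ is exactly the index enumerating the BTD terms, so that the ``diagonal'' pairing of $\bA_t$ with $\bC_t$ (and the exclusion of cross terms $\bA_s$ with $\bC_t$ for $s \neq t$) is enforced automatically by the shared index rather than imposed by hand. Everything downstream is routine index bookkeeping; the only place demanding care is the transpose convention by which $\bC_t$ is attached as the mode-3 slice of $\tC$, which I would fix once at the outset and carry through consistently.
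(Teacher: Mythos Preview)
Your proposal is correct and follows essentially the same route as the paper: the paper's proof is a one-line expansion of the shared-core BTD into the triple outer-product sum $\sum_{t,r,s}\tA(t,:,r)\circ\tB(r,:,s)\circ\tC(s,:,t)$, which is the TC definition \eqref{eq_tc3}, whereas you verify the same identity entrywise starting from the trace formula and reading it in the reverse direction. The mathematical content is identical, including your handling of the transpose convention for $\bC_t$ and your identification of the loop-closing bond index $r_1$ with the BTD term index $t$.
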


\begin{proof}
\end{proof}

The proof is straightforward from the definitions of BTD and TC models 
\be
\tY &=& \sum_{t = 1}^{R_1} \sum_{r=1}^{R_2} \sum_{s = 1}^{R_3}  \bA_t(:,r) \circ \tB(r,:,s) \circ \bC_t(:,s) \notag\\
&=& \sum_{t = 1}^{R_1} \sum_{r=1}^{R_2} \sum_{s = 1}^{R_3}  \tA(t,:,r) \circ \tB(r,:,s) \circ \tC(s,:,t).
\ee

\section{Proof of Lemma 2 (TC Degeneracy)}\label{sec::proofTCdegeneracy}

\begin{lemma}[TC Degeneracy]
For a given TC model, {\normalfont{$\tY = \circlellbracket \tA, \tB, \tC\circlerrbracket$}}, there is always a sequence of equivalent TC models with diverging TC intensities. 
\end{lemma}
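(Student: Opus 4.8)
The first thing to observe is that the TC intensity is invariant under the pure rescaling ambiguity: if we replace $(\tA,\tB,\tC)$ by $(\alpha_1\tA,\alpha_2\tB,\alpha_3\tC)$ with $\alpha_1\alpha_2\alpha_3=1$, then the product of Frobenius norms is unchanged, so $\alpha$ is the same. Hence a diverging sequence of intensities cannot be manufactured by scaling alone; it must exploit the rotation ambiguity. The plan is therefore to fix a single bond, say the $R_2$ edge joining $\tA$ and $\tB$, and to track the intensity of the one–parameter family of equivalent models
\[
\tY=\circlellbracket \tA\ttprod\bQ,\ \bQ^{-1}\ttprod\tB,\ \tC\circlerrbracket
\]
as $\bQ$ is driven to be increasingly ill-conditioned.

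First I would rewrite the intensity of the rotated representation in terms of fixed unfoldings. Train-contraction on the right by $\bQ$ acts as left multiplication by $\bQ^{T}$ on the mode-3 unfolding $\bA_{(3)}$, while contraction on the left by $\bQ^{-1}$ acts as left multiplication by $\bQ^{-1}$ on the mode-1 unfolding $\bB_{(1)}$; the core $\tC$ is untouched. Since the intensity equals the product of the Frobenius norms of the three cores (in any representation), this gives
\[
\alpha(\bQ)=\|\bQ^{T}\bA_{(3)}\|_F\,\|\bQ^{-1}\bB_{(1)}\|_F\,\|\tC\|_F .
\]
I would then choose the explicit family $\bQ_s=\diag(s,1,\dots,1)$ and let $s\to\infty$. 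Writing $\ba_r,\bb_r$ for the rows of $\bA_{(3)}$ and $\bB_{(1)}$, a direct computation yields $\|\bQ_s^{T}\bA_{(3)}\|_F^2=s^2\|\ba_1\|^2+\sum_{r\ge 2}\|\ba_r\|^2$ and $\|\bQ_s^{-1}\bB_{(1)}\|_F^2=s^{-2}\|\bb_1\|^2+\sum_{r\ge 2}\|\bb_r\|^2$. As $s\to\infty$ the first factor grows like $s\|\ba_1\|$ while the second tends to the positive constant $\bigl(\sum_{r\ge2}\|\bb_r\|^2\bigr)^{1/2}$, so $\alpha(\bQ_s)$ grows linearly in $s$ and diverges, producing the required sequence of equivalent TC models with unbounded intensity.

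The main obstacle is the nondegeneracy bookkeeping hidden in the last step: the construction needs the inflating factor to carry a nonzero coefficient and the deflating factor to stay bounded away from zero, i.e.\ a coordinate $j$ with $\ba_j\neq 0$ together with some index $k\neq j$ with $\bb_k\neq 0$. I would dispose of this by freely choosing which diagonal entry of $\bQ_s$ to inflate, and by noting that such a choice always exists along some bond when the model is genuine; the only exclusions are the trivial case $\tY=\0$ and the all-bond-dimension-one case, where the rotation freedom is absent and the statement is vacuous. Finally, for a general order-$N$ TC the same argument applies verbatim to any edge with $R_n\ge 2$, replacing $\bB_{(1)}$ by the mode-1 unfolding of the contracted remainder $\tA_{-n}=\tA_{n+1}\ttprod\cdots\ttprod\tA_{n-1}$.
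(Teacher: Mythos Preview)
Your overall strategy coincides with the paper's: exploit the rotation ambiguity on one bond and drive $\bQ$ to be ill-conditioned. The paper, however, takes a slightly different concrete route: it first rewrites $\tA\ttprod\tB=\tU\ttprod\bS\ttprod\tV$ via SVD (so $\tU,\tV$ have orthonormal fibers) and then uses a \emph{shear} $\bQ=\bigl(\begin{smallmatrix}1&x\\x&1\end{smallmatrix}\bigr)$ with $x\to 1$. Orthonormality collapses the two relevant norms to $\|\bQ\|_F$ and $\|\bQ^{-1}\bS\|_F$, and the resulting intensity $\tfrac{(1+x^2)\sqrt{2(s_1^2+s_2^2)}}{|1-x^2|}\|\tC\|_F$ diverges as soon as $s_1>0$, i.e.\ whenever $\tY\neq\0$.

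Your diagonal $\bQ_s=\diag(s,1,\dots,1)$ is more elementary, but the nondegeneracy claim you make to close the argument is not true, and this is a genuine (if easily repaired) gap. Take $R_1=R_2=R_3=2$ and let every core have slices $\tA(:,i,:)=a_i\,\bee_1\bee_1^{T}$ (similarly for $\tB,\tC$) with the scalars $a_i$ not all zero. Then $\tY$ is a nonzero rank-one tensor and every bond has dimension $2$, yet at every bond both $\bA_{(3)}$ and $\bB_{(1)}$ have only their first row nonzero; for \emph{any} diagonal $\bQ=\diag(d_1,\dots,d_{R_2})$ one gets $\|\bQ^{T}\bA_{(3)}\|_F\,\|\bQ^{-1}\bB_{(1)}\|_F=|d_1|\,\|\ba_1\|\cdot|d_1|^{-1}\|\bb_1\|$, a constant, so your construction fails at all bonds simultaneously. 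The fix is precisely what the paper's choice buys: either precede the diagonal scaling by a fixed orthogonal rotation that spreads the single nonzero row of $\bB_{(1)}$ into at least two rows (after which your computation goes through), or use a shear-type $\bQ$, for which $\|\bQ^{-1}\bB_{(1)}\|_F$ still blows up even when $\bB_{(1)}$ is supported on a single row.
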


\begin{proof} 
\end{proof}
\normalfont
We provide an example as proof for the TC model with rank $R_2 = 2$. The other cases can be seen straightforwardly.

Consider the sub-network $\tA \ttprod \tB$, apply the DMRG-like update rule to split it to a sequence of three cores,  
$$\tA \ttprod \tB = \tU  \ttprod \bS \ttprod \tV$$
where $\bU \, \bS \, \bV$ is thin-SVD of unfolding of $\tA \ttprod  \tB$ to a matrix of size $R_1 I_1 \times I_2 R_3$, 
$\bS = \diag(s_1, s_2)$ is a diagonal matrix of $R_2 = 2$ leading singular values, 
$\bU$ and $\bV$ are unfoldings of $\tU$ and $\tV$, respectively.
The tensor $\tY$ has an equivalent TC model
$\tY = \circlellbracket  \tU, \bS \ttprod \tV, \tC \circlerrbracket$.

We next define a matrix $\bQ = \left[\begin{matrix}
1 & x\\
x & 1
\end{matrix}\right]$. The tensor $\tY$ has another equivalent TC model given by
{\normalfont
\be
\tY = \circlellbracket  \tU \ttprod \bQ , \bQ^{-1} \bS \ttprod \tV, \tC \circlerrbracket \, 
\ee
}
but with an intensity  
\be
\alpha &=& \|\tU \ttprod \bQ\|_F  \, \|\bQ^{-1} \bS \ttprod \tV\|_F \, \|\tC\|_F \notag \\
&=& \|\bQ\|_F  \, \|\bQ^{-1} \bS\|_F \, \|\tC\|_F \notag \\
&=& \frac{(1+x^2) \sqrt{2(s_1^2 + s_2^2)}}{|1-x^2|} \|\tC\|_F \,.
\ee 
It is obvious that when $x$ approaches 1, the intensity $\alpha$ goes to infinity. 
For the general case, the proof can be derived similarly with a symmetric matrix $\bQ$ of size $R_2 \times R_2$ which has ones on the diagonal and two non-zero off-diagonal elements $x$.

\section{Proof of Lemma 3 (Sensitivity for TC)}

\begin{proof}[Proof of Lemma 3 (Sensitivity of TC model)]
\end{proof}
Consider the error tensor
\be
 &&\circlellbracket \tA_1 + \delta_{\tA_1}, \tA_2 + \delta_{\tA_2}, \ldots, \tA_N + \delta_{\tA_N} \circlerrbracket - \circlellbracket \tA_1, \tA_2, \ldots, \tA_N \circlerrbracket    \notag \\
 &=& \circlellbracket \delta_{\tA_1}, \tA_2, \ldots, \tA_N \circlerrbracket + 
  \circlellbracket {\tA_1}, \delta_{\tA_2}, \tA_{3}, \ldots, \tA_N \circlerrbracket
  + \cdots + 
  \circlellbracket {\tA_1}, {\tA_2}, \ldots, \delta_{\tA_N} \circlerrbracket
  +  \circlellbracket \delta_{\tA_1}, \delta_{\tA_2}, {\tA_3}, \ldots, {\tA_N} \circlerrbracket
  \notag \\
 && +
 \circlellbracket \delta_{\tA_1}, {\tA_2}, \ldots, \delta_{\tA_N} \circlerrbracket+
 \cdots
 + \circlellbracket \delta_{\tA_1}, \delta_{\tA_2}, \ldots, \delta_{\tA_N} \circlerrbracket
 \notag .
\ee
TC terms in the above expression are uncorrelated, and the expectation of the terms consisting of two or more $\delta_{\tA_n}$ is zero.
Hence, the expectation in (\ref{eq_sstc}) is rewritten as 
\begin{align}
&E\{\|\tY - \circlellbracket \tA_1 + \delta_{\tA_1}, \tA_2 + \delta_{\tA_2}, \ldots, \tA_N + \delta_{\tA_N} \circlerrbracket\|_F^2\} \notag \\
&= 
E\{\|\circlellbracket \delta_{\tA_1}, \tA_2, \ldots, \tA_N \circlerrbracket\|_F^2\} + 
  E\{\|\circlellbracket {\tA_1}, \delta_{\tA_2}, \tA_{3}, \ldots, \tA_N \circlerrbracket\|_F^2\}
  + \cdots + 
  E\{\|\circlellbracket {\tA_1}, {\tA_2}, \ldots, \delta_{\tA_N} \circlerrbracket\|_F^2\} \,  \notag \\
&=
E\{\|\circlellbracket \delta_{\tA_1}, \tA_{-1} \circlerrbracket\|_F^2\} + 
  E\{\|\circlellbracket  \delta_{\tA_2}, \tA_{-2} \circlerrbracket\|_F^2\}
  + \cdots + 
  E\{\|\circlellbracket \delta_{\tA_N}, \tA_{-N} \circlerrbracket\|_F^2\}
  \label{eq_ss2}.
\end{align}
Thank to looping structure of the TC tensor, we can cyclic shift $\delta_{\tA_n}$ to the first core tensor, and the rest part of the tensor is the TT-tensor $\tA_{-n}$. 

We reshape the TC tensor, $\circlellbracket  \delta_{\tA_n}, \tA_{-n} \circlerrbracket$, to mode-1 unfolding and expand its Frobenius norm
as
\be
E\{\|\circlellbracket \delta_{\tA_1}, \tA_{-1} \circlerrbracket\|_F^2\} &=& E\{ \|[\delta_{\tA_1}]_{(2)} [\tA_{-1}]_{(1,N)}\|_F^2 \} \notag  \\
 &=& E\{ \tr([\delta_{\tA_1}]_{(2)}^T [\delta_{\tA_1}]_{(2)}) ([\tA_{-1}]_{(1,N)} [\tA_{-1}]_{(1,N)}^T )) \} =  \sigma^2 I_1 \tr([\tA_{-1}]_{(1,N)} [\tA_{-1}]_{(1,N)}^T )) \notag \\
&=& \sigma^2 I_1  \|\tA_{-1}\|_F^2 \notag .\ee
Together with (\ref{eq_ss2}), we finally complete the proof.

\section{Proof of Lemma 4 (Balanced norm for minimal sensitivity)}


\begin{proof}

Since $\alpha_1 \alpha_2 \cdots \alpha_N = 1$, we have  two equivalent TC models {\normalfont{$\circlellbracket  \tA_1, \tA_2, \ldots,  \tA_N \circlerrbracket = \circlellbracket\alpha_1 \tA_1, \alpha_2 \tA_2,  \ldots, \alpha_N\tA_N \circlerrbracket$}}. Sensitivity of the new model is given by
\be
ss( \circlellbracket \alpha_1 \tA_1, \alpha_2 \tA_2,  \ldots, \alpha_N\tA_N \circlerrbracket) &=&
\sum_{n = 1}^{N} I_n \left(\prod_{k \neq n} \alpha_k^2\right) \, \| \tA_{-n}\|_F^2  \notag \\
 &=&  \sum_{n = 1}^{N} \frac{\beta_n^2}{\alpha_n^2} \notag   \\
&\ge & N \sqrt[N]{ \frac{ \beta_1^2 \, \beta_2^2 \, \cdots  \beta_N^2}{\alpha_1^2 \alpha_2^2 \cdots \alpha_N^2}} =  N \beta^2 \, . \notag 
\ee
The inequality is between the arithmetic mean and the geometric mean, and the equality in it holds when all terms are equal each to the other, i.e.,  
\be
\frac{\beta_1}{\alpha_1} = \frac{\beta_2}{\alpha_2} = \cdots =  \frac{\beta_N}{\alpha_N} = \sqrt[N]{\frac{\beta_1 \, \beta_2 \cdots \, \beta_N}{\alpha_1\, \alpha_2\, \cdots \alpha_N}} = \beta \, .
\ee
Hence $\alpha_n = \frac{\beta_n}{\beta}$. This completes the proof.
 \end{proof}

\section{Rotation method for Sensitivity Correction}\label{sec::rotation_full}

This section presents the complete derivation of the Rotation algorithm in Section~\ref{sec::rotation}. Due to space limitations, we present a brief derivation of the Rotation method in the main manuscript.

Due to non uniqueness of the model up to rotation, we can rotate core tensors by invertible matrices such that the new representation of the TC tensor has minimum sensitivity 
\be
\tY = \circlellbracket \bQ_N^{-1} \, \ttprod \tA_1 \ttprod \bQ_1, \bQ_1^{-1} \ttprod \tA_2 \ttprod \bQ_2, \bQ_2^{-1} \ttprod \tA_3 \ttprod \bQ_3, \ldots, \bQ_{N-1}^{-1} \tA_N \bQ_N  \circlerrbracket \,.
\ee
For simplicity, we derive the algorithm to find the optimal matrix, $\bQ$ of size $R_2 \times R_2$ which rotates the first two core tensors, $\tA_1$ and $\tA_2$, and gives a new equivalent TC tensor $\tY_{\bQ} = \circlellbracket \tA_1 \ttprod \bQ, \bQ^{-1} \ttprod \tA_2, \tA_3, \ldots , \tA_N  \circlerrbracket$.

The optimal matrix $\bQ$  minimizes the sensitivity of $\tY_{\bQ}$
\be
\min_{\bQ} \quad ss(\tY_{\bQ}) = I_1 \|\bQ^{-1} \tA_{-1}\|_F^2  + I_2 \|\tA_{-2} \bQ \|_F^2 + \sum_{n = 3}^{N} I_n \|\tA_{-n}\|_F^2 \,\label{eq_rotss_N}
\ee
where $\tA_{-n} = \tA_{n+1} \ttprod \cdots \ttprod \tA_N \ttprod \tA_1 \ttprod \cdots \ttprod \tA_{n-1}$.

We next define two matrices, $\bX_1$ of size $R_3 \times R_3$ and $\bX_2$ of size $R_1 \times R_1$, as self contraction of the tensor $\tA_{-(1,2)} = \tA_{3} \ttprod \cdots \ttprod \tA_N$ along all modes but mode-1 and mode-$N$, respectively \be
\bX_1 = [\tA_{-(1,2)}]_{(1)} [\tA_{-(1,2)}]_{(1)}^T \,, \quad 
\bX_2 = [\tA_{-(1,2)}]_{(N)} [\tA_{-(1,2)}]_{(N)}^T \quad 
\ee
and two square matrices, $\bT_1$ and $\bT_2$, of size $R_2 \times R_2$
\be
\bT_1 &=& \sum_{i_2 = 1}^{I_2}  \tA_2(:,i_2,:) \bX_1 \tA_2(:,i_2,:)^T  , \qquad \quad
\bT_2 = \sum_{i_1 = 1}^{I_1}  \tA_1(:,i_1,:)^T \bX_2 \tA_1(:,i_1,:) \,.
\ee
See illustration for efficient computation of $\bX_1$ and $\bX_2$ in Figure~\ref{fig_X12}.

\begin{figure}[t]
\centering
\subfigure[$\bX_1$: self-contraction of the TT-tensor, $\tA_{3:N}$, along modes-$2,3,\ldots, N$]{\includegraphics[width=.45\linewidth,trim = 0.0cm 6cm 0cm 0cm,clip=true]{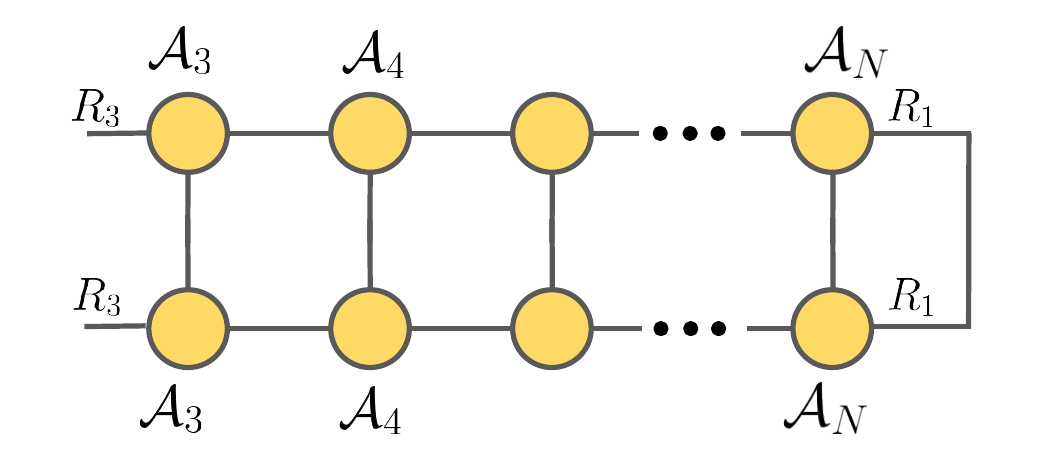}\label{fig:X1}}
\hfill
\subfigure[$\bX_2$: self-contraction of the TT-tensor, $\tA_{3:N}$, along modes-$1,3,\ldots, N-1$]{\includegraphics[width=.45\linewidth, trim = 0.0cm 0cm 0cm 6cm,clip=true]{TC/selfcontraction_X}\label{fig:X2}}
\caption{Computation of $\bX_1$ and $\bX_2$ as self-contraction of the tensor $\tA_{-(1,2)} = \tA_{3:N} = \tA_3 \ttprod \tA_4 \ttprod \cdots \ttprod \tA_N$. For efficient computation, we can compute self-contraction of core tensors $\tA_3$, $\tA_4$, \ldots, $\tA_N$ along their second modes to give matrices, $\bB_n = \sum_{i = 1}^{I_n}
 \tA_n(:,i,:) \otimes \tA_n(:,i,:)$ of size $R_n^2 \times R_{n+1}^2$. Then $\bX_1$ is given by $\vtr{\bX_1} = \bB_3 \cdots \bB_{N-1} \bB_{N} \1$
 and $\vtr{\bX_2} = \bB_N^T \bB_{N-1}^T \cdots \bB_{4}^T  \bB_3^T \1$.
}\label{fig_X12}
\end{figure}

We represent the matrix $\bQ\bQ^T = \bU \bS \bU^T$ in form of its eigenvalue decomposition (EVD), where $\bU$ is an orthogonal matrix of size $R_2 \times R_2$ and $\bS = \diag(s_1, \ldots, s_{R_2})$. The sensitivity in (\ref{eq_rotss_N}) is then computed as
\be
ss(\tY_{\bQ}) &=& I_1 \|\bQ^{-1} \tA_{-1}\|_F^2  + I_2 \|\tA_{-2} \bQ \|_F^2 + \sum_{n = 3}^{N} I_n \|\tA_{-n}\|_F^2  \notag \\
&=&  \sum_{n = 3}^{N} I_n \|\tA_{-n}\|_F^2  + I_1 \tr( \bT_1 \bQ^{-1} \bQ^{-1 T})  + I_2 \tr(\bT_2 \bQ \bQ^T)  \notag \\
&=& \sum_{n = 3}^{N} I_n \|\tA_{-n}\|_F^2  + I_1 \tr( (\bU^T \bT_1 \bU) \bS^{-1} )  + I_2 \tr((\bU^T \bT_2 \bU) \bS)  \notag .
\ee
Instead of seeking $\bQ$, we find an orthogonal matrix $\bU$ and a diagonal matrix $\bS$
\be
ss(\tY_{\bQ}) =  \sum_{n = 3}^{N} I_n \|\tA_{-n}\|_F^2  + \sum_{r = 1}^{R_2} I_1 (\bu_{r}^T \bT_1 \bu_{r}) \frac{1}{s_{r}} + I_2 (\bu_{r}^T \bT_2 \bu_{r}) s_r \notag \\
\ge 
\sum_{n = 3}^{N} I_n \|\tA_{-n}\|_F^2  + \sum_{r = 1}^{R_2} 2 \sqrt{I_1I_2  (\bu_{r}^T \bT_1 \bu_{r}) (\bu_{r}^T \bT_2 \bu_{r}) } \,.
\ee
The equality holds when 
\be
\displaystyle s_r^{\star} = \sqrt{ \frac{I_1 \bu_r^T \bT_1 \bu_r}{I_2 \bu_r^T \bT_2 \bu_r}}
\ee
for $r = 1, \ldots, R_2$. Given the optimal $s_r^{\star}$, we find the orthogonal matrix $\bU$ in the following optimization problem
\be
\min_{\bU \in  St_{R_2}} \quad \sum_{r = 1}^{R_2} \sqrt{(\bu_{r}^T \bT_1 \bu_{r}) (\bu_{r}^T \bT_2 \bu_{r})} \,,
\label{eq_obj_U_2}
\ee
which can be solved using the conjugate gradient algorithm on the Stiefel manifold \cite{Zaiwen_2012}.

{\bf Initialization.} Applying the Cauchy-Schwarz inequality, the objective function in (\ref{eq_obj_U}) is bounded above by $\frac{1}{2} \sum_{r = 1}^{R_2} (\bu_{r}^T \bT_1 \bu_{r}) + (\bu_{r}^T \bT_2 \bu_{r}) = \frac{1}{2} \tr(\bU^T (\bT_1 + \bT_2) \bU)$. We can initialize $\bU$ by eigenvectors of $(\bT_1 + \bT_2)$.

The rotation method is then applied to the next pair $\tA_2$ and $\tA_3$, $\tA_3$ and $\tA_4$, \ldots, $\tA_N$ and $\tA_1$, \ldots until the update reaches a stopping criterion.
Pseudo-codes of the proposed algorithm for order-3 TC are listed in Algorithm~\ref{alg_rotation}.

\begin{algorithm}[H] 
\caption{\tt{Rotation Method for SSC}\label{alg_rotation}}
\DontPrintSemicolon \SetFillComment \SetSideCommentRight
\KwIn{TC tensor $\tY = \circlellbracket  \tA, \tB, \tC \circlerrbracket$:  $(I_1 \times I_2 \times I_3)$,  and bond dimensions $R$}
\KwOut{$\hat{\tY} = \tY$ such that 
$\min \quad ss(\hat{\tY})$}
\resumenumbering
\Begin{
\Repeat{a stopping criterion is met}{
\For{$n = 1, 2, 3$}
	{
	    $\bT_1 = \sum_{i_2 = 1}^{I_2}  \bB_{i_2} (\bC_{(1)} \bC_{(1)}^T) \bB_{i_2}^T$, \; 
        $\bT_2 = \sum_{i_1 = 1}^{I_1}  \bA_{i_1}^T (\bC_{(3)} \bC_{(3)}^T) \bA_{i_1}$\;
 
	    Solve $\bU^{*} =  \arg\min_{\bU \in  St_{R_2}} \quad \sum_{r = 1}^{R_2} \sqrt{(\bu_{r}^T \bT_1 \bu_{r}) (\bu_{r}^T \bT_2 \bu_{r})}$\;
        
        \lFor{$r = 1 \ldots, R_2$}{ $\displaystyle s_r^{\star} = \sqrt{ \frac{I_1 \bu_r^T \bT_1 \bu_r}{I_2 \bu_r^T \bT_2 \bu_r}}$} 
        
        Rotate        $\tA \leftarrow \tA \ttprod \bU \diag(\sqrt{s_1}, \ldots,  \sqrt{s_{R_2}}, \ldots) \bU^T$
        
        Rotate $\tB \leftarrow \bU \diag(1/\sqrt{s_1}, \ldots, 1/\sqrt{s_{R_2}}, \ldots) \bU^T \ttprod \tB$
        
		Cyclic-shift of dimensions $\hat{\tY} = \circlellbracket  \tA, \tB, \tC \circlerrbracket \leftarrow \circlellbracket  \tB, \tC, \tA \circlerrbracket$\;
	}
    }
}
\end{algorithm}

\section{Sensitivity Correction for Higher Order TC}
\label{sec::ssc_highorder}

The optimization problem for Sensitivity correction for higher order TC is formulated in a similar form for TC of order-3, i.e., minimizing the SS of the model while keeping the approximation error bounded
\begin{align}
    \min  \quad & ss(\btheta) = \sum_n I_n \|\tA_{-n} \|_F^2 \,  \label{eq_ssmin_ho}\\
\text{s.t.}\quad &  c(\btheta) =  \|\tY - \hat{\tY}\|_F^2 \le \delta^2 \notag ,
\end{align}
where $\hat{\tY} =  \circlellbracket \tA_1, \tA_2,  \ldots, \tA_N  \circlerrbracket$ and
$\delta$ can be the approximation error of the current TC model, i.e., $\delta = \|\tY - \hat{\tY}_0\|_F$.

In order to update $\tA_1$, we rewrite the sensitivity function as function of the core tensor $\tA_1$ 
\begin{align}
    ss(\btheta) &= I_1 \|\tA_{-1}\|_F^2 +  \sum_{n = 2}^{N} I_n \|\tA_{-n}\|_F^2  \notag \\
    &= I_1 \|\tA_{-1}\|_F^2 +  \sum_{n = 2}^{N} I_{n} \|\tA_{n+1:N} \ttprod \tA_1 \ttprod \tA_{2:n-1} \|_F^2  \notag \\
    &= I_1 \|\tA_{-1}\|_F^2 +  \sum_{n = 2}^{N} I_{n} \,  \|\bL_{n}^T \ttprod \tA_1 \ttprod \bS_n \|_F^2  %
\notag \\
    &= I_1 \|\tA_{-1}\|_F^2 +  \tr(\bA_1^T \bA_1  \bQ_1)  \notag \\
\notag 
\end{align}
where $\bA_1$ is mode-2 unfolding of $\tA_1$ or the factor matrix of this core tensor in the equivalent TKD/TT decomposition, and  
\be
\bQ_1 = \sum_{n = 2}^{N} I_{n}\left(\bS_n \bS_n^T \otimes \bL_n \bL_n^T \right)
\ee
 $\tA_{n+1:N} = \tA_{n+1} \ttprod \tA_{n+2} \ttprod \cdots \ttprod \tA_N$ is a TT-tensor of order $(N-n+2)$ and size $R_{n+1} \times I_{n+1} \times I_{n+2} \times \cdots \times I_N \times R_1$, 
and 
$\tA_{2:n-1} = \tA_2 \ttprod \tA_{3} \ttprod \cdots \ttprod \tA_{n-1}$ is a TT-tensor of order $n$ and size $R_2 \times I_2 \times I_{3} \times \cdots \times I_{n-1} \times R_n$.
$\bL_n$ is unfolding along the last mode of $\tA_{n+1:N}$,  
and $\bS_n$ is unfolding along the first mode of $\tA_{2:n-1}$.

The product $\bL_n \bL_n^T$ is self-contraction of the TT-tensor, $\tA_{n+1:N}$, along all modes but the last mode.
$\bS_n \bS_n^T$ is self-contraction of the TT-tensor, $\tA_{2:n-1}$, along all modes but the first mode. Efficient computation of similar self-contraction product is explained in Figure~\ref{fig_X12}.

Next, we define $\bZ$ unfolding along with the first and last mode of the tensor $\tA_{-1}$. The optimization problem in (\ref{eq_ssmin_ho}) is rewritten as constrained quadratic programming 
\be
\min_{\bA_1} \quad &\tr(\bA_1^T \bA_1 \bQ_1) + I_1 \|\tA_{-1}\|_F^2 \\
\text{s.t} \quad & \|\bY_{(1)} - \bA_1 \bZ^T \|_F^2 \le \delta^2 \notag,
\ee
$\bY_{(1)}$ is mode-1 unfolding of $\tY$.
We note that the TT-tensor $\tA_{-1}$ has no the term $\bA_1$.
The same update rule is applied to other core tensors. For tensors with mixing entries, the above optimization problem can be extended by incorporating a binary indicator tensor $\tW$ in the constraint function, i.e.
$\|\bW_{(1)}.*(\bY_{(1)} - \bA_1 \bZ^T)\|_F^2 \le \delta^2$. Elements of the tensor $\tW$ specify the missing elements by zeroes, and ones for the observed ones. Decomposition of incomplete data is not in the main focus of our paper.  

\section{TC Layer Implementation}
\label{sec::tclayer_2}
Our implementation of TC-layer and CP-layer is shown in Figure~\ref{fig:decomposed_layers}.

\begin{figure}[!ht]
\centering
\subfigure[]{\includegraphics[width=.7\linewidth]{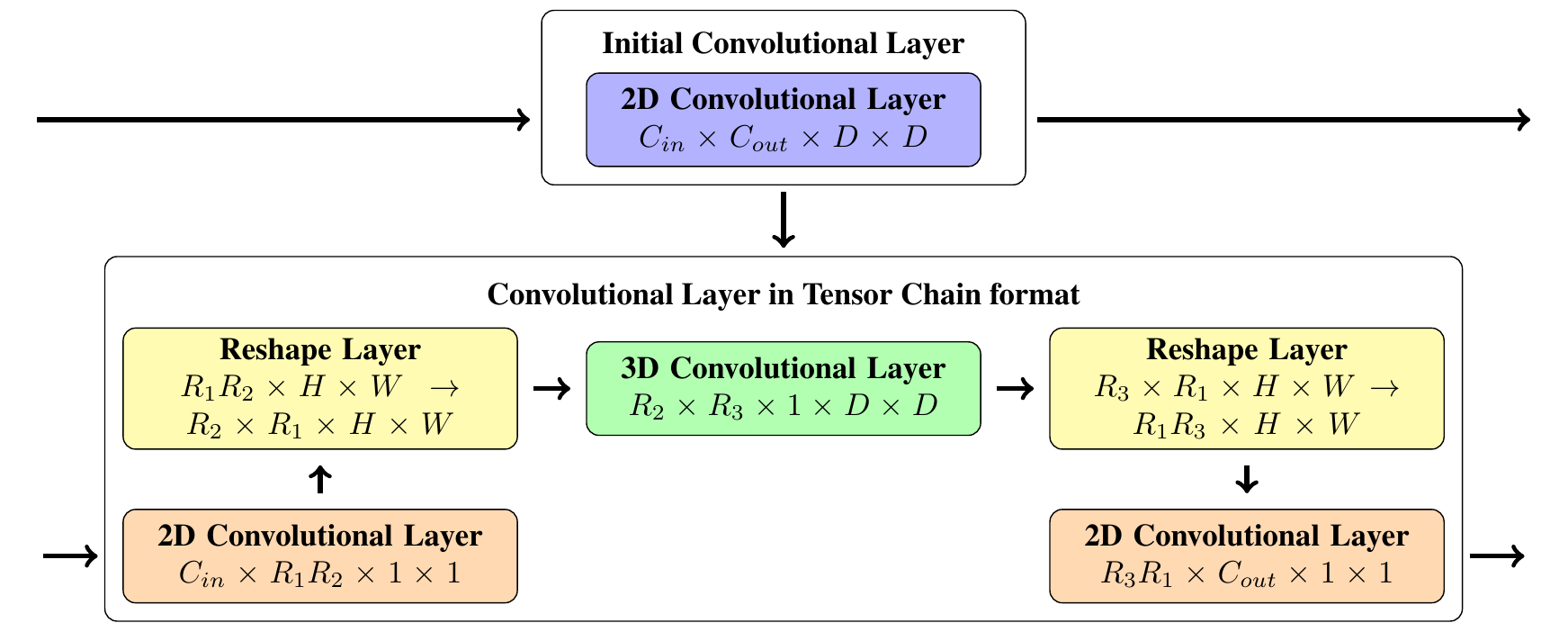}\label{fig:tc_layer_structure}}
\hfill
\subfigure[]{\includegraphics[width=.7\linewidth]{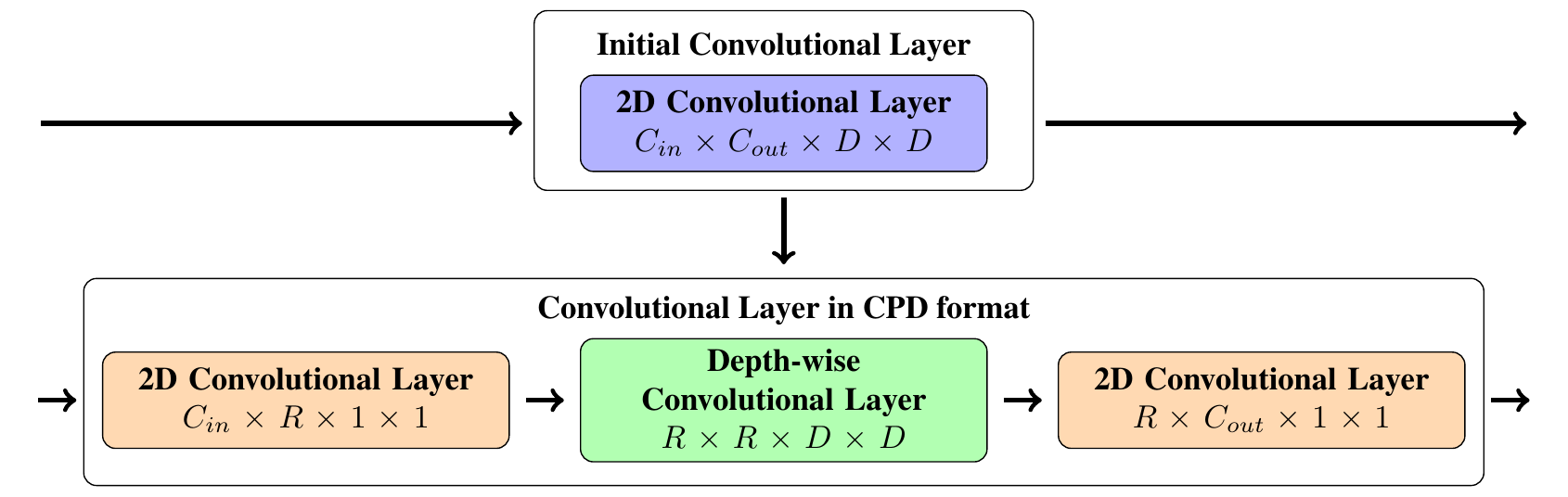}\label{fig:cp_layer_structure}}

    \caption{Graphical illustration to the proposed layer formats that show how decomposed factors are used as new weights of the compressed layer. $C_{in}$,$C_{out}$ are the number of input and output channels,  $D$ is a kernel size. (a) TC layer format, $R_1, R_2, R_3$ are TC ranks, $H$ and $W$ are the input dimensions. (b) CPD layer format, $R$ is a CPD rank}
    \label{fig:decomposed_layers}
\end{figure}

\subsection{TC Layer Python Implementation}
\newpage
\begin{python}

class TC_layer(nn.Module):
    def __init__(self, layer, factors):
        super(TC_layer, self).__init__()
        
        self.factors = [torch.tensor(U, dtype=torch.float32) for U in factors]
        self.c_in = self.factors[1].shape[1]
        self.c_out = self.factors[0].shape[1]
        self.r1 = self.factors[1].shape[0]
        self.r2 = self.factors[1].shape[2]
        self.r3 = self.factors[2].shape[2]
        
        self.h = int(np.sqrt(self.factors[2].shape[1]))
        self.w = int(np.sqrt(self.factors[2].shape[1]))
        
        self.padding = layer.padding
        self.stride = layer.stride
        self.dilation = layer.dilation
        self.kernel_size = layer.kernel_size
        self.is_bias = layer.bias is not None
        if self.is_bias:
            self.bias = layer.bias
        
        self.conv1 = nn.Conv2d(in_channels=self.c_in, out_channels=self.r1*self.r2,
                              kernel_size=(1, 1), bias=False)
        self.conv2 = nn.Conv3d(in_channels=self.r2, out_channels=self.r3, 
                              kernel_size=(1, self.h, self.w), 
                              padding=(0, self.padding[0], self.padding[1]), 
                              stride = (1, self.stride[0], self.stride[1]), bias=False)
        self.conv3 = nn.Conv2d(in_channels=self.r1*self.r3, out_channels=self.c_out,
                              kernel_size=(1, 1), bias=False)
        
        self.__replace__()
        
    def __replace__(self):
        C_out, C_in, C_ker = self.factors
        with torch.no_grad():
            self.conv1.weight = nn.Parameter(torch.tensor(C_in).permute(0, 2, 1).reshape(self.r1*self.r2, self.c_in, 1, 1))
            self.conv2.weight = nn.Parameter(torch.tensor(C_ker).permute(2, 0, 1).reshape(self.r3, self.r2, 1, self.h, self.w))
            self.conv3.weight = nn.Parameter(torch.tensor(C_out).permute(1, 2, 0).reshape(self.c_out, self.r1*self.r3, 1, 1))
            if self.is_bias:
                self.conv3.bias = nn.Parameter(self.bias)

    def forward(self, x):
        out1 = self.conv1(x)
        H, W = out1.shape[2], out1.shape[3]
        out1_reshaped = out1.view((-1, self.r1, self.r2, H, W)).permute(0, 2, 1, 3, 4)
        out2 = self.conv2(out1_reshaped)
        out2_reshaped = out2.permute(0, 2, 1, 3, 4).reshape((-1, self.r1*self.r3, 
                                                             int(H / self.stride[0]), 
                                                             int(W / self.stride[1])))
        out3 = self.conv3(out2_reshaped)
        return out3
\end{python}

\newpage


\section{Additional Experimental Results}
Due to space limitations, some figures for Examples in the main manuscript are presented in Appendix. We also provide more examples and more convincing comparison between our proposed method and the existing algorithms for TC and BTD with shared coefficients.
Examples in this manuscript are summarized in Table~\ref{tab_examples}.

\begin{table}[t]
    \centering
    \caption{List of Examples.}
    \label{tab_examples}
    \begin{tabular}{c l l l  }
    {\bf Ex.no.} & {\bf  Tensor size} & {\bf Description} & {\bf No. runs}  \\
    \hline
    \multicolumn{2}{l}{For synthetic tensors } \\
    \ref{ex_7x7x7} & $7 \times 7 \times 7$, bonds $(3-3-3)$ &  cores with bond exceeding dimensions & 10000 \\
    \ref{ex_27x27x27_rank25} & $27 \times 27 \times 27$, bonds $(5-5-5)$ & cores with collinear factor & 50 \\
    \ref{ex_incompletTC} & $9 \times 9 \times 9$, bond ($3-3-3$) & incomplete tensors with 50\% missing elements & 100\\
    \ref{ex::A1} &  & full version of Example~\ref{ex_7x7x7} \\
    \ref{ex_tcorder3_v2} &   $10 \times 10 \times 10$ with bond $(4-4-4)$ & extension of Example~\ref{ex_7x7x7} & 100\\
     & $15 \times 15 \times 15$ with bond $(6-6-6)$ & & 100\\
     & $20 \times 20 \times 20$ with bond $(8-8-8)$&& 100\\
     & $25 \times 25 \times 25$ with bond $(10-10-10)$&& 100\\
     & $30 \times 30 \times 30$ with bond $(12-12-12)$&& 100\\
     & $35 \times 35 \times 35$ with bond $(14-14-14)$&& 100\\
    \ref{ex_7x7x7x7_rank5} &  $7 \times 7 \times 7 \times 7$ with bond $(5-5-5-5)$  & \minitab{higher order tensors with \\bonds exceeding tensor dimensions} & 100 \\
    \ref{ex_27x27x27x27_rank25} &    $27 \times 27 \times 27 \times 27$ with bond ($5-5-5-5$)   & 
    \minitab{higher order tensors with \\ collinear components} & 50 \\
    \ref{ex_tcorder4_} &  \minitab{$I \times I \times I \times I$  with bond $(I-I-I-I)$\\
    where $I = 10, 15$} & higher order tensors & 200 \\
    \ref{ex_tcorder5_7} & \minitab{$7 \times 7 \times 7 \times 7 \times 7$ \\ with bond $13-13-\cdots-13$}  & order-5 tensors &  100\\
    & \minitab{$3 \times 3 \times 3 \times 3 \times 3 \times 3 \times 3$ \\ with bond $8-8-\cdots-8$}  & order-7 tensors& 100\\
    \hline
    \multicolumn{2}{l}{For images approximation} \\
    \ref{ex_image_tc} &  $128 \times 128 \times 3$ with bond $(R_1-R_2-R_1)$ & Six images of size $128 \times 128 \times 3$ & 2331  
    \\
    \hline
        \multicolumn{2}{l}{For compression of CNNs} \\
    \ref{ex_resnet92} & \minitab{Bonds are constrained so that the model size \\
    is close to one in CPD with rank-200} & \minitab{Single layer compression in ResNet-18 \\ trained on ILSVRC-12} \\
    \ref{ex_resnet18_cifar10_v2} & 
    \minitab{Kernels in layers 2-11 with bond $10-10-10$ \\
    Other kernels as in Example \ref{ex_resnet92}} & \minitab{Single layer compression in ResNet-18 \\ finetuned on CIFAR-10} \\
    \ref{ex_rsn18_chkpoint95} & Rank selection described in Example~\ref{sec::tclayer} & \minitab{Single layer compression in ResNet-18 \\ trained on CIFAR-10} \\
    \ref{ex_rsn18_chkpoint95_full} & As in Example~\ref{ex_rsn18_chkpoint95} + CPD as in \cite{PhanECCV2020} & \minitab{Full network compression of ResNet-18 \\ trained on CIFAR-10}\\
    \ref{ex::resnet18_ILSVRC}  & As in Example~\ref{ex_rsn18_chkpoint95} + CPD as in \cite{PhanECCV2020} & \minitab{Full network compression of ResNet-18 \\ and VGG-16 trained on ILSVRC-12}\\
    \ref{ex_resnet92_ext} &  & extended from Example~\ref{ex_resnet92} \\
    \ref{ex_resnet18_cifar10_v2_ext} & & extended from Examaple~\ref{ex_resnet18_cifar10_v2}
    \end{tabular}
    \label{tab:my_label}
\end{table}

\begin{figure}[t]
\centering
\subfigure[Convergence of ALS and SSC]{\includegraphics[width=.31\linewidth, trim = 0.0cm 0cm 0cm 0cm,clip=true]{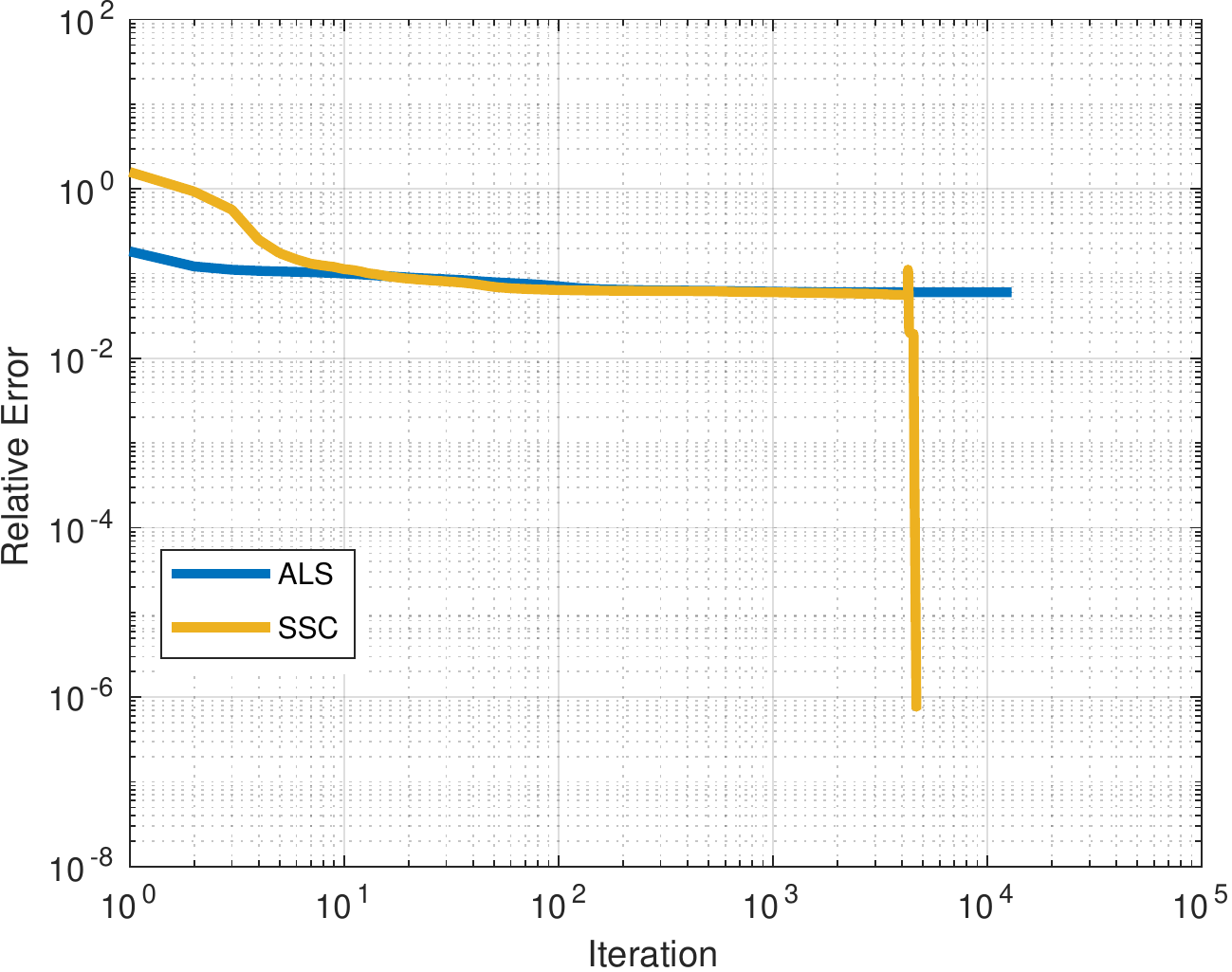}}
\subfigure[Success rate of ALS and SSC]{\includegraphics[width=.31\linewidth, trim = 0.0cm 0cm 0cm 0cm,clip=true]{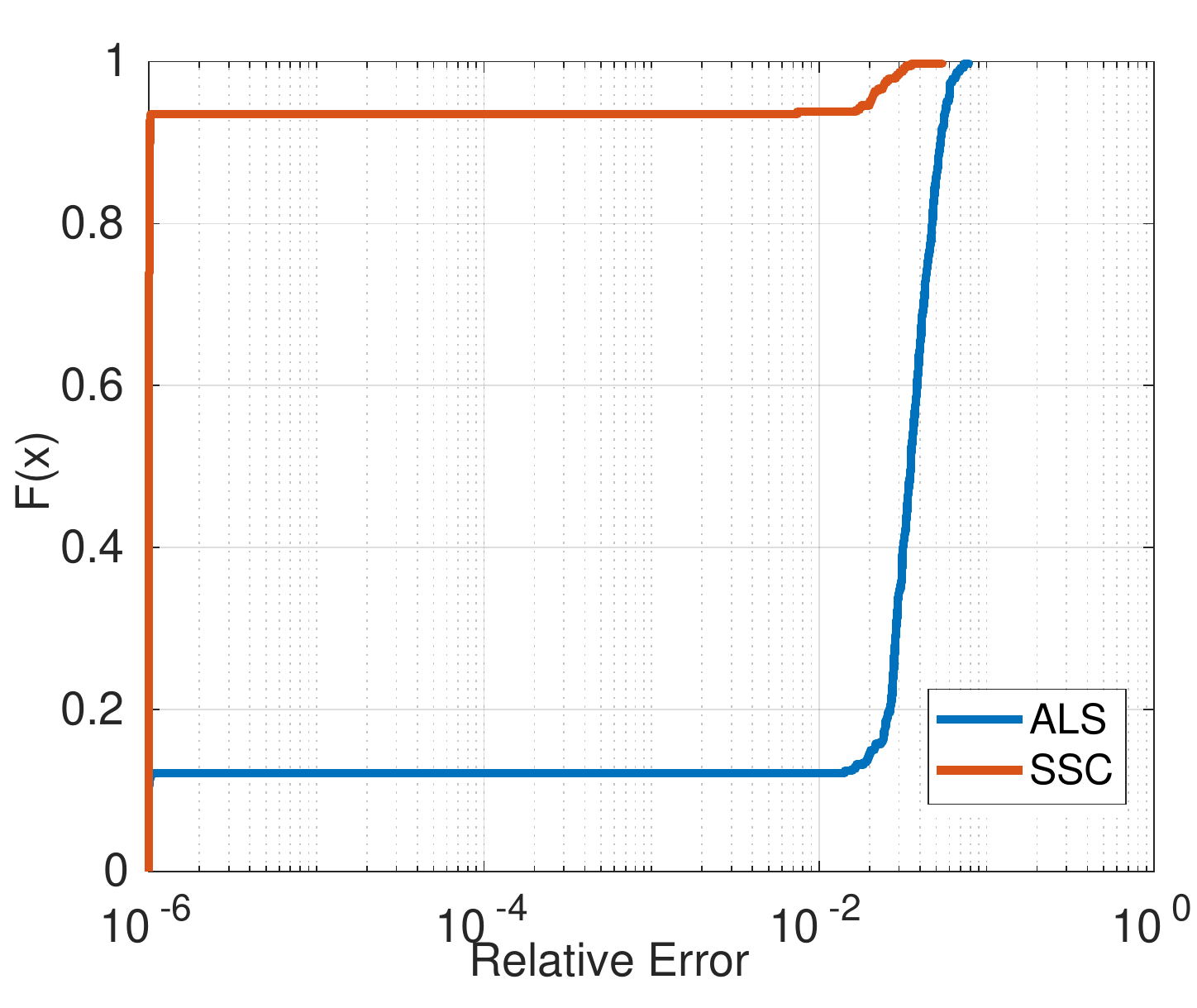}}
\subfigure[Sensitivity vs Relative error]{\includegraphics[width=.31\linewidth, trim = 0.0cm 0cm 0cm 0cm,clip=true]{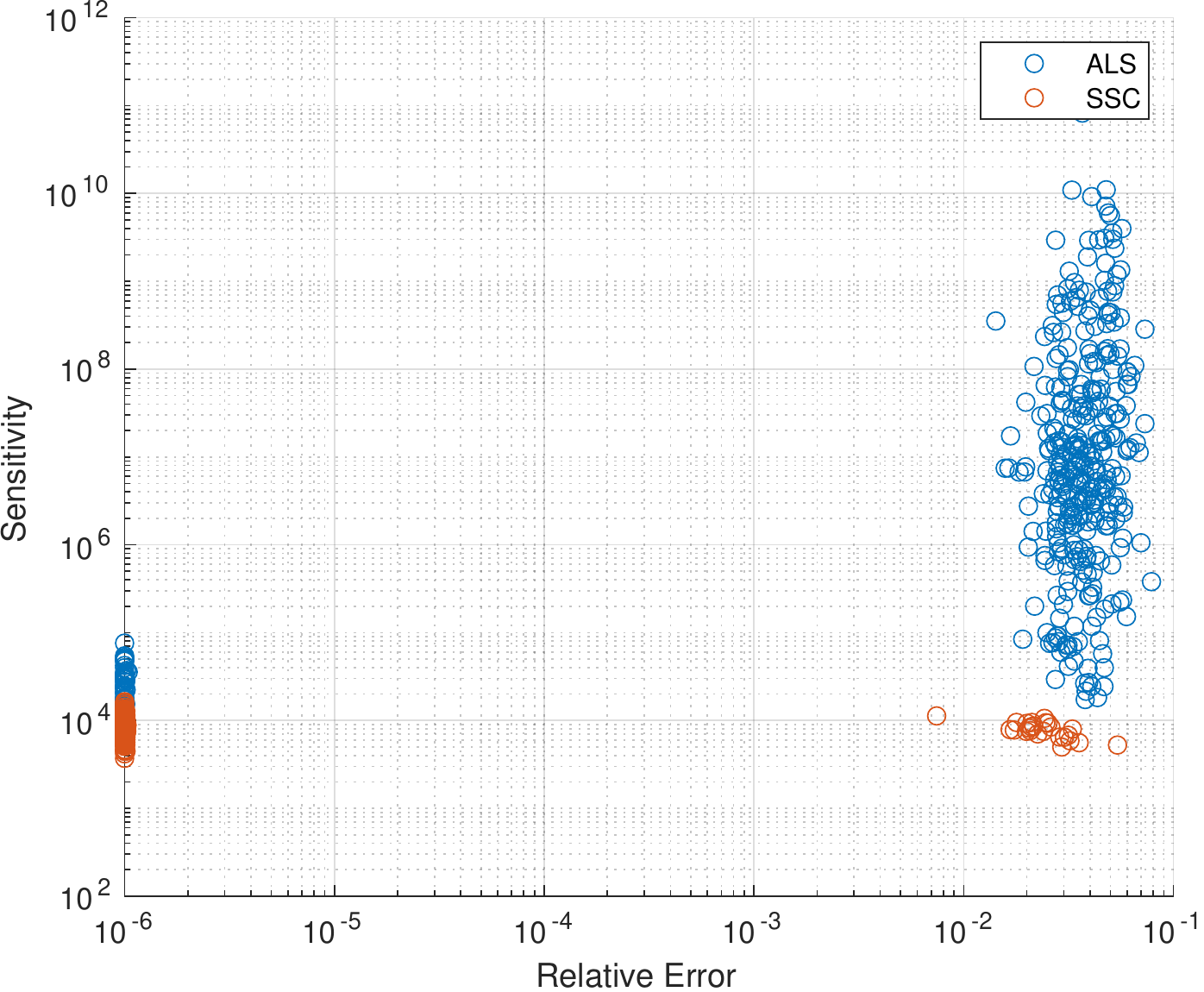}}
\caption{Performance comparison of ALS and SSC for TC decomposition in Example~\ref{ex::A1}. (a) the decomposition with SSC after 3000 ALS updates converges quickly to the exact model. (b) empirical CDF of relative errors as measure of success rate of the decomposition. SSC attains a success rate of 92\% in 6000 iterations, while ALS succeeds in less than 12\% with 13000 iterations. Note that the success rate of ALS in 5000 iterations is only 3\%.}
\label{fig_ex1_2}
\end{figure}

\subsection{Examples for decomposition of synthetic tensors under difficult scenarios}\label{sec::synexample}

\newcounter{appendixexample}
\renewcommand\theappendixexample{A\the\value{appendixexample}}
\renewenvironment{example}
{\refstepcounter{appendixexample}\vspace{10pt}\par\noindent\textbf{Example \theappendixexample\ }}{}

\begin{example}[Decomposition of tensor of size $7 \times 7 \times 7$ with bond $(3-3-3)$.] \label{ex::A1}

This is an extension of Example~\ref{ex_7x7x7} in the main manuscript. 
 For the same tensors as in Example~\ref{ex_7x7x7}, we applied sensitivity correction after 3000 ALS updates, then continued the decomposition, the ALS+SSC converged quickly to the exact solution.%
 Figure~\ref{fig_ex1_2}(a) compares convergences between ALS and SSC in one decomposition of the tensor. 
Similar convergence behavior can be observed in many other decompositions.
SSC improves the convergence of the TC decomposition and gives a success rate of 92\% as shown in  Figure~\ref{fig_ex1_2}(b). A decomposition is a success if it achieves a relative approximation error smaller than $10^{-6}$
\be
\frac{\|\tY - \hat{\tY}\|_F^2}{\|\tY\|_F^2} \le 10^{-6}\,.
\ee
We use the empirical cdf of the relative approximation error as a measure of the success rate. There are 10000 decompositions for 100 tensors; each tensor is decomposed 100 times with different initial points.

Figure~\ref{fig_ex1_2}(c) shows scatter plots of the sensitivity measures and relative approximation errors of estimated tensors. The results obtained by ALS often have very high sensitivity, making the algorithm hard to explain the data fully.

\end{example}

\begin{figure}[!t]
\centering
\subfigure[Tensor dimensions $I = 10$ with bond $R = 4$]{\includegraphics[width=.32\linewidth, trim = .0cm 0cm 0cm 0cm,clip=true]{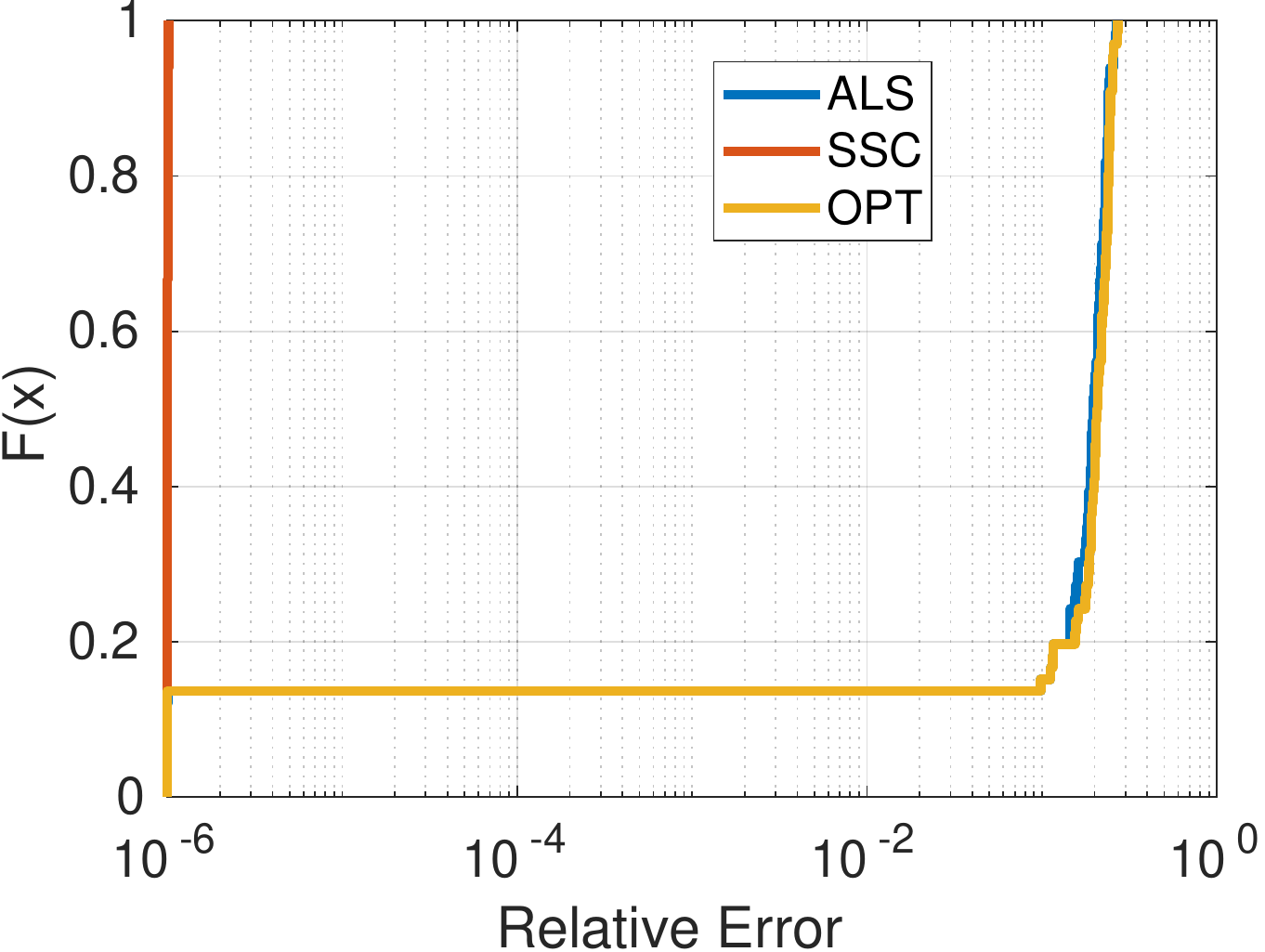}}  
 \subfigure[Tensor dimensions $I = 15$ with bond $R = 6$]{\includegraphics[width=.32\linewidth, trim = 0.0cm 0cm 0cm 0cm,clip=true]{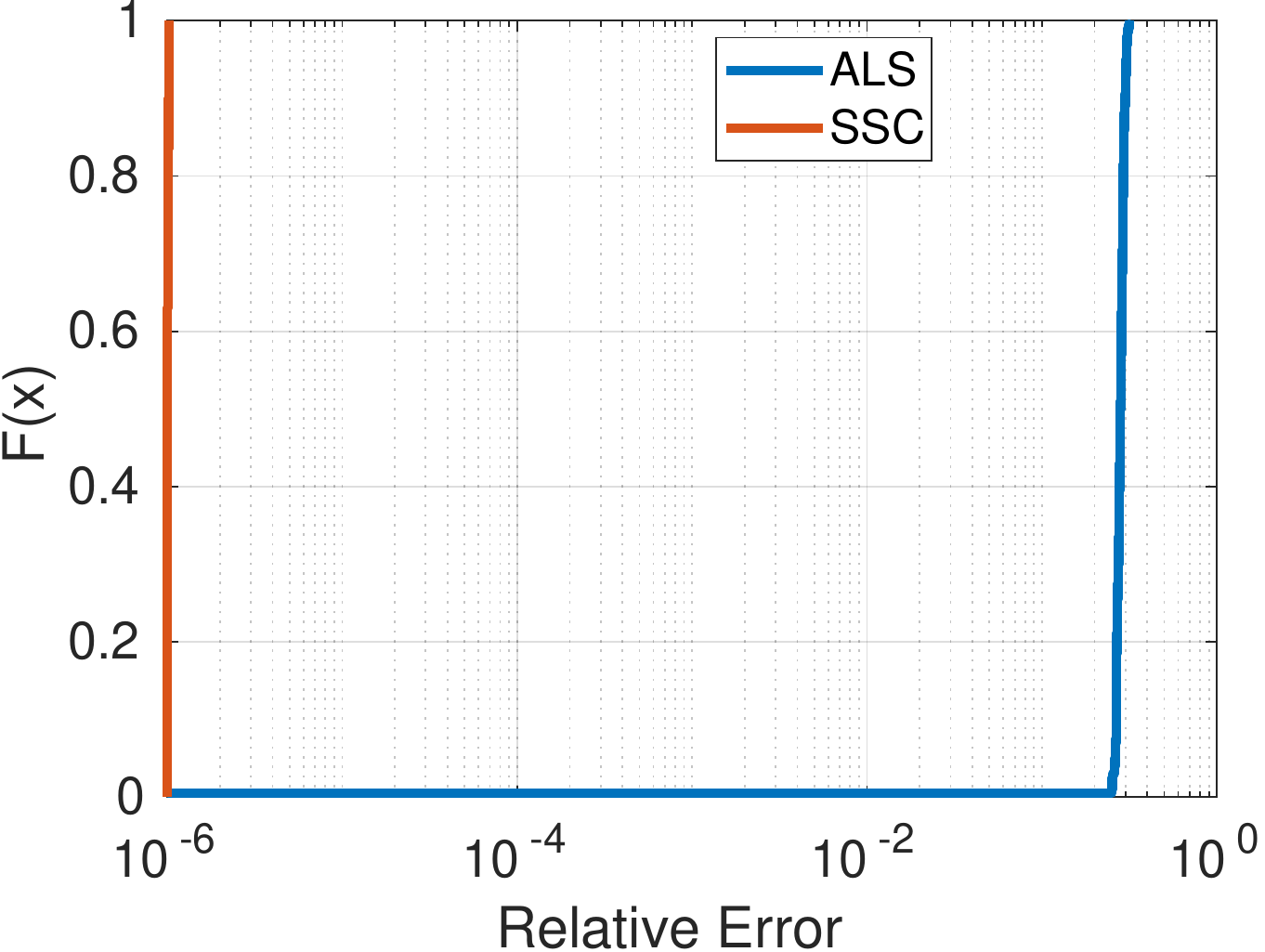}}
 \subfigure[Tensor dimensions $I = 20$ with bond $R = 8$]{\includegraphics[width=.32\linewidth, trim = 0.0cm 0cm 0cm 0cm,clip=true]{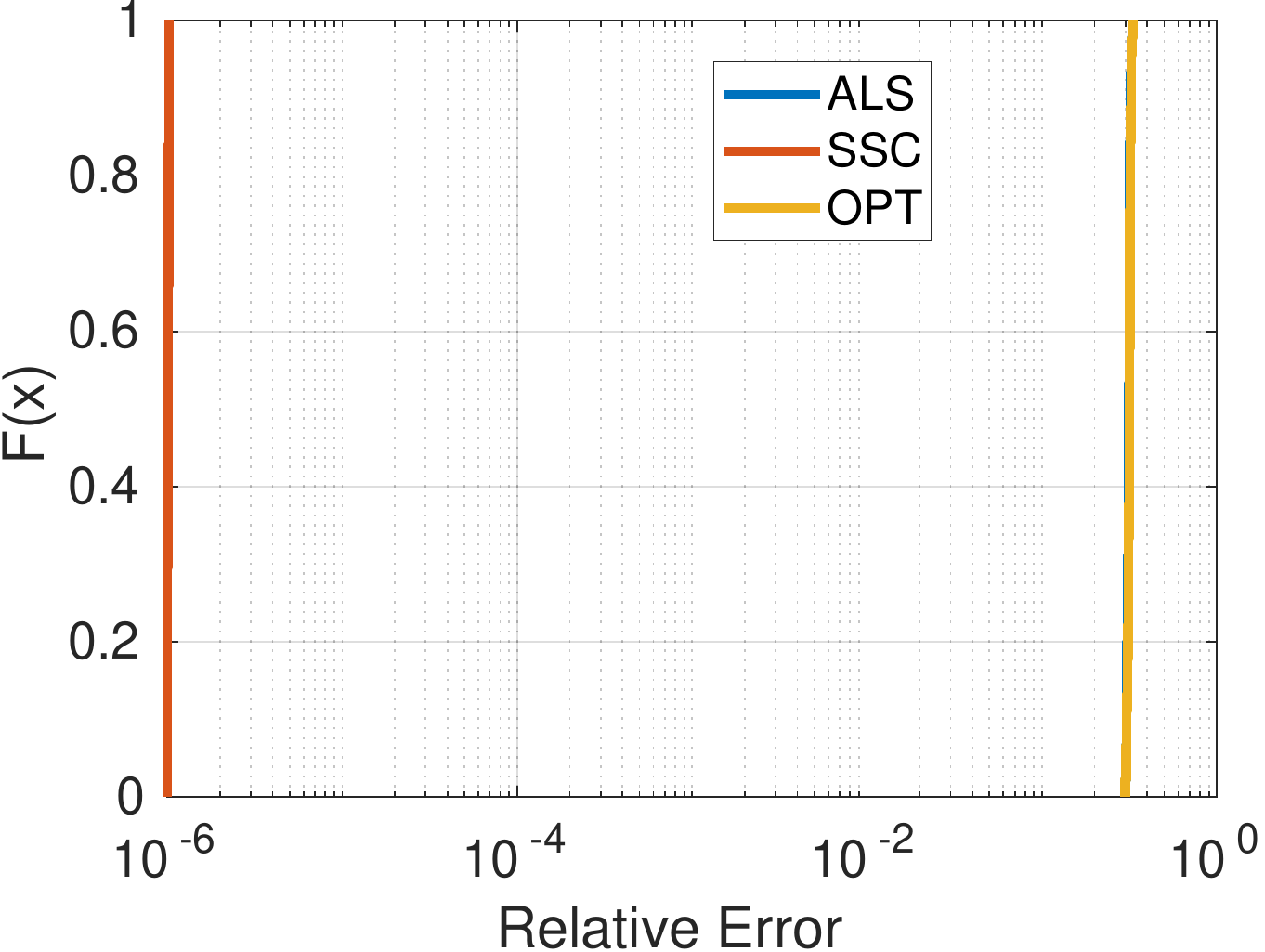}}
 \subfigure[Tensor dimensions $I = 25$ with bond $R = 10$]{\includegraphics[width=.32\linewidth, trim = 0cm 0cm 0cm 0cm,clip=true]{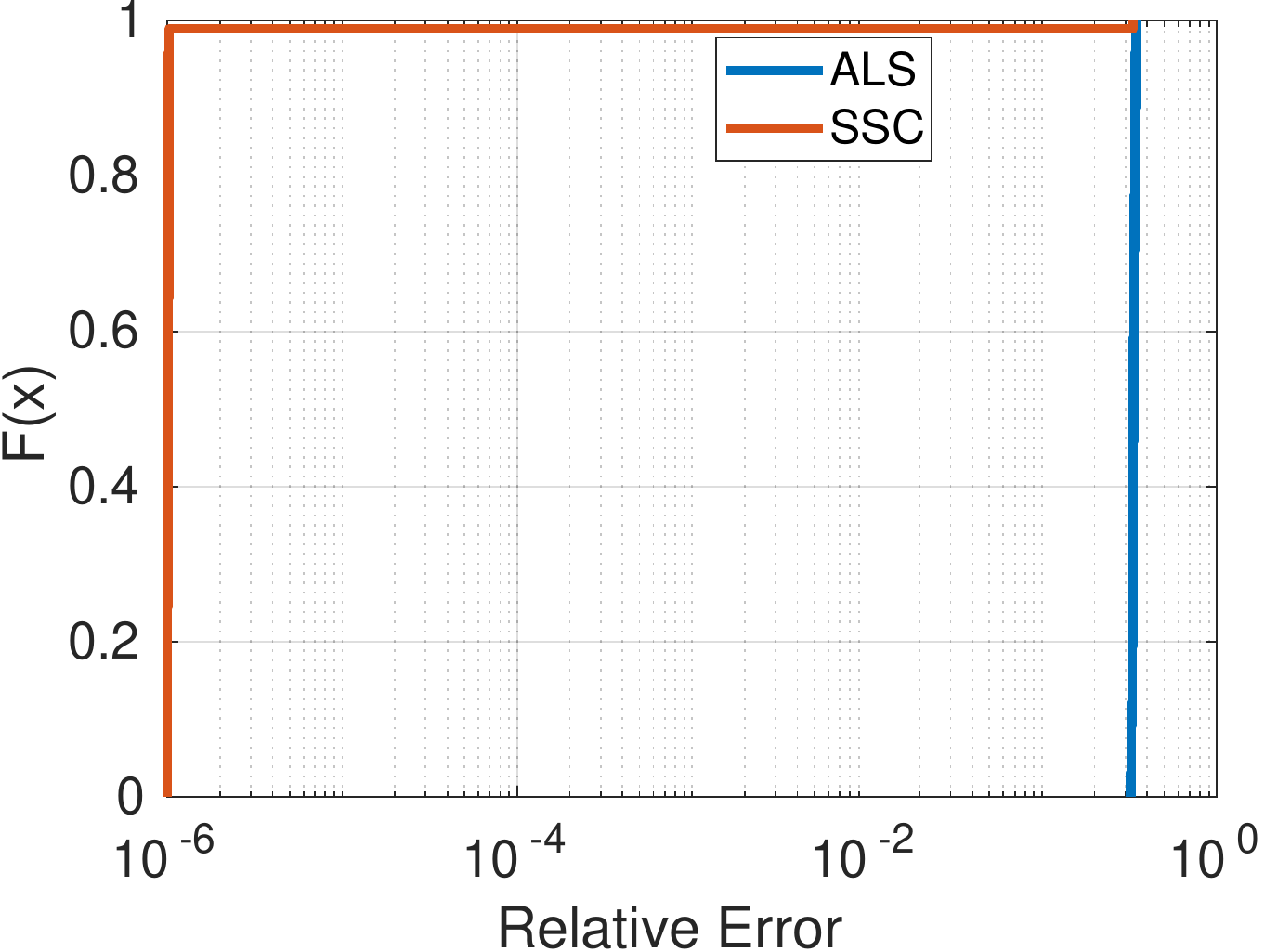}}
\subfigure[Tensor dimensions $I = 30$ with bond $R = 12$]{\includegraphics[width=.32\linewidth, trim = 0.0cm 0cm 0cm 0cm,clip=true]{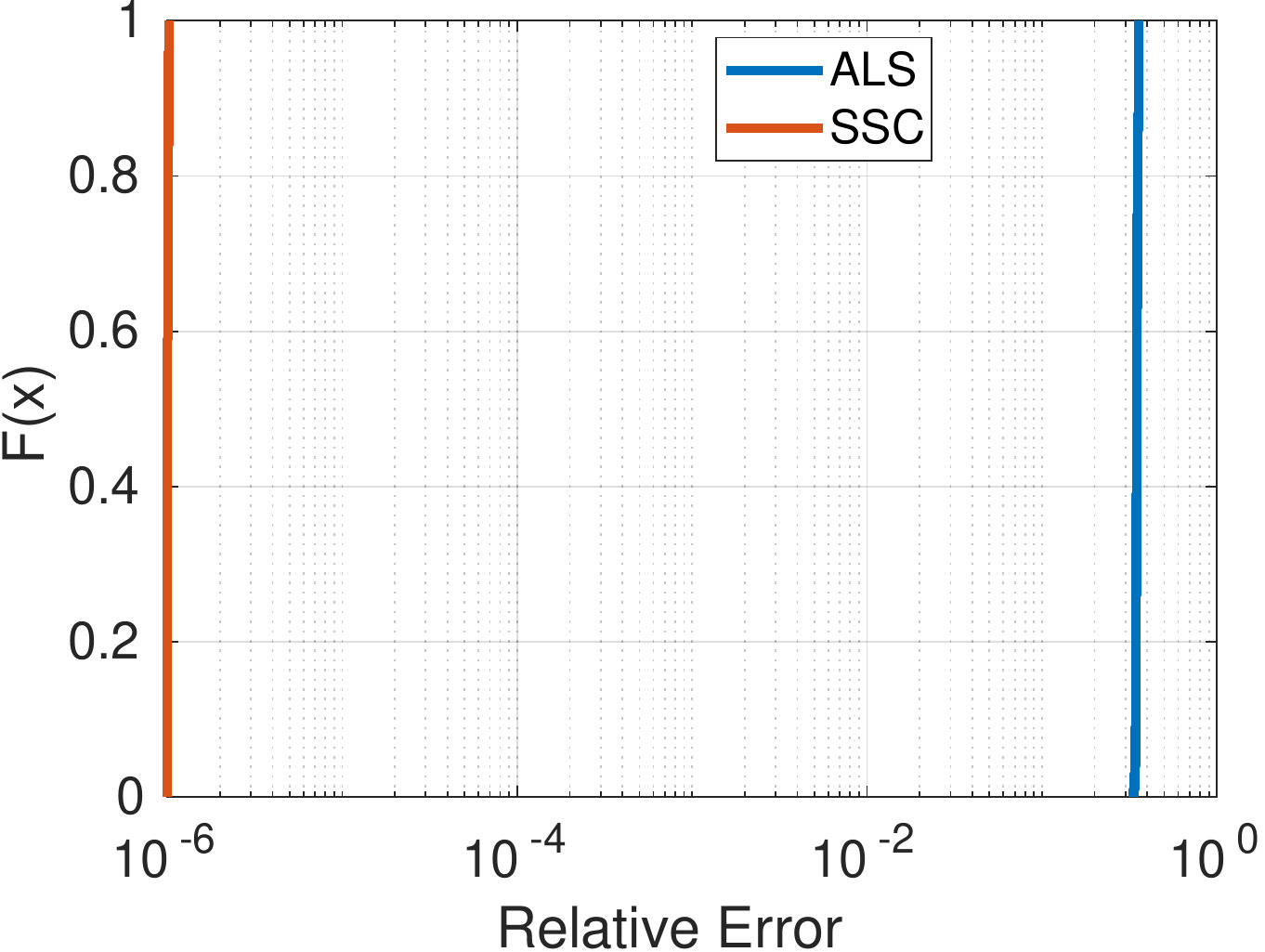}}
\subfigure[Tensor dimensions $I = 35$ with bond $R = 14$]{\includegraphics[width=.32\linewidth, trim = 0.0cm 0cm 0cm 0cm,clip=true]{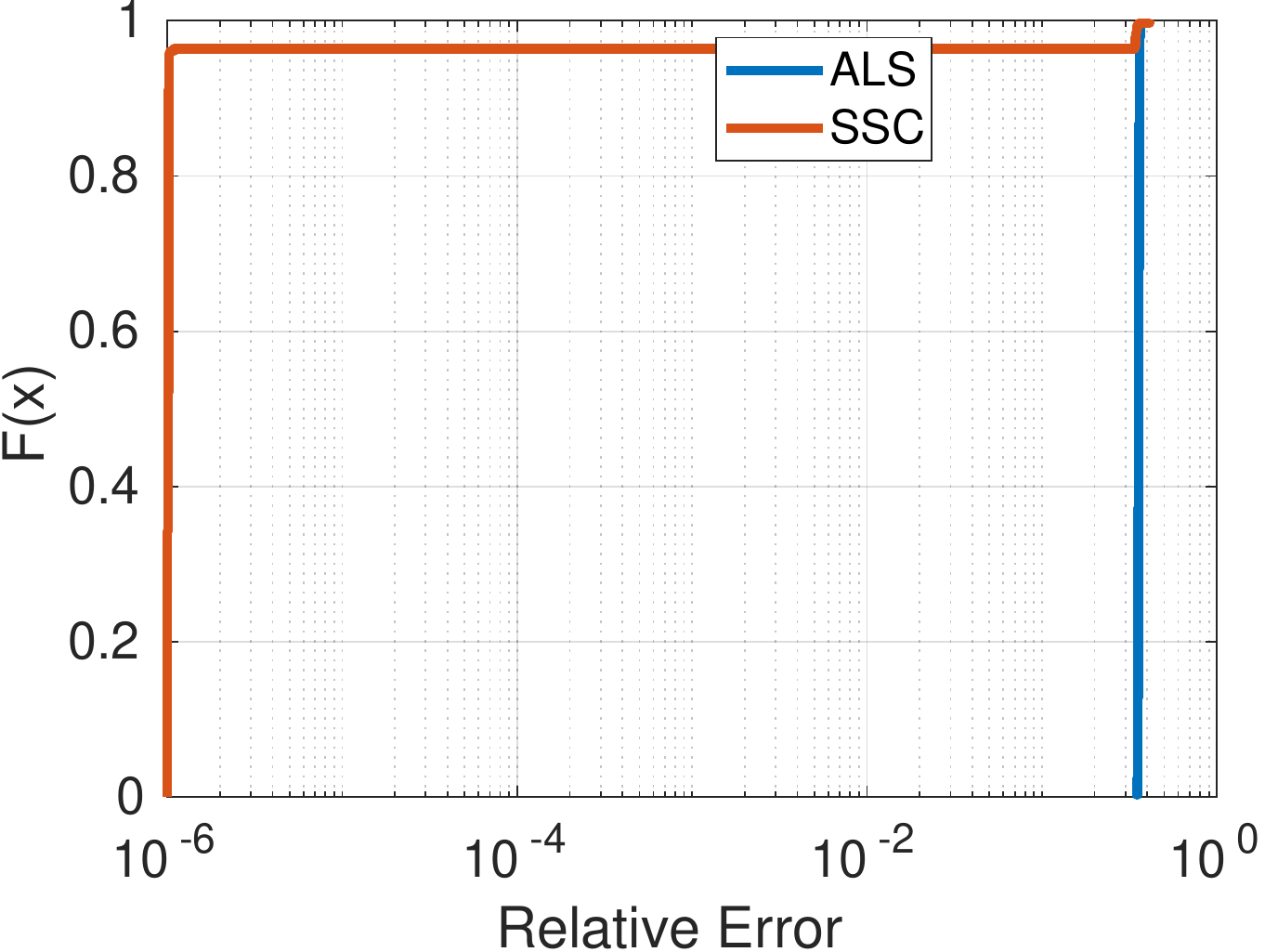}}
  \caption{Success rates for TC decomposition of tensors of size $I \times I \times I$ with bond dimensions $(R-R-R)$ in Example~\ref{ex_tcorder3_v2}. ALS and OPT failed in our example. We can decompose the tensors with very high success rates using the proposed algorithm for sensitivity correction and sensitivity control.}\label{fig_tcodrder3_v2}
\end{figure}

\begin{example}[More TC decomposition for tensors of order-3]\label{ex_tcorder3_v2}

Similar to Example~\ref{ex::A1}, we decomposed order-3 tensors of dimensions $I \times I \times I$ and bond dimensions $(R-R-R)$. We applied ALS and OPT algorithms in the examples and ran the two algorithms within 6000 iterations. SSC was used to correct the sensitivity of the estimated tensors after 1000 ALS updates.
Figure~\ref{fig_tcodrder3_v2} reports success rates of the considered algorithms. 
For tensors of size $10 \times 10 \times 10$ with bond dimensions $(4-4-4)$, ALS and OPT have low success rates, less than 14\% over 100 decompositions.
For bigger tensors with larger bond dimensions, the two algorithms completely fail. Since OPTs are more expensive than ALS and its performances are not much different from ALS, we provide simulation results for OPT for tensors with small sizes.

The proposed algorithm, SSC, has significantly improved the performances of the ALS algorithm. In all the tests, we obtained nearly perfect decomposition with relative approximation error less than $10^{-6}$. 
 \end{example}

\begin{example}[TC decompositions for order-4 tensors]\label{ex_7x7x7x7_rank5}

Similar to Example~\ref{ex::A1}, we show TC decomposition for order-4 tensors of size $7 \times 7 \times 7 \times 7$ with bond dimensions (5-5-5-5), i.e., the corresponding factor matrices 
have more columns than rows. Figure~\ref{fig_ex7777}(a) shows ALS and OPT\cite{TRWOPT,SAND2010-1422} got stuck in false local minima after several hundreds of iterations, whereas intensity (red curve) and sensitivity (blue curve) of the estimated tensors grow to $10^8$ shown in Figure~\ref{fig_ex7777}(b). 
This example confirms the instability problem for TC of higher order. 

We applied the Sensitivity correction method, and suppressed the sensitivity of the estimated tensor from $10^7$ to several hundred, then continued the decomposition from a new tensor with much smaller sensitivity. The correction method helped us achieving a perfect decomposition (see Figure~\ref{fig_ex7777}(a)). 
\end{example}

\begin{figure}\centering
 \subfigure[]{\includegraphics[width=.40\linewidth, trim = 3.0cm 8cm 4cm 8.6cm,clip=true]{./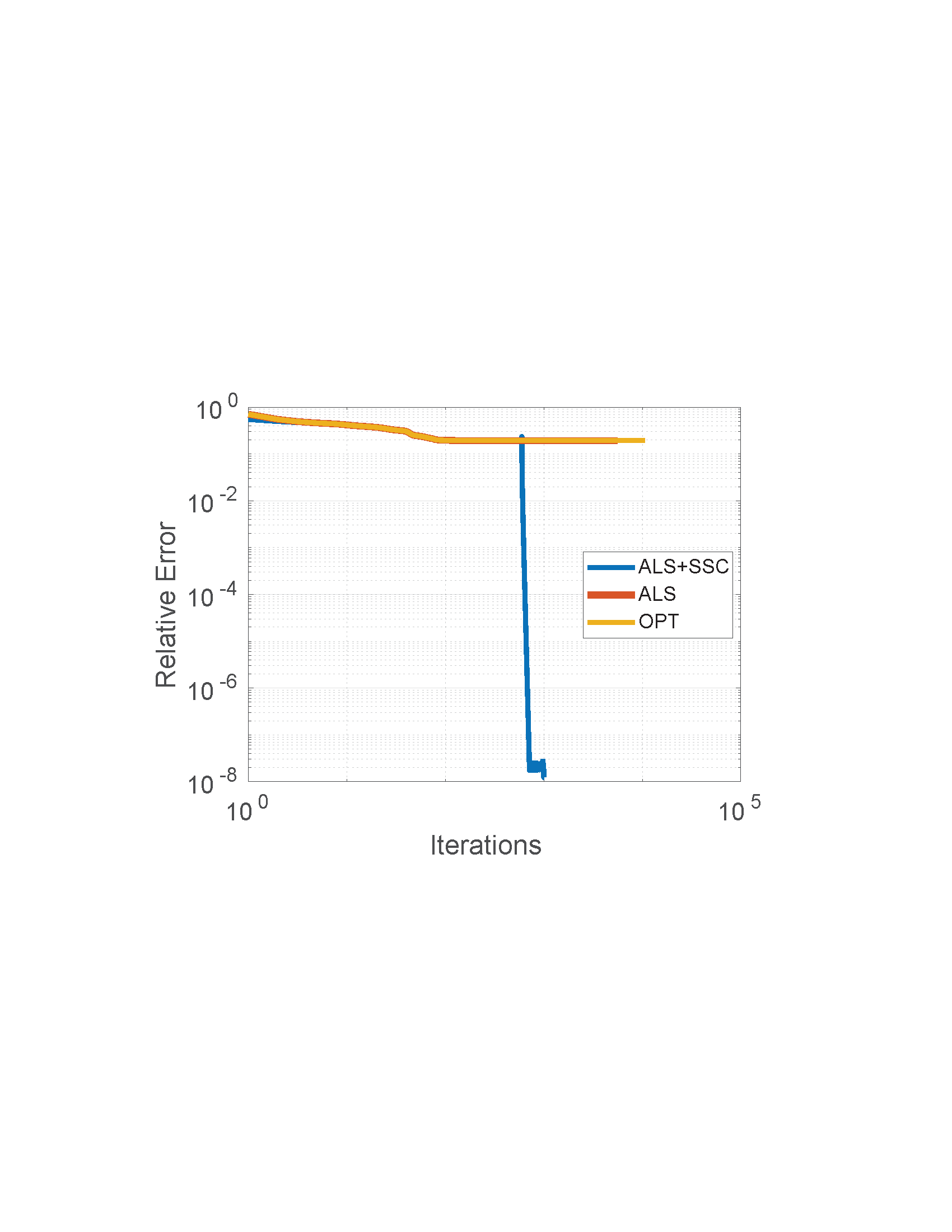}} 
 \subfigure[]{\includegraphics[width=.43\linewidth, trim = 3.0cm 8cm 3cm 8.5cm,clip=true]{./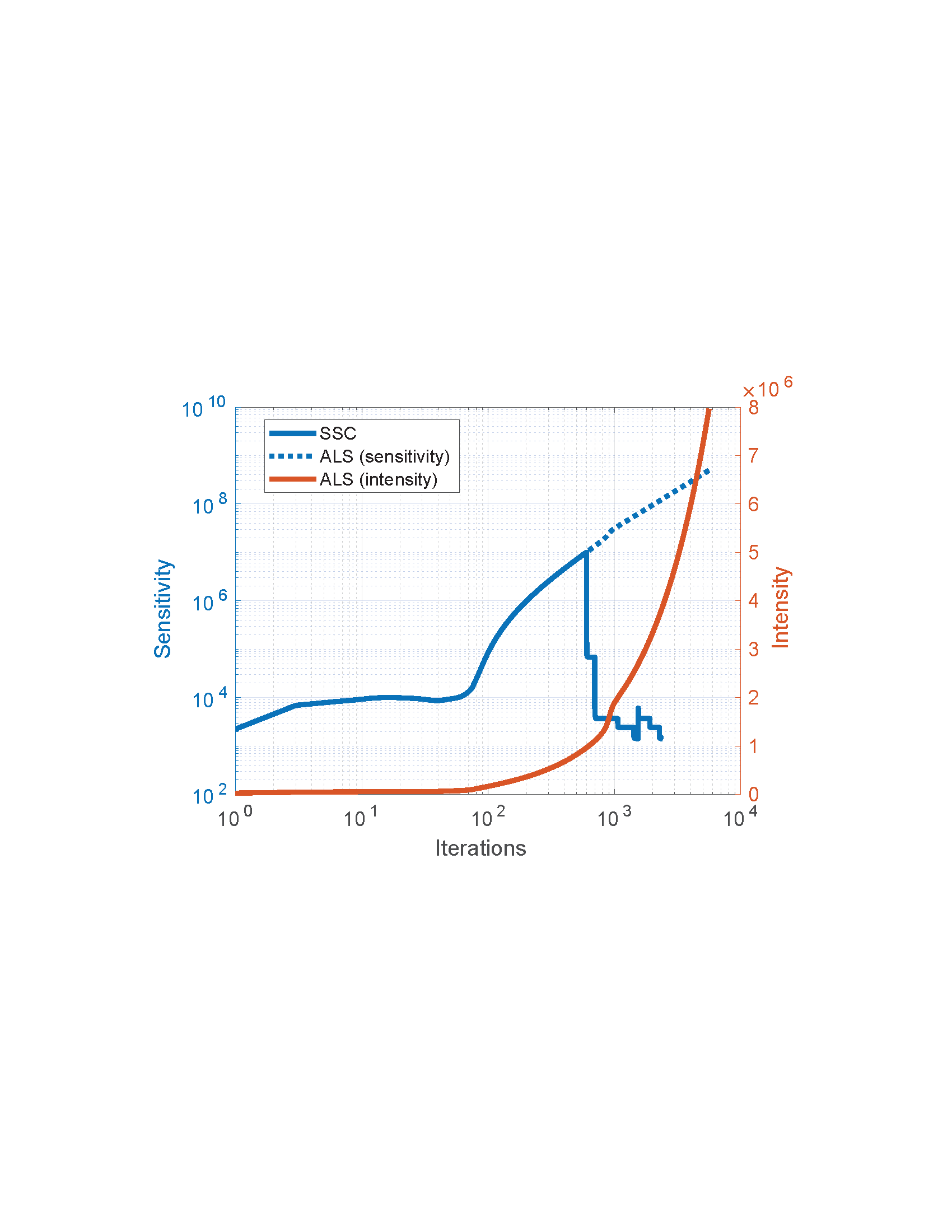}}
 \caption{Illustration for performances obtained by algorithms used in Example~\ref{ex_7x7x7x7_rank5}. 
 Tensors are of size $7 \times 7 \times 7 \times 7$ with bond $(5-5-5-5)$.
 After SS correction, we obtain a new estimated tensor with low sensitivity (see blue curve in (b)). TC decomposition (ALS+SS) quickly converged after the sensitivity correction (blue curve in (a)). }\label{fig_ex7777}
\end{figure}

\begin{figure}\centering
\subfigure[]{
 \includegraphics[width=.42\linewidth, trim = 3.0cm 8cm 4cm 8.6cm,clip=true]{./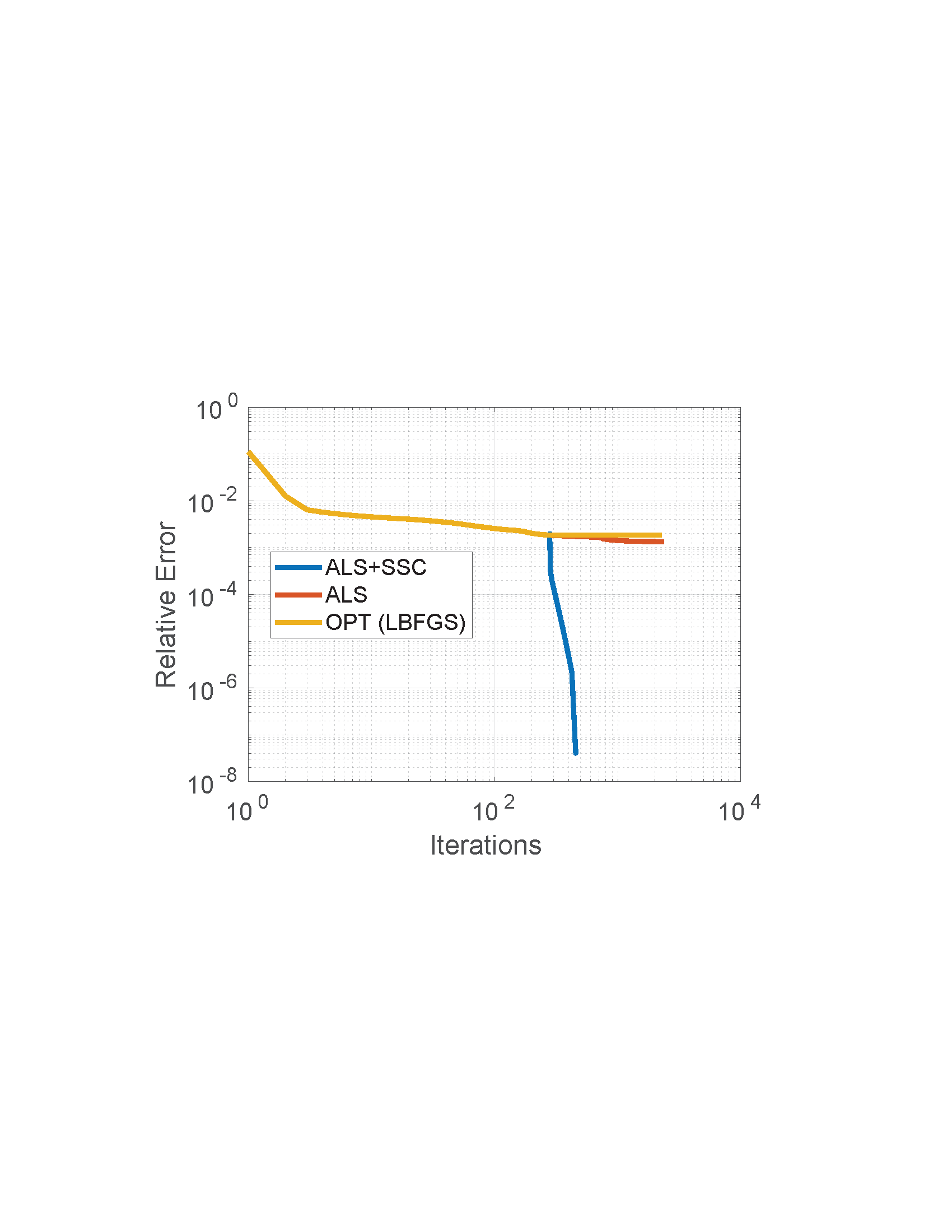}\label{fig_tc_collinear_N4_R25_err}} 
 \subfigure[]{\includegraphics[width=.45\linewidth, trim = 0cm 0cm 0cm 0cm,clip=true]{./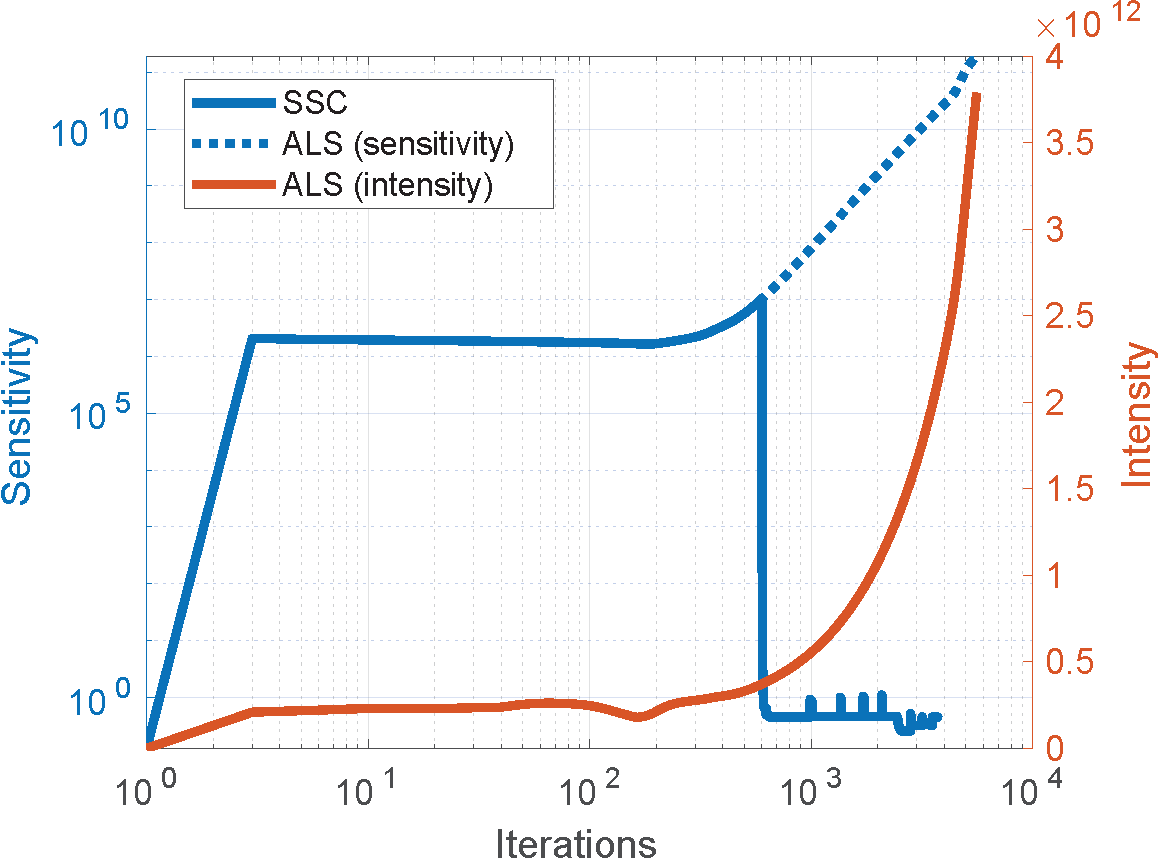}\label{fig_tc_collinear_N4_R25_ss}}
 \caption{Performances obtained by algorithms for decomposition of tensors of size $27 \times 27 \times 27 \times 27$ in Example~\ref{ex_27x27x27x27_rank25}.
 ALS+SS is the ALS plus sensitivity correction. Intensity of the estimated tensors using ALS exceeds $3.5 \, 10^{12}$, and the sensitivity of the tensor is greater than $2 \, 10^{10}$, while the algorithm converges slowly after 1000 iterations.
 }\label{fig_ex27272727}
\end{figure}

\begin{example}[TC decompositions for order-4 tensors with highly collinear components]\label{ex_27x27x27x27_rank25}

We demonstrate TC of higher order tensors, $\tY =  \circlellbracket \tU1, \tU_2, \tU_3, \tU_4\circlerrbracket$ which are similar to those in Example~\ref{ex_27x27x27_rank25} in the main text, but of order-4 with size $27 \times 27 \times 27 \times 27$ and bond dimensions (ranks) $(5-5-5-5)$. The factor matrices ${\bU}_n$ comprise three blocks, ${\bU}_n(:,1:9)$, ${\bU}_n(:,10:18)$ and ${\bU}_n(:,19:25)$, columns in each block are highly collinear. Figure~\ref{fig_tc_collinear_N4_R25_err} shows that the relative approximation errors using ALS are far from zero. Similar to the previous examples, with sensitivity correction, we can correct the estimated tensors with high sensitivity to a new estimation with sensitivity less than 10 (Figure~\ref{fig_tc_collinear_N4_R25_ss}). The decomposition converged quickly after the correction as seen in Figure~\ref{fig_tc_collinear_N4_R25_err}.
\end{example}


\begin{example}[More TC decompositions for order-4 tensors]\label{ex_tcorder4_}

In this example, we decompose order-4 tensors of size $I \times I \times I \times I$ with bond dimensions $(I-I-I-I)$, where $I = 10, 15$. 
The success rates of the three considered algorithms, ALS, OPT, and ALS plus sensitivity correction and control (SSC), are compared in  Figure~\ref{fig_ex_tcorde4_}. Both ALS and OPT failed to decompose a tensor of order-4 with large core tensors. However, with SSC, ALS attained a success rate of 99-100\%.
\end{example}

\begin{figure}\centering
\subfigure[Tensor dimension $I = 10$]{
 \includegraphics[width=.41\linewidth, trim = 0cm 0cm 0cm 0cm,clip=true]{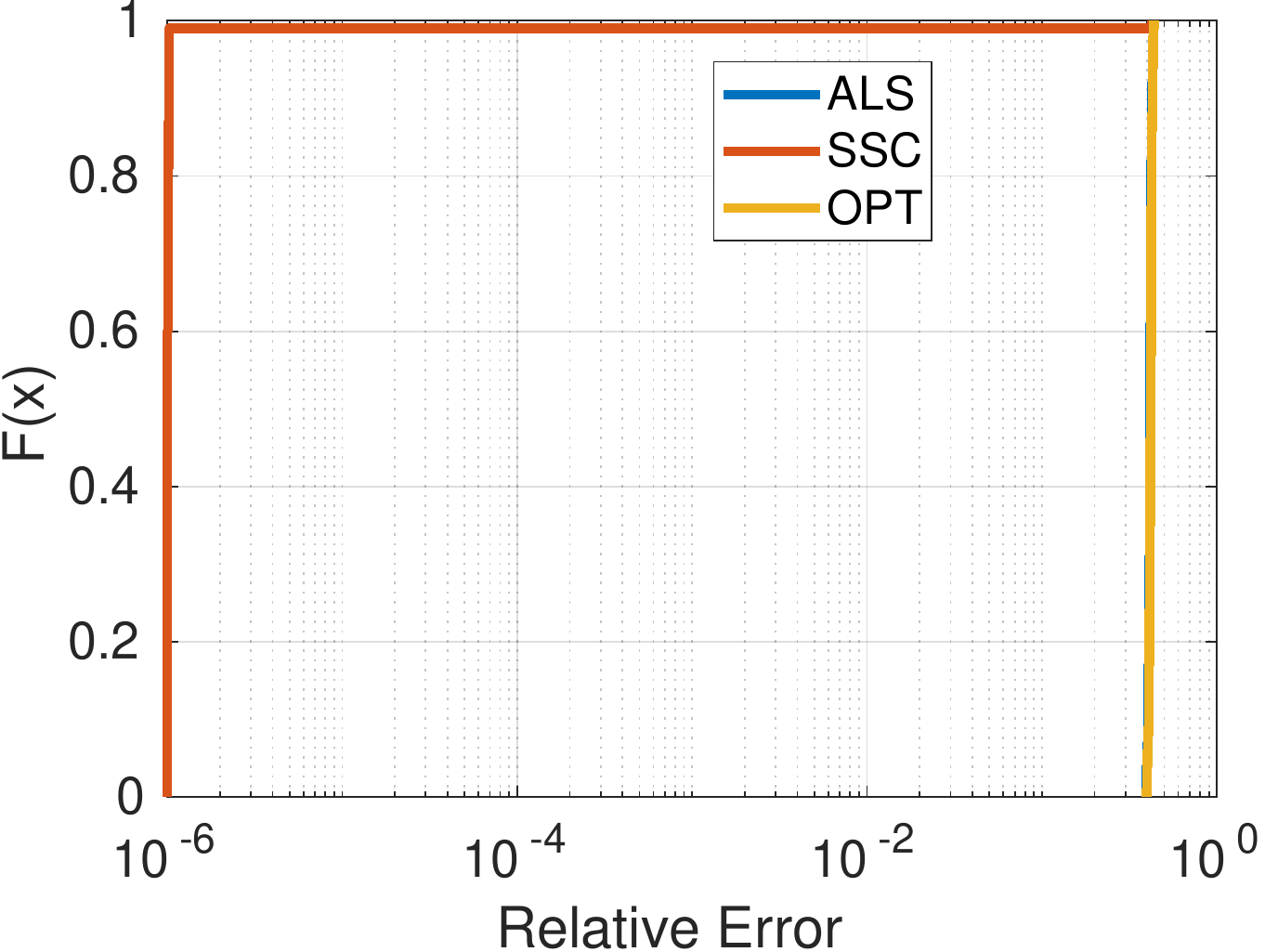}} 
 \hfill
 \subfigure[Tensor dimension $I = 15$]{\includegraphics[width=.41\linewidth, trim = 0cm 0cm 0cm 0cm,clip=true]{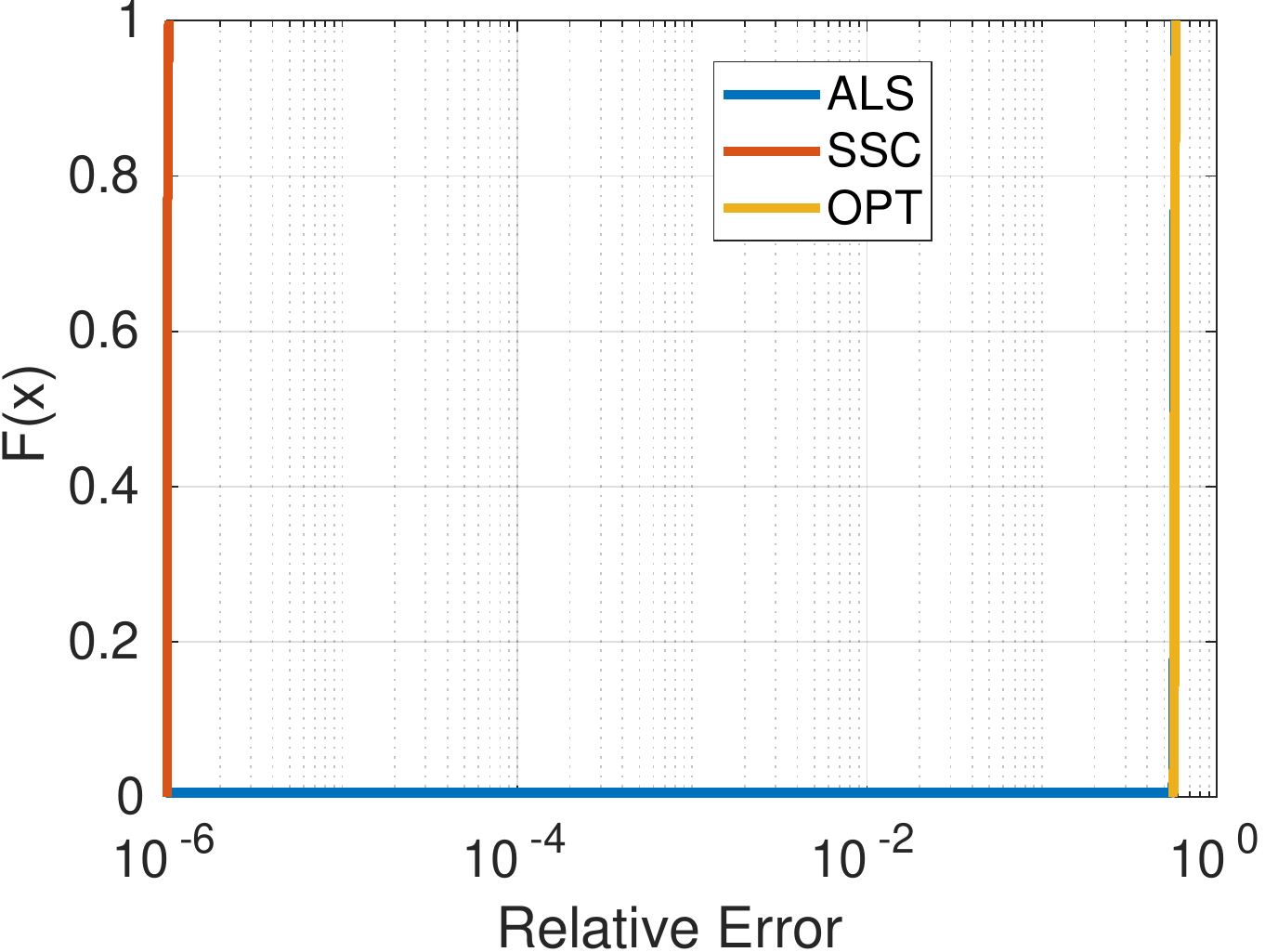}}
 \caption{Success rate of TC algorithms in decomposition of tensors of size $I \times I \times I \times I $ with bond $(I-I-I-I)$ in  Example~\ref{ex_tcorder4_}. Both ALS and OPT fail in decomposition of tensor of order-4 with big core tensors.}\label{fig_ex_tcorde4_}
\end{figure}


\begin{example}[More TC decompositions for higher order tensors]\label{ex_tcorder5_7}

In this example, we decompose order-5 tensors of size $I \times I \cdots \times I$ with bond dimensions $(R-R-\cdots-R)$, where $I = 7$ and $R = 13$, and order-7 tensors with dimensions $I = 3$ and bond $R = 8$.
Convergence of algorithms versus iterations is shown Figure~\ref{fig_tcodrder5} and Figure~\ref{fig_tcodrder7}. Like other TC decomposition for tensors of order-3,4, besides ALS and OPT, we tried several other TC algorithms, but none succeeded.

The success rates of the three considered algorithms, ALS, OPT, and ALS plus sensitivity correction and control (SSC), are compared in  Figure~\ref{fig_tc3_N5_I7R13_cdferr} and Figure~\ref{fig_tc3_N7_I3R8_cdferr}. Both ALS and OPT failed to decompose tensor of order-4 with large core tensors. However, with SSC, ALS attained a success rate of 99-100\%.
\end{example}

\begin{figure}\centering
\subfigure[Convergence of ALS, OPT and ALS+SSC]{
 \includegraphics[width=.31\linewidth, trim = 0.0cm 0cm 0cm 0cm,clip=true]{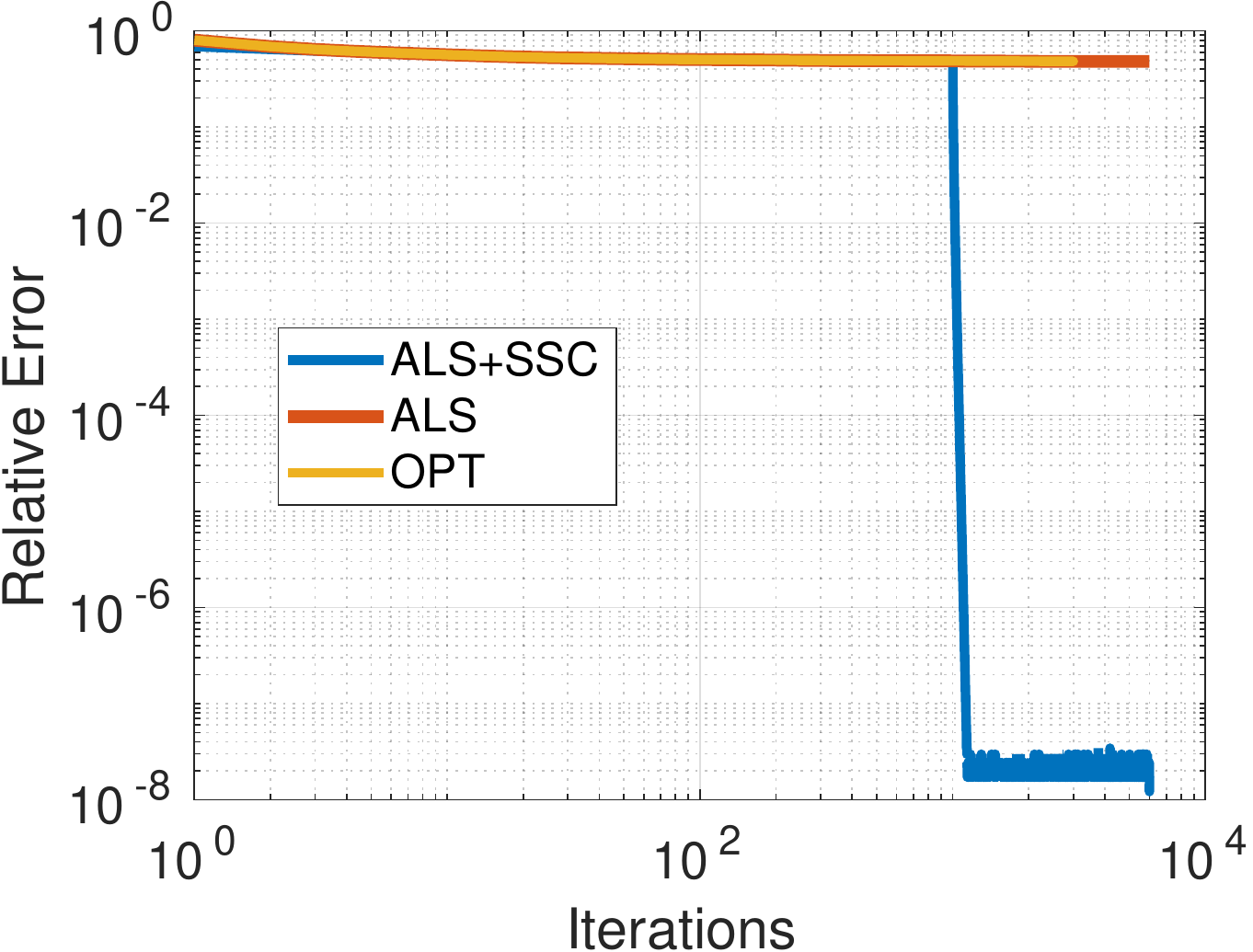}} 
 \hfill
 \subfigure[Sensitivity of ALS]{\includegraphics[width=.33\linewidth, trim = 0.0cm 0cm 0cm 0cm,clip=true]{./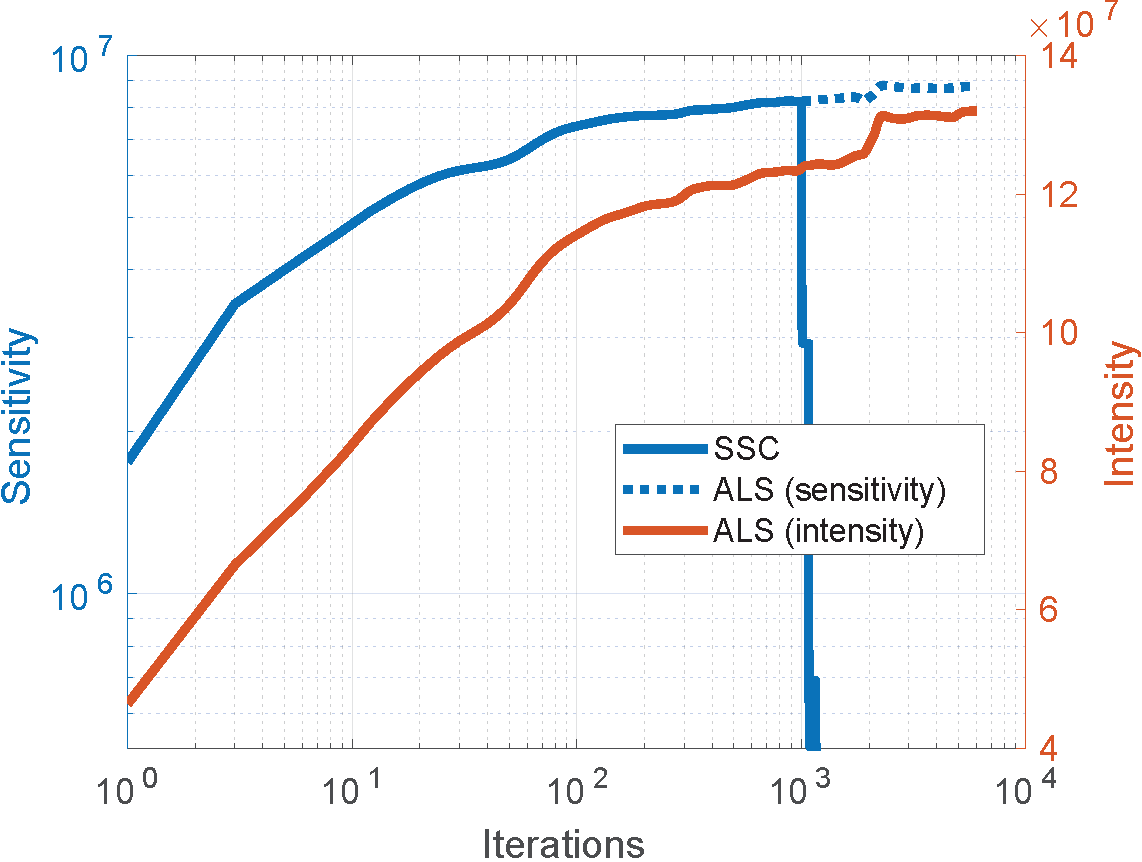}}
 \subfigure[Success rate]{\includegraphics[width=.31\linewidth, trim = 0.0cm 0cm 0cm 0cm,clip=true]{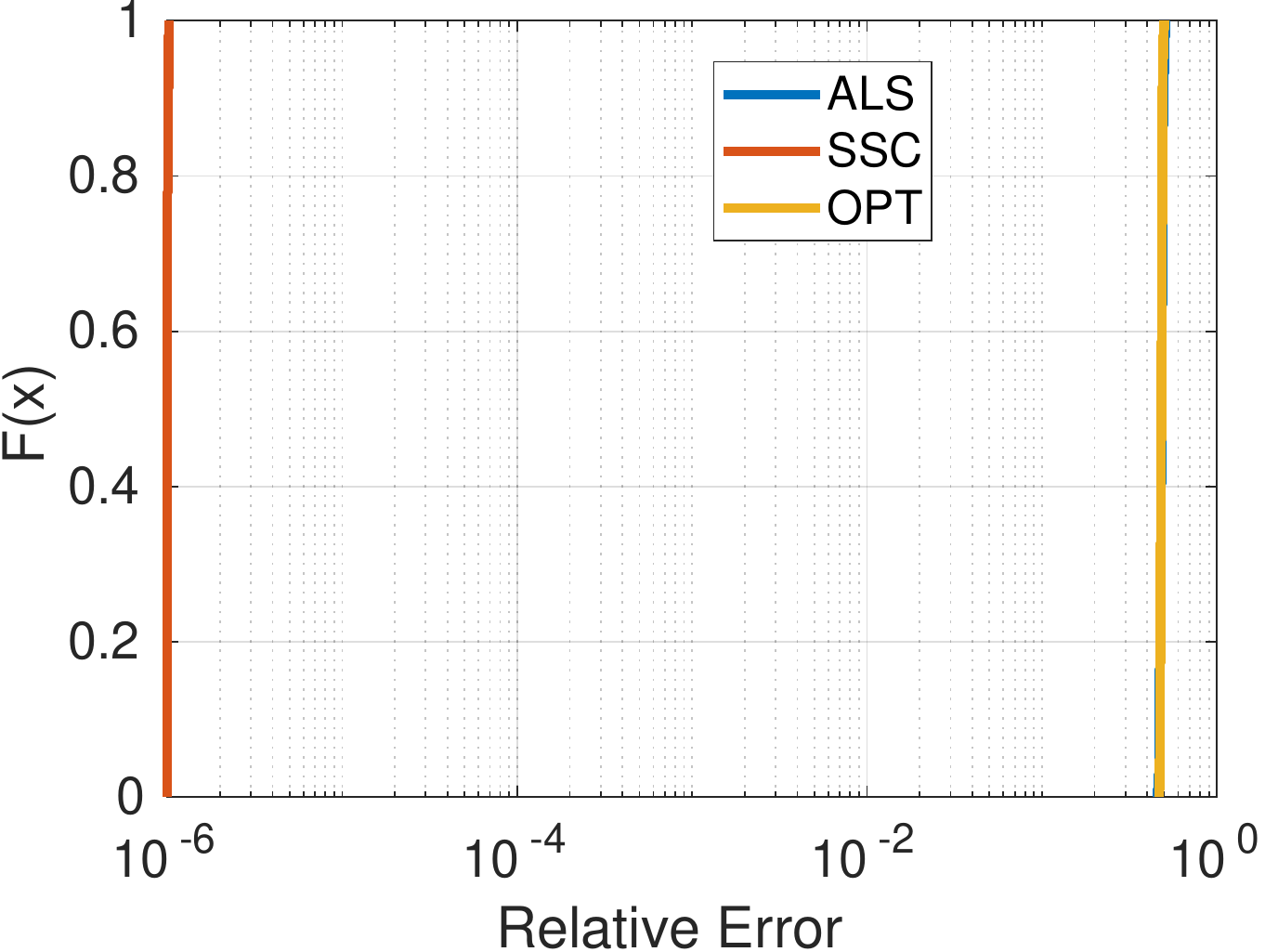}\label{fig_tc3_N5_I7R13_cdferr}}
  \caption{Illustration for performances for decomposition of tensors of order-5 and size $7 \times 7 \times 7 \times 7 \times 7$ in Example~\ref{ex_tcorder5_7}.}\label{fig_tcodrder5}
\end{figure}

\begin{figure}\centering
\subfigure[Convergence of ALS, OPT and ALS+SSC]{
 \includegraphics[width=.31\linewidth, trim = 0cm 0cm 0cm 0cm,clip=true]{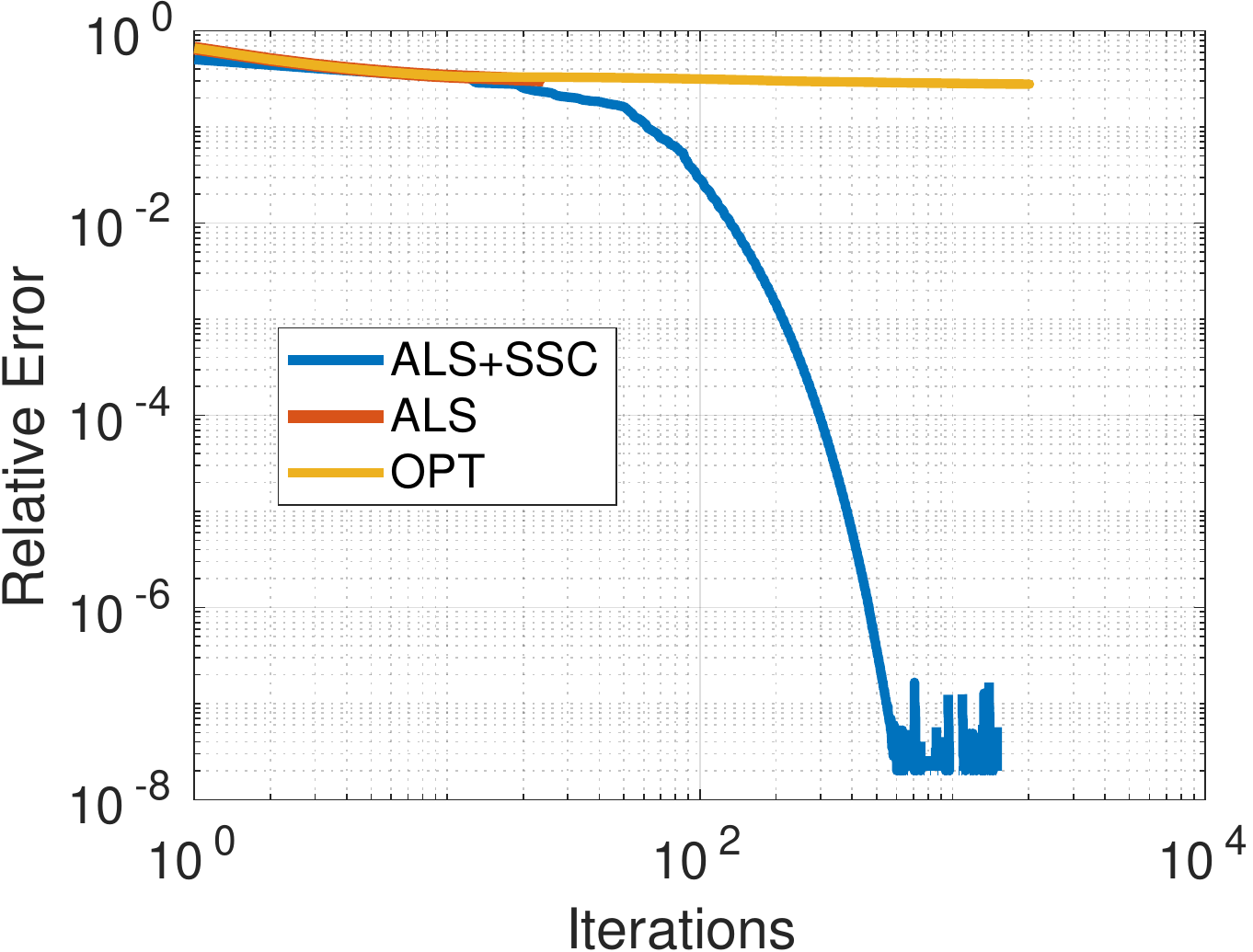}} 
 \hfill
  \subfigure[Sensitivity]{\includegraphics[width=.33\linewidth, trim = 0.0cm 0cm 0cm 0cm,clip=true]{./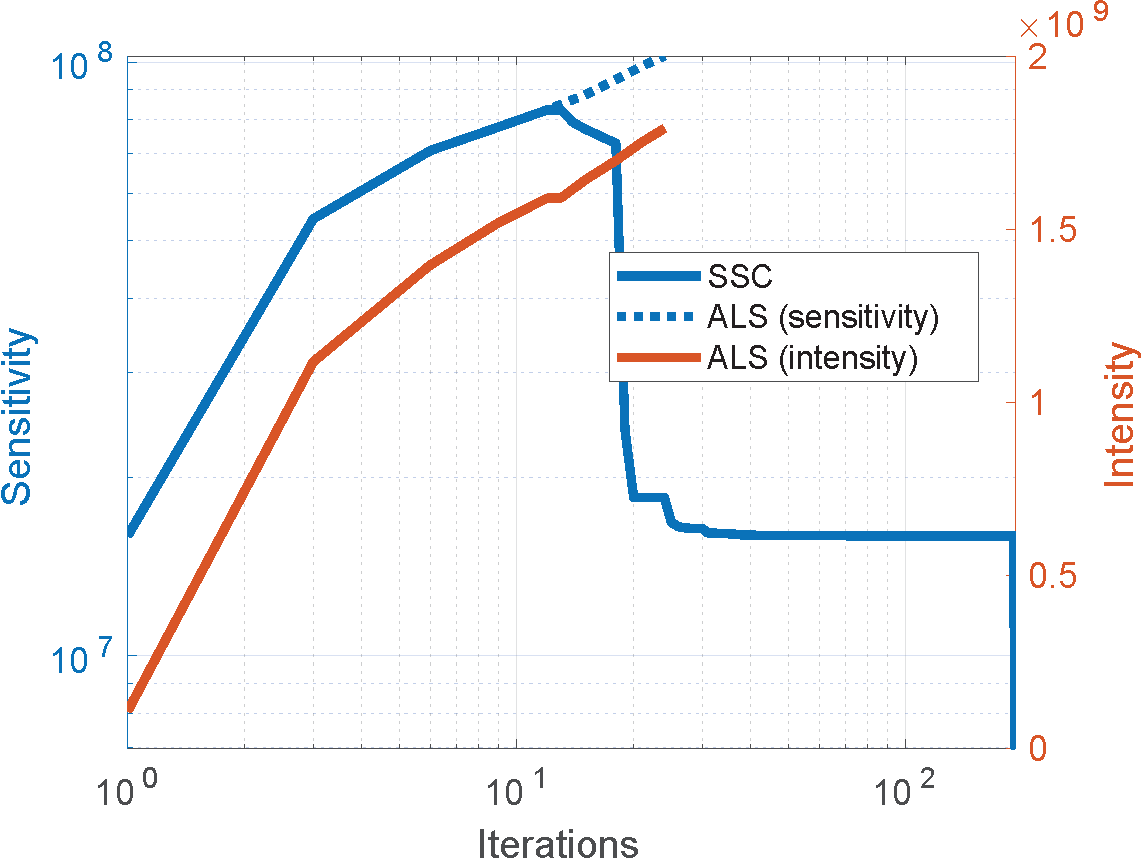}}
 \hfill
 \subfigure[Success Rate]{\includegraphics[width=.33\linewidth, trim = 0.0cm 0.0cm 0cm 0cm,clip=true]{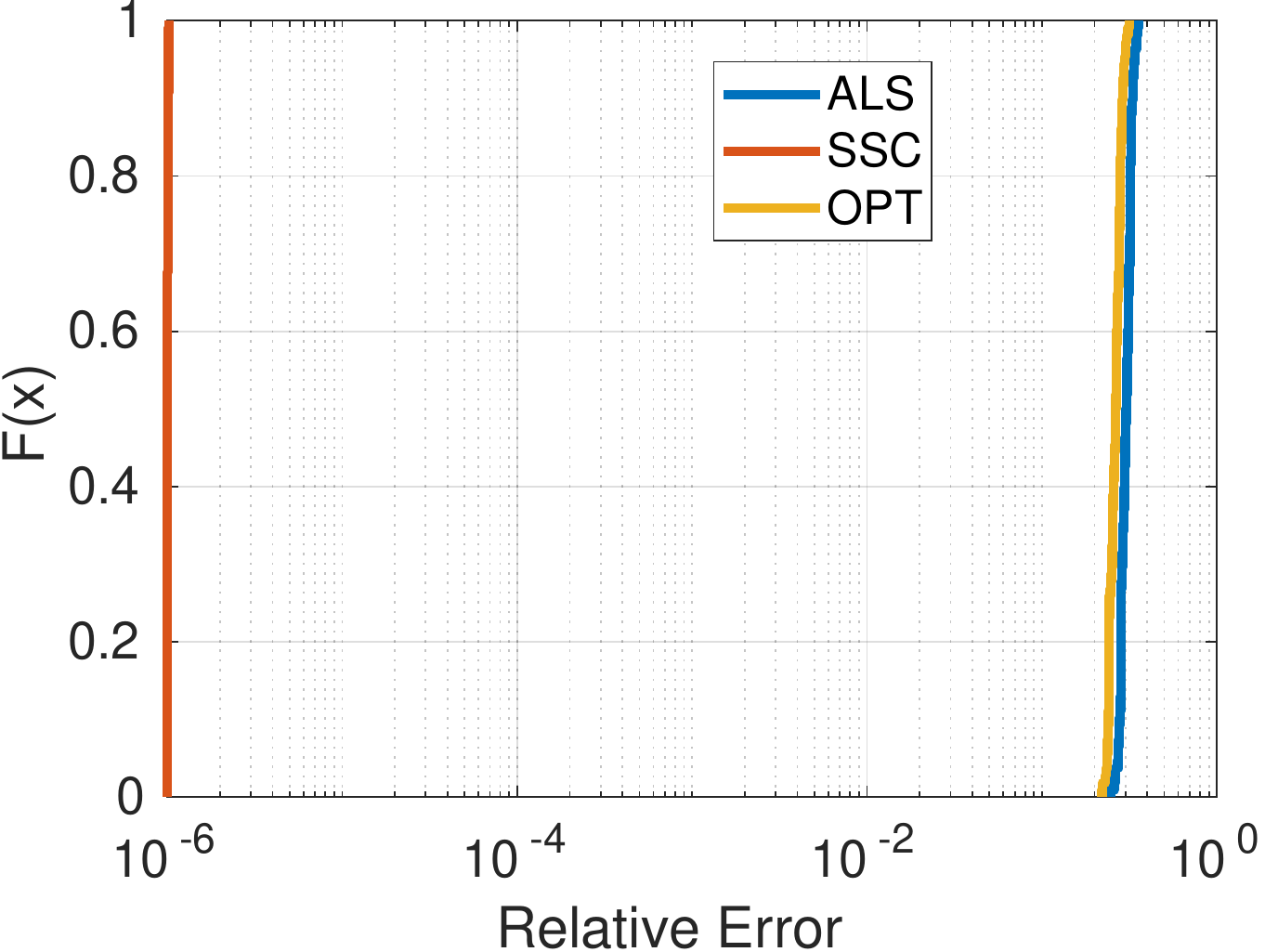}\label{fig_tc3_N7_I3R8_cdferr}}
 \caption{Illustration for performances for decomposition of tensors of order-7 and size $3 \times 3 \times \cdots \times 3$ with bond dimensions $R = 8$ in Example~\ref{ex_tcorder5_7}}.\label{fig_tcodrder7}
\end{figure}


\subsection{Fitting Images by TC-decomposition}

\begin{example}[TC for image approximation]\label{ex_image_tc}

\begin{figure}
\includegraphics[width=.158\linewidth, trim = 0.0cm 0cm 0cm 0cm,clip=true]{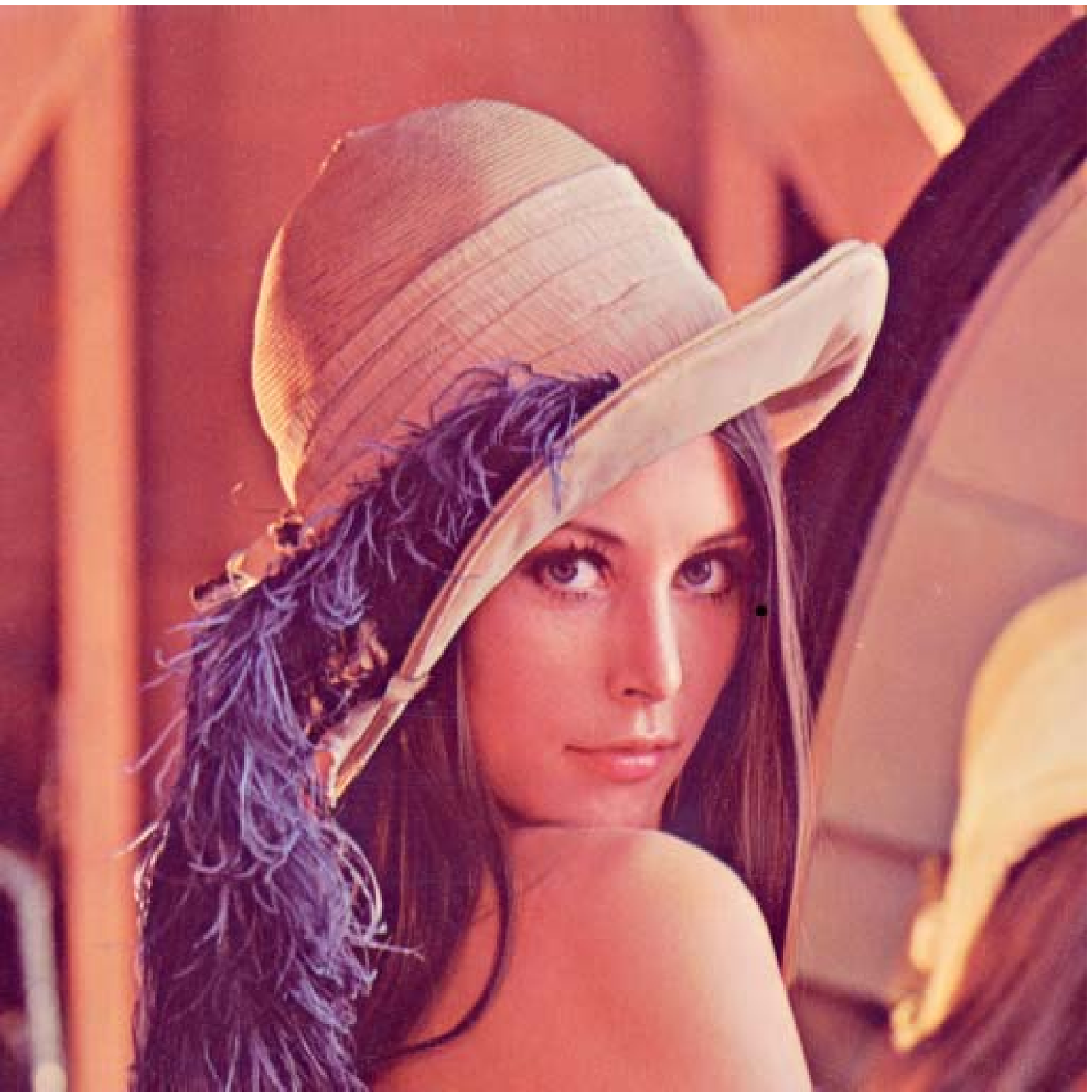}
\includegraphics[width=.158\linewidth, trim = 0.0cm 0cm 0cm 0cm,clip=true]{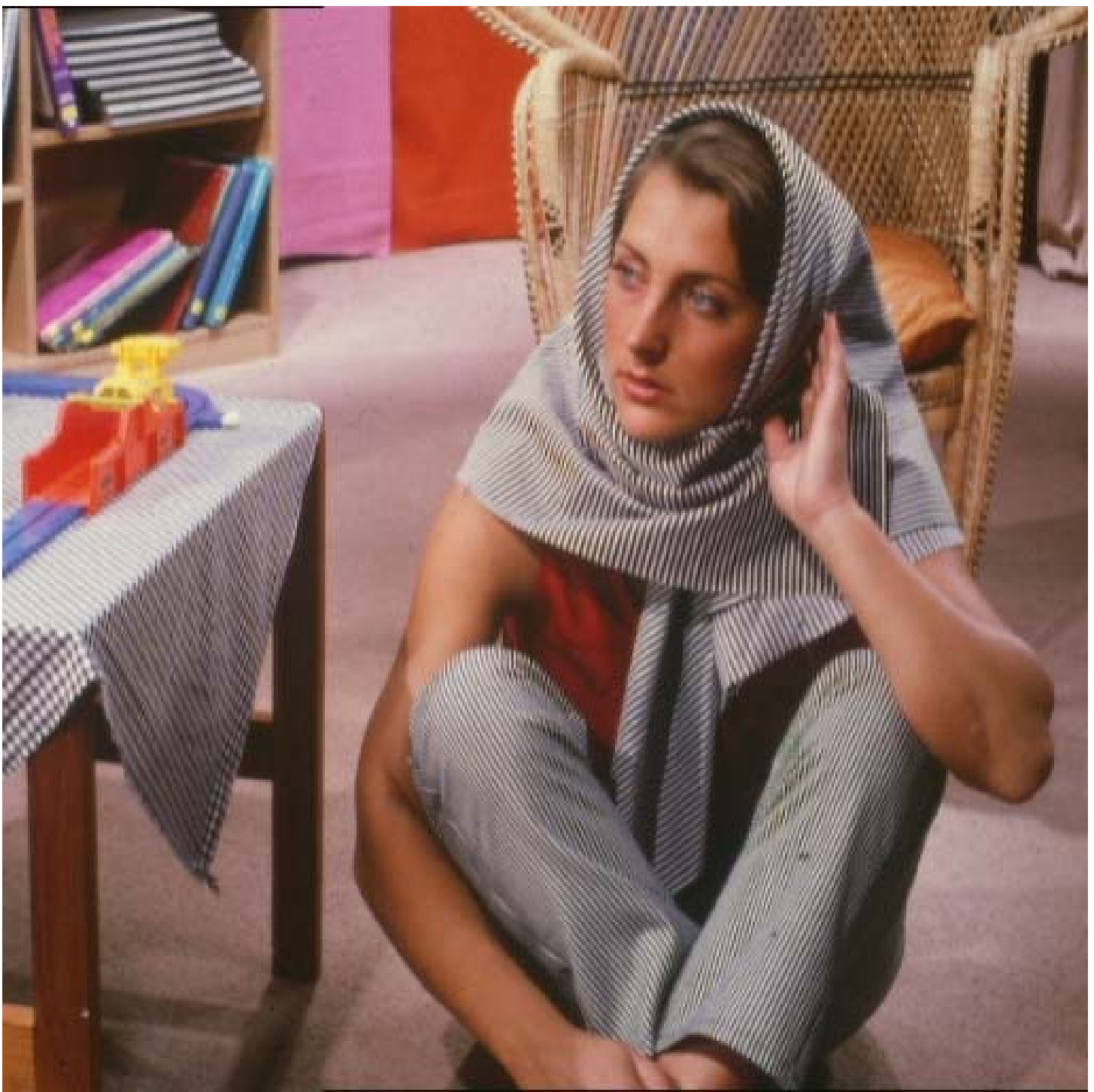}
\includegraphics[width=.158\linewidth,height=.158\linewidth, trim = 0.0cm 0cm 0cm 0cm,clip=true]{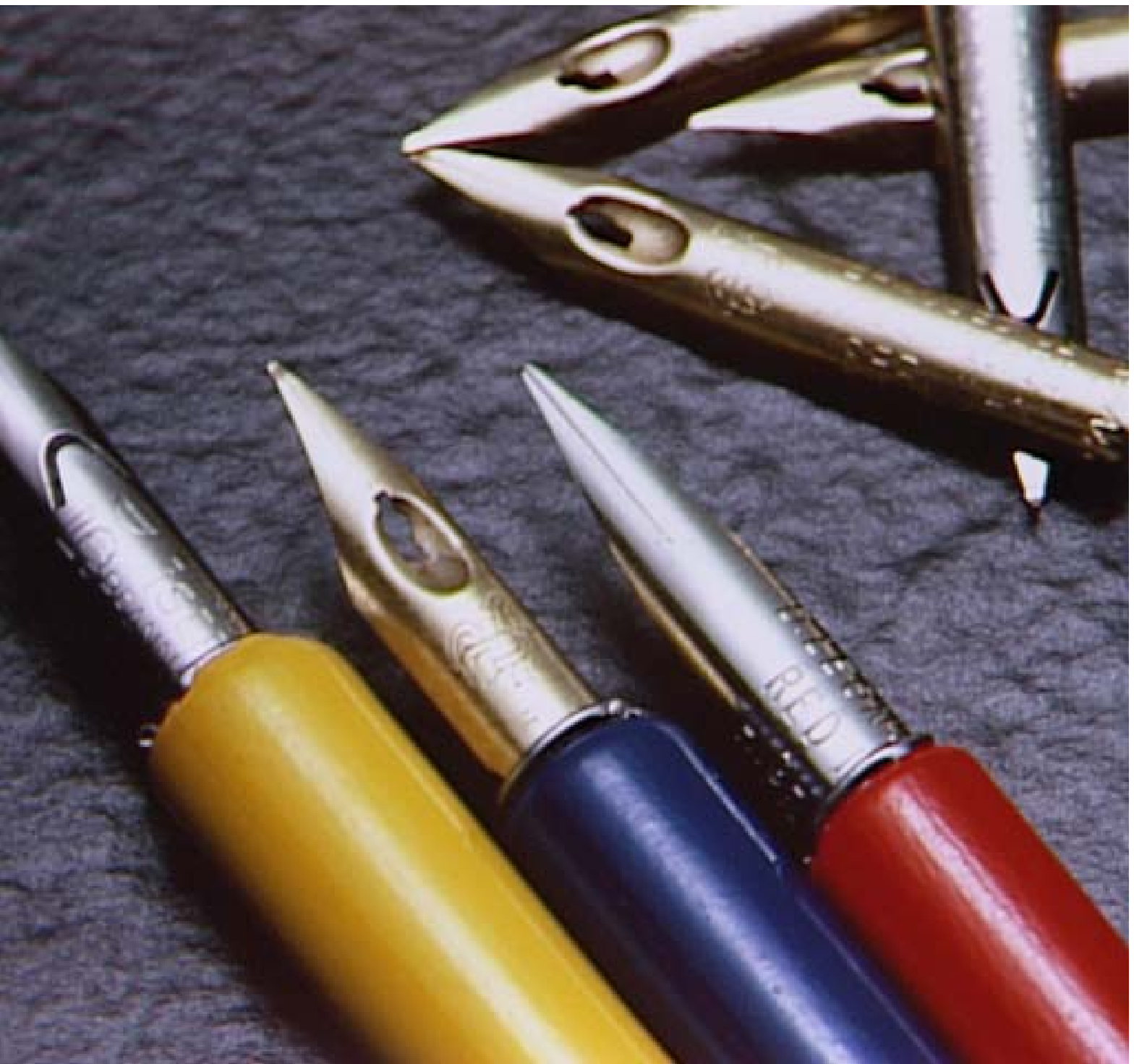}
\includegraphics[width=.158\linewidth, trim = 0.0cm 0cm 0cm 0cm,clip=true]{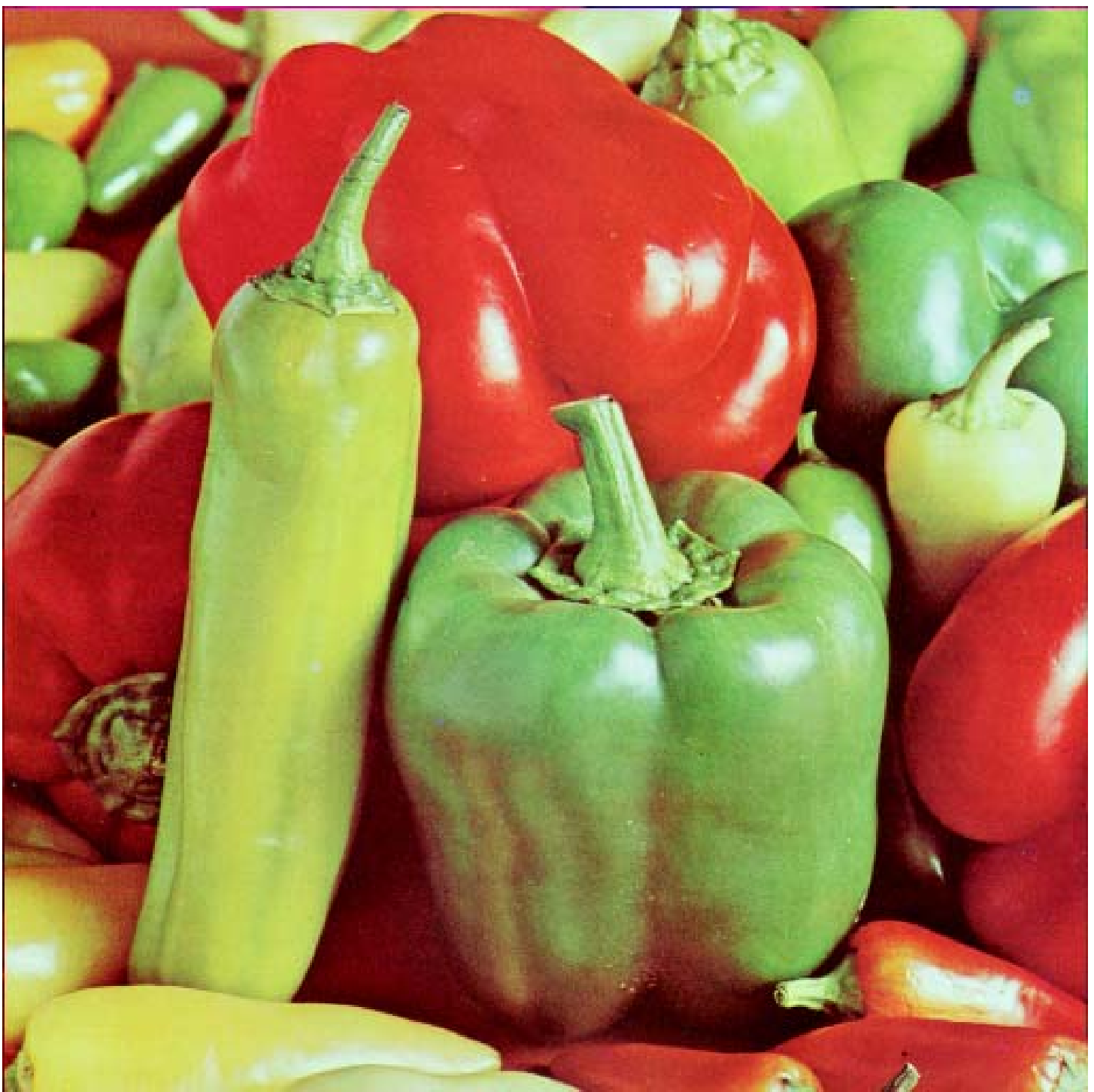}
\includegraphics[width=.158\linewidth, trim = 0.0cm 0cm 0cm 0cm,clip=true]{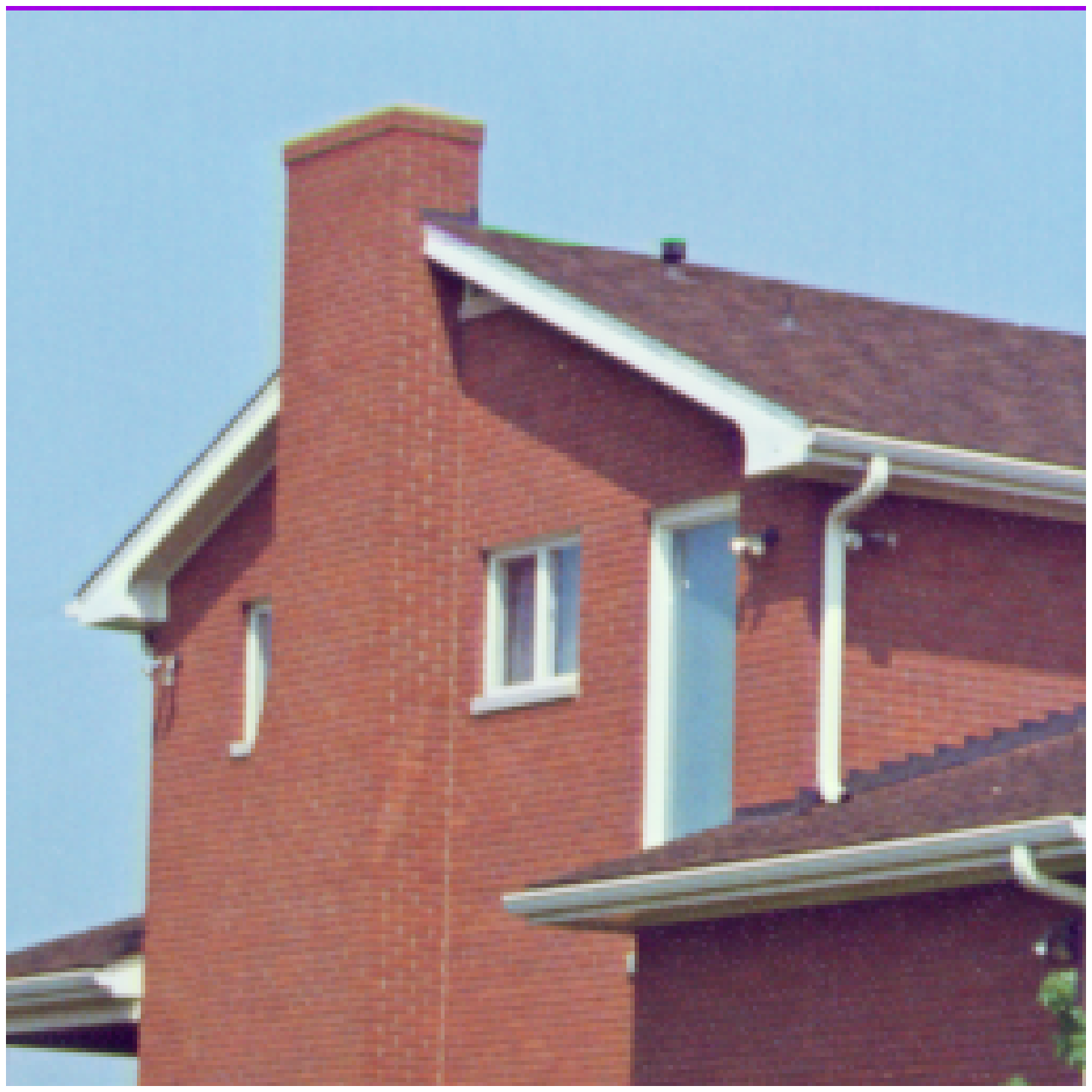}
\includegraphics[width=.158\linewidth, trim = 0.0cm 0cm 0cm 0cm,clip=true]{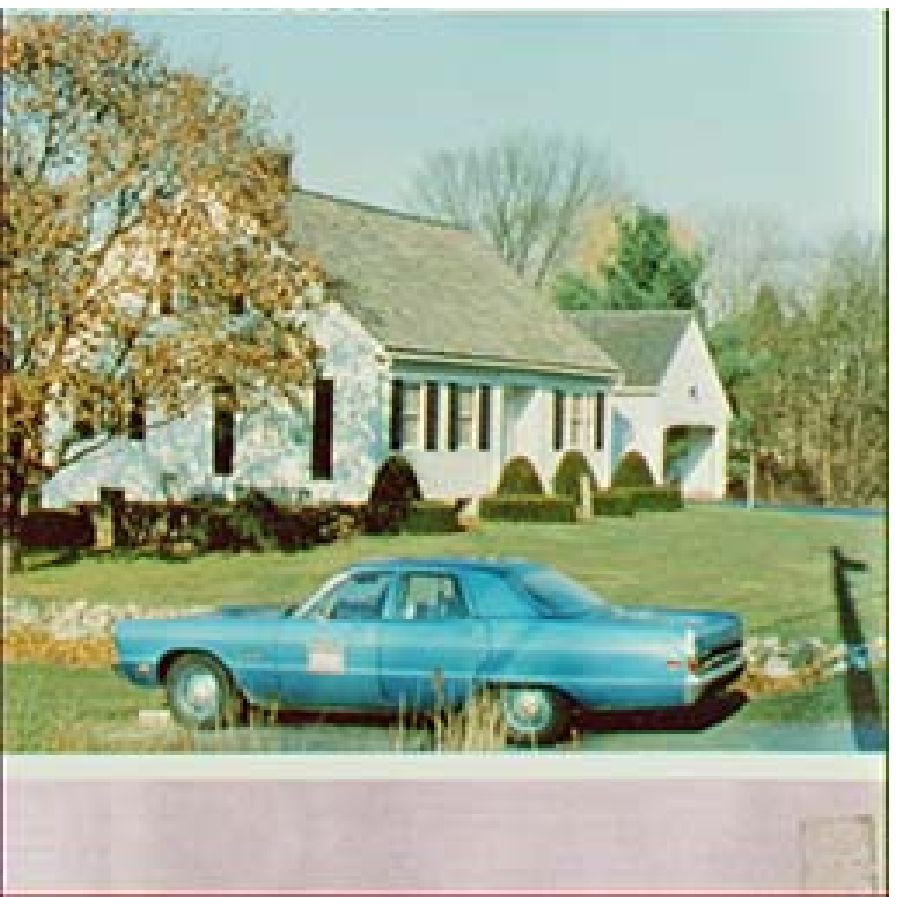}
\caption{Images used in Example~\ref{ex_image_tc}.}
\label{fig_6images}
\end{figure}

We fit six images of size $128 \times 128 \times 3$ shown in in Figure~\ref{fig_6images}, by TC models with various bond dimensions $(R_1, R_2, R_3 = R_1)$. 
$R_1 = 2, \ldots, 40$ and $R_2\ge 2$ such that the number of  parameters of TC models should not exceed the number of data elements, i.e., $49152$. There are in total 388-392 TC decompositions for each image. 
\be
\#TC(R_1,R_2,R_1) = 256 R_1 R_2 + 3 R_1^2 \le 49152
\ee

For the same approximation bound, we compare three models obtained using ALS, ALS with sensitivity correction, and algorithm with Sensitivity Control
$$\|\tY - \tX \|_F \le \epsilon \, \|\tY\|_F $$
Figure~\ref{fig:images_plots} compares the approximation errors for different bond dimensions. 

For the ''Peppers'' image, the SSC significantly improves the approximation error for the same TC model. In other words, SSC  allows us to choose a smaller TC model with the same approximation error bound than using ALS (or any other algorithms for TC). For example, at the approximation error bound of $0.02$, SSC gives the TC model with bond $(6-16-6)$, which comprises 24684 parameters, and attains a relative approximation error of $0.01927 < 0.02$. 
For the same bond dimensions, ALS converges to a model with an approximation error of $0.0331 > 0.02$. In order to attain the same accuracy, ALS procedures a TC model with bond dimensions of $(6-21-6)$ or $(7-18-7)$, which have 32364 or 32403 parameters, i.e., demanding 7719 more parameters than the model estimated by SSC.

The improvements are even significant for TC decomposition with high accuracy, i.e., low approximation error. Not only for the `'Pepper'' image, but Figure~\ref{fig:images_plots} also shows that SSC gains performance of ALS for approximation of the other images.



\begin{figure}
\subfigure[Mandrill]{\includegraphics[width=.48\linewidth, trim = 0.0cm 0cm 0cm 0cm,clip=true]{./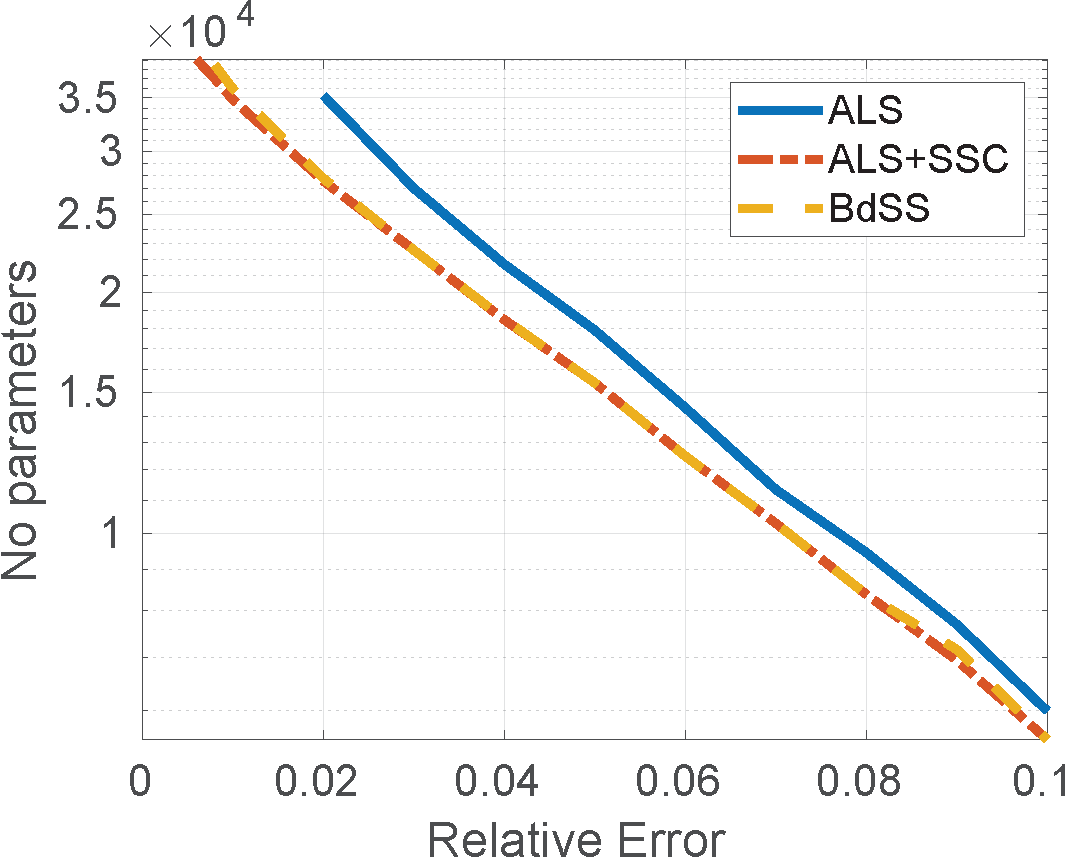}}
\subfigure[Peppers]{\includegraphics[width=.48\linewidth, trim = 0.0cm 0cm 0cm 0cm,clip=true]{./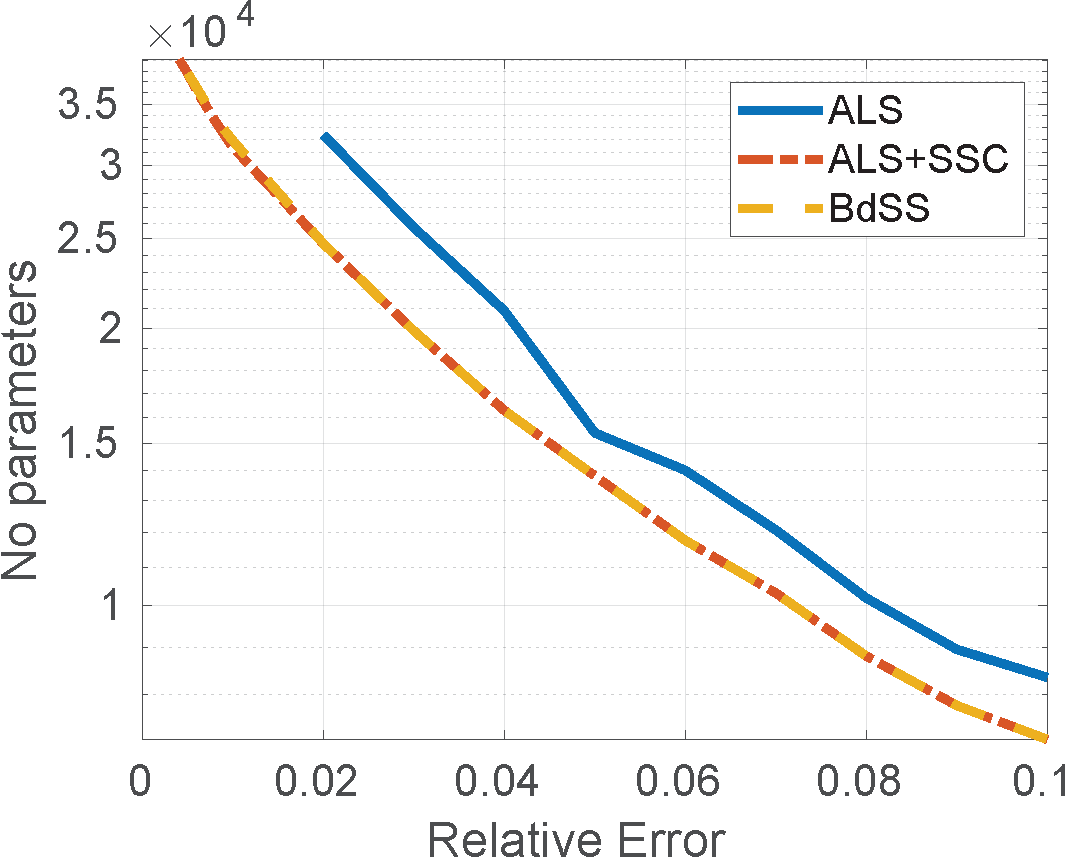}}
\\
\subfigure[Lena]{\includegraphics[width=.48\linewidth, trim = 0.0cm 0cm 0cm 0cm,clip=true]{./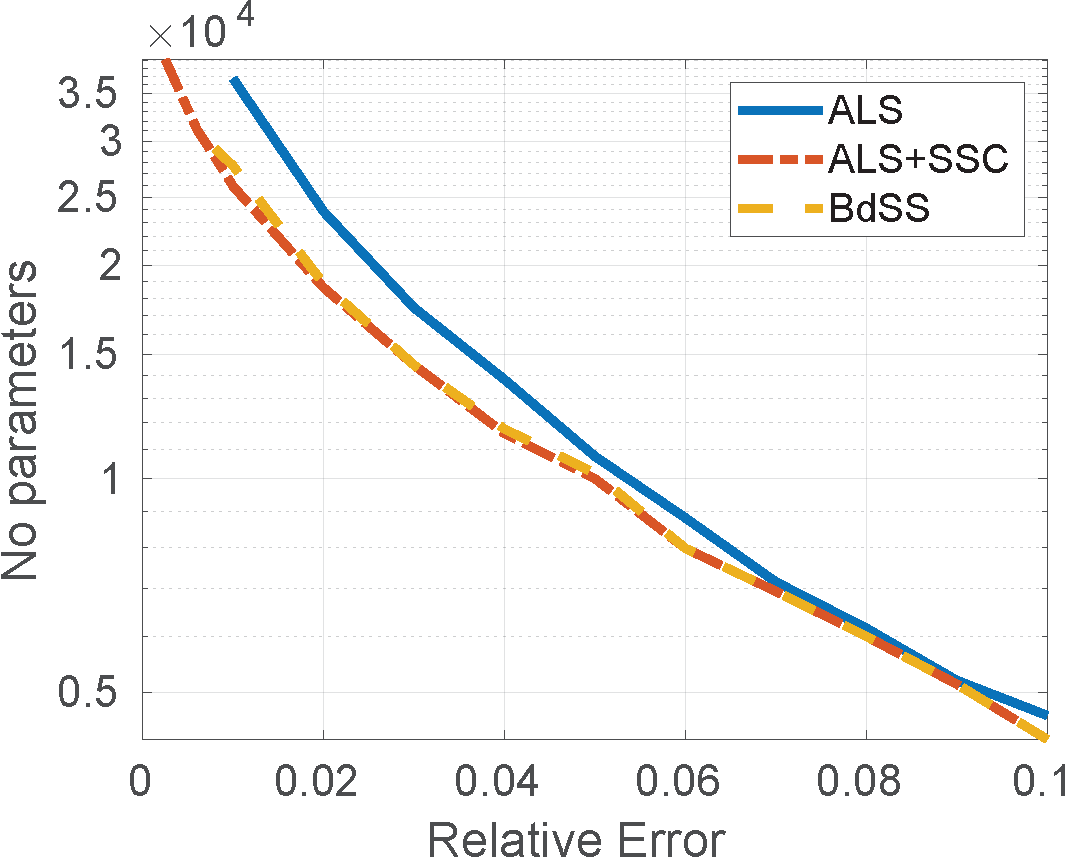}}
\subfigure[Barbara]{\includegraphics[width=.48\linewidth, trim = 0.0cm 0cm 0cm 0cm,clip=true]{./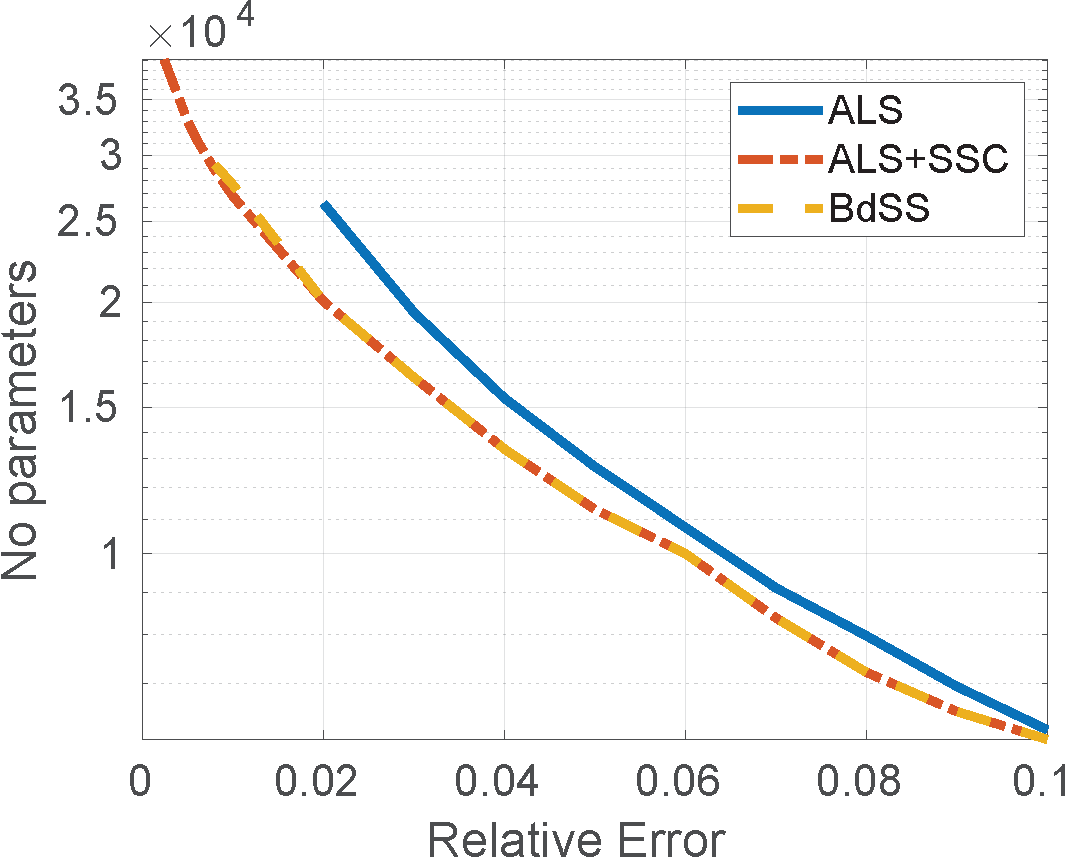}}
\\
\subfigure[Tiffany]{\includegraphics[width=.48\linewidth, trim = 0.0cm 0cm 0cm 0cm,clip=true]{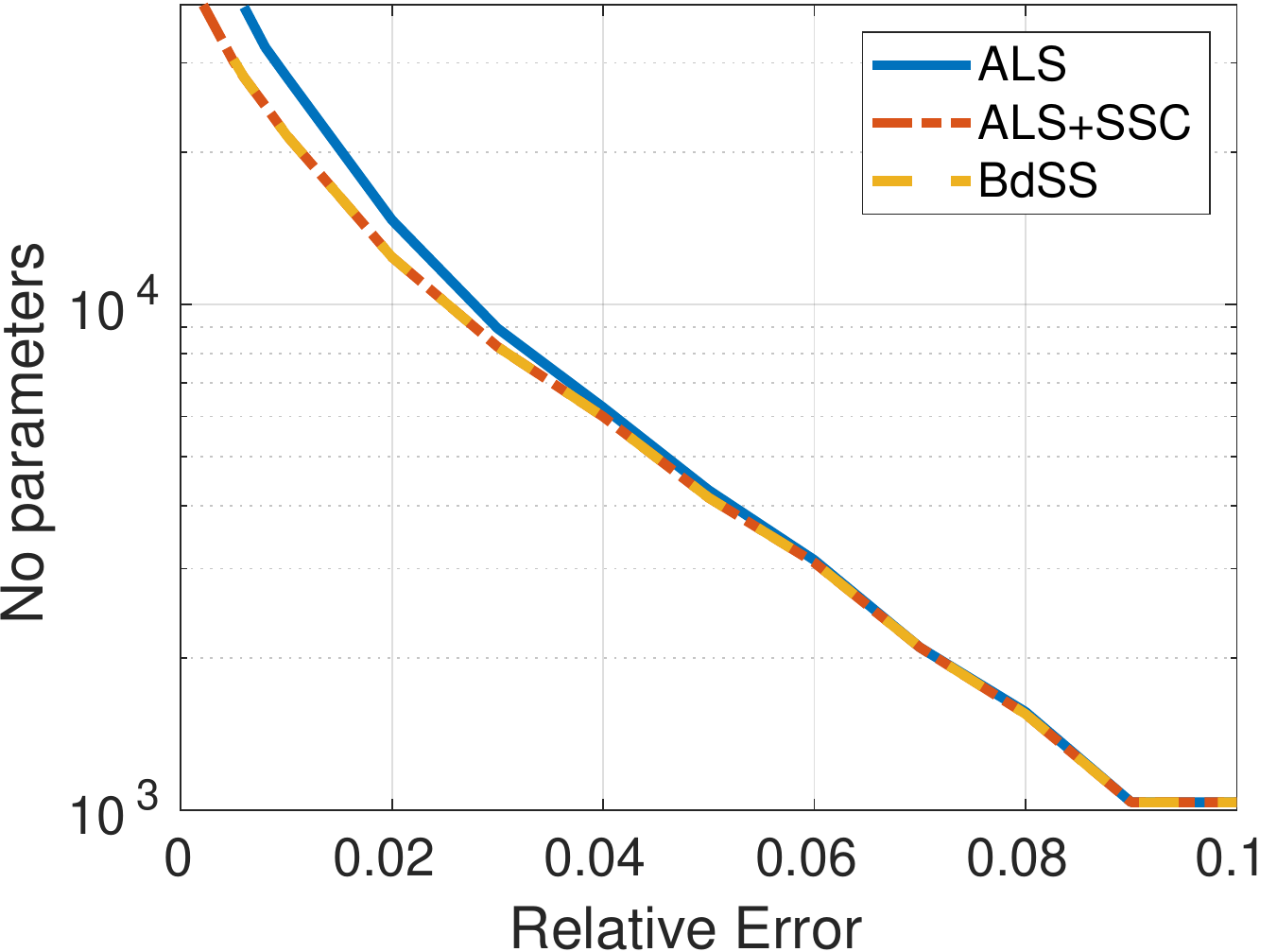}}
\subfigure[House]{\includegraphics[width=.48\linewidth, trim = 0.0cm 0cm 0cm 0cm,clip=true]{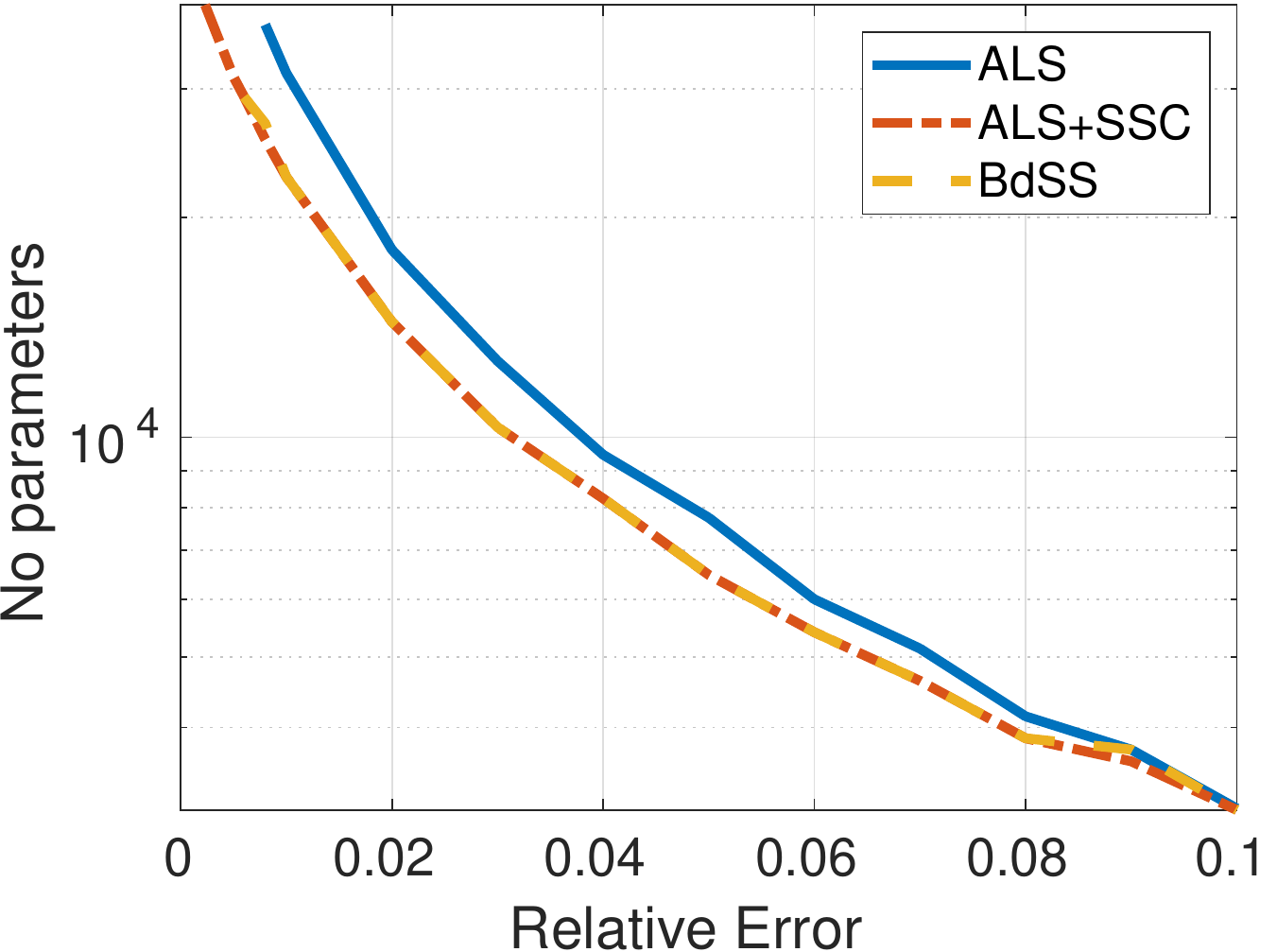}}
\caption{Comparison of TC models which approximate images in Example~\ref{ex_image_tc}.  }\label{fig:images_plots}
\end{figure}

 \end{example}
 

\subsection{CNN compression}

\setcounter{example}{3}
\begin{example}[extended from Example~\ref{ex_resnet92}]
\label{ex_resnet92_ext}
Figure~\ref{fig_ex3_err_it} compares convergence of the decomposition of convolutional kernels using ALS and SSC. 
A similar comparison is presented in Figure~\ref{fig:resnet18_}.
SSC was applied after 3000 ALS updates, and the decomposition resumed with 3000 ALS updates. Decompositions using only ALS could not achieve the approximation errors obtained by SSC.
\end{example}

\setcounter{example}{3}
\begin{example}[extended from Example~\ref{ex_resnet18_cifar10_v2}]
\label{ex_resnet18_cifar10_v2_ext}
SSC achieves smaller approximation errors, and yields estimated models with smaller sensitivity. This helps the new neural networks to fine-tune easier. 
In Figure~\ref{fig_ex3_b}, we provide more comparisons between the learning curves of two versions of ResNet-18 after replacing one convolutional layer by a TC-layer, one with convolutional kernels estimated using SSC, and another one obtained by ALS. Similar curves are presented in Figure~\ref{fig:resnet18_ex3}(c).
In most examples, neural networks without SSC cannot attain the original accuracy of ResNet-18, e.g., convolutional layers 4, 5, 6, 9, 10, 12, 14, 15, 16, 17, or need much more number iterations than the networks using TC-SSC, e.g., layer 3. Except for the network with TC-layer applied to the convolutional layer 11, we observe a slight difference between the two learning curves.

In addition to single compression, we perform full model compression of ResNet-18. TC-layers replace all convolutional layers. 
Figure~\ref{fig::resnet18_cifar_fullcompression} shows that the new ResNet-18 with kernels estimated using ALS could not attain the original accuracy of 92.29\% for CIFAR-10. A similar ResNet-18 but weights in all TC-layers estimated with SSC can recover the initial accuracy after fine-tuning.
\end{example}

\begin{figure}[!ht]
\centering
\subfigure[Convolutional layer 3]{\includegraphics[width=.30\linewidth, trim = 0.0cm 0cm 0cm 0cm,clip=true]{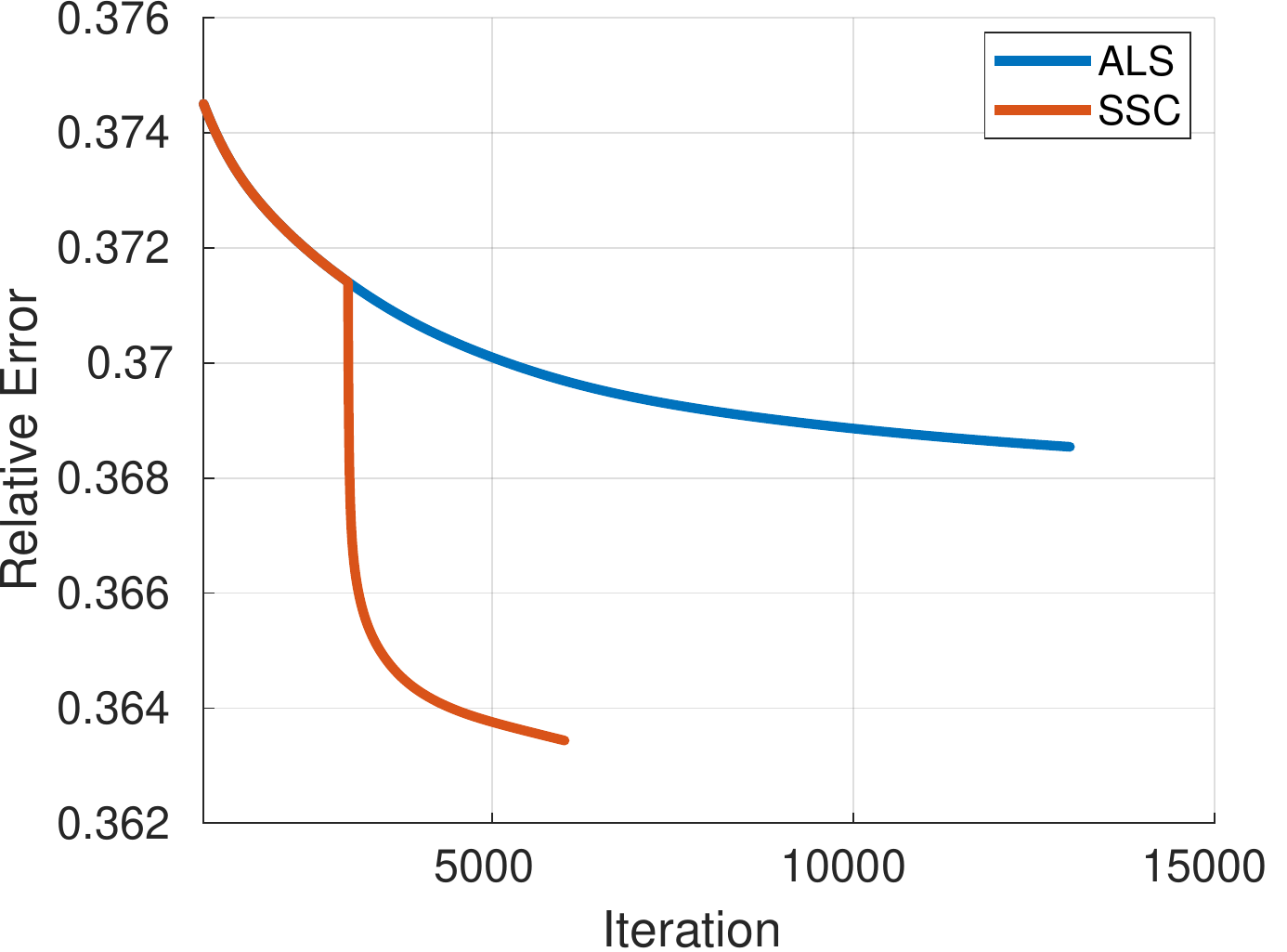}}
\subfigure[Convolutional layer 4]{\includegraphics[width=.30\linewidth, trim = 0.0cm 0cm 0cm 0cm,clip=true]{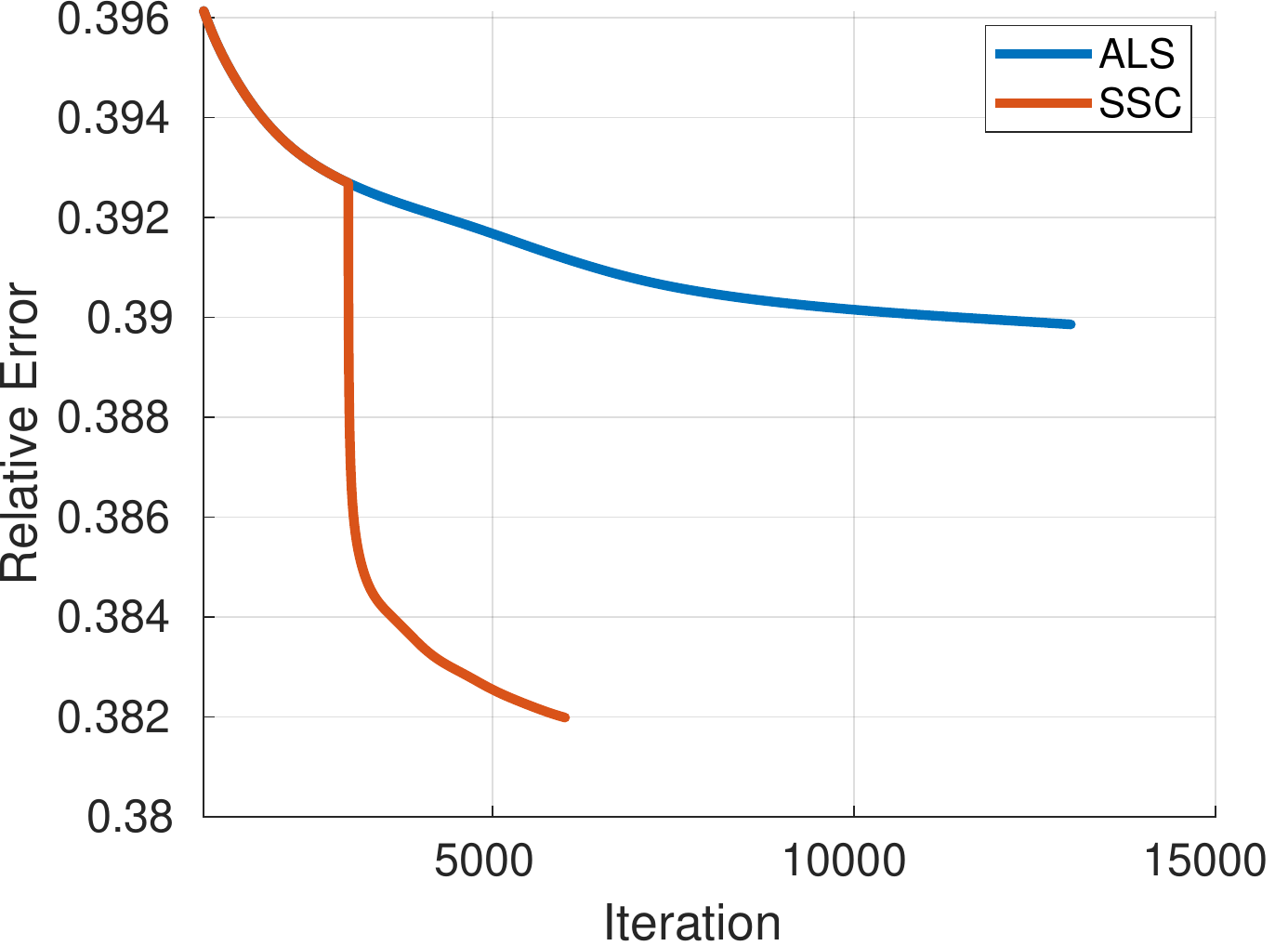}}
\subfigure[Convolutional layer 5]
{\includegraphics[width=.30\linewidth, trim = 0.0cm 0cm 0cm 0cm,clip=true]{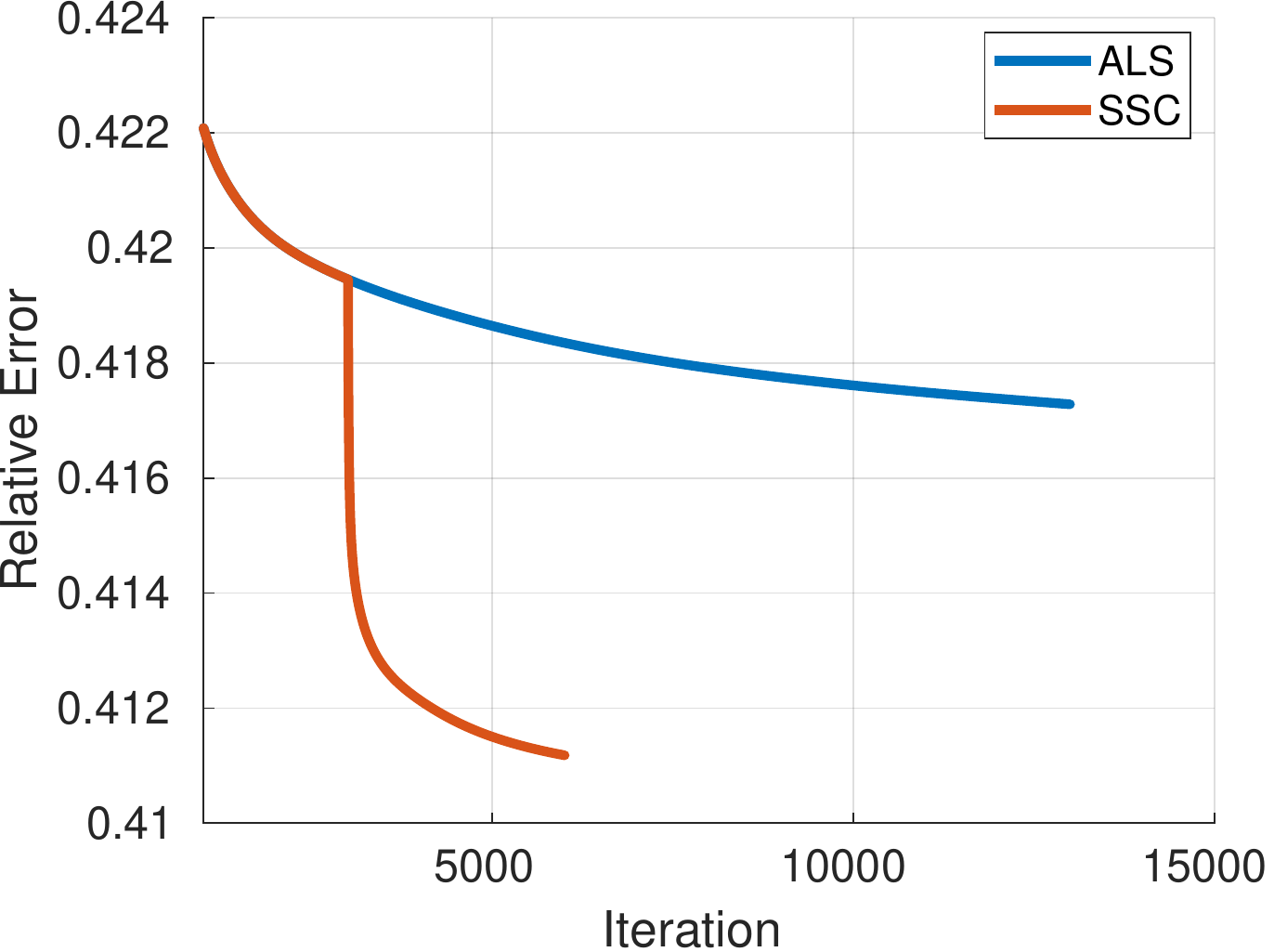}}
\subfigure[Convolutional layer 6]
{\includegraphics[width=.30\linewidth, trim = 0.0cm 0cm 0cm 0cm,clip=true]{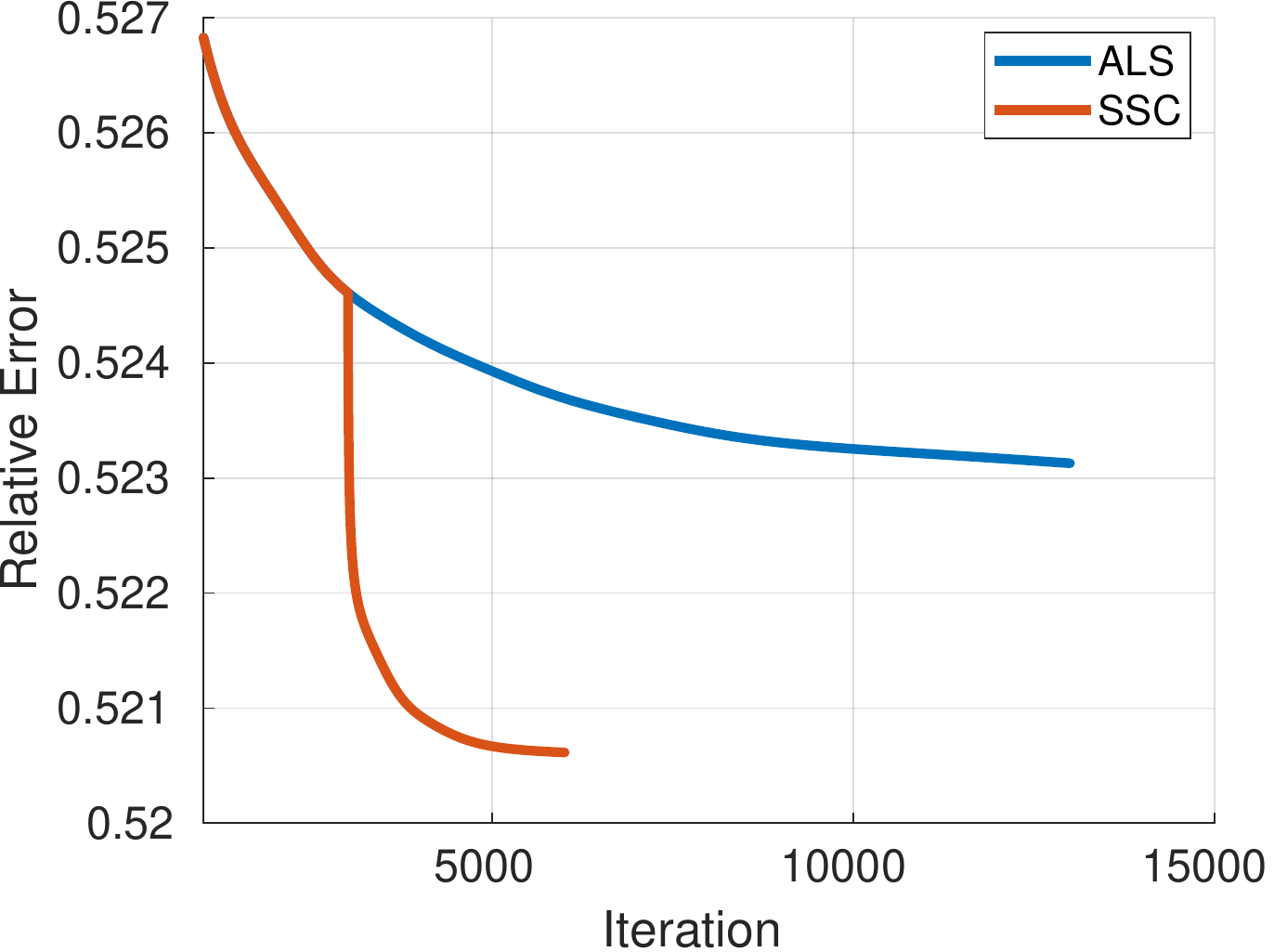}}
\subfigure[Convolutional layer 7]{\includegraphics[width=.30\linewidth, trim = 0.0cm 0cm 0cm 0cm,clip=true]{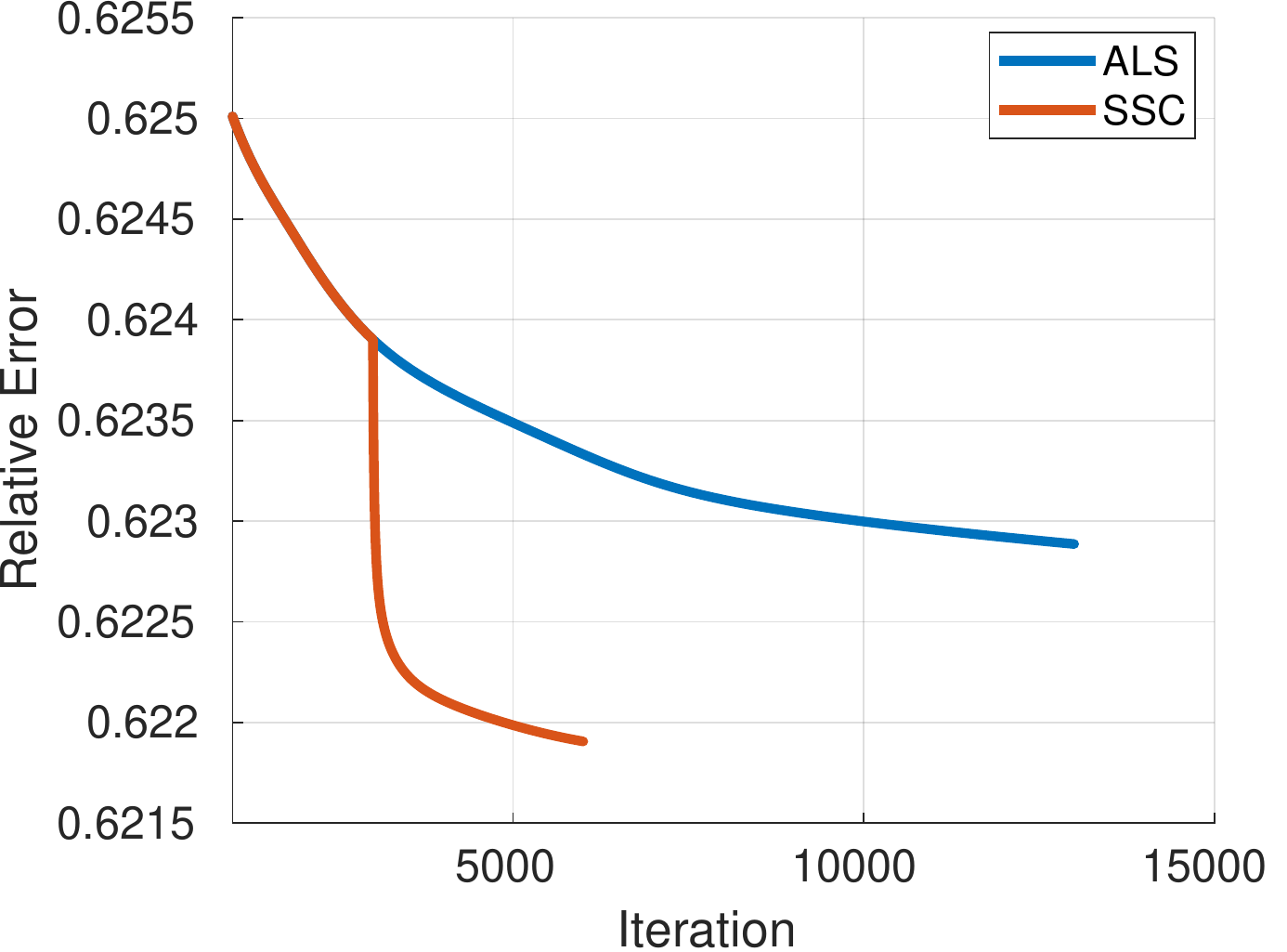}}
\subfigure[Convolutional layer 9]{\includegraphics[width=.30\linewidth, trim = 0.0cm 0cm 0cm 0cm,clip=true]{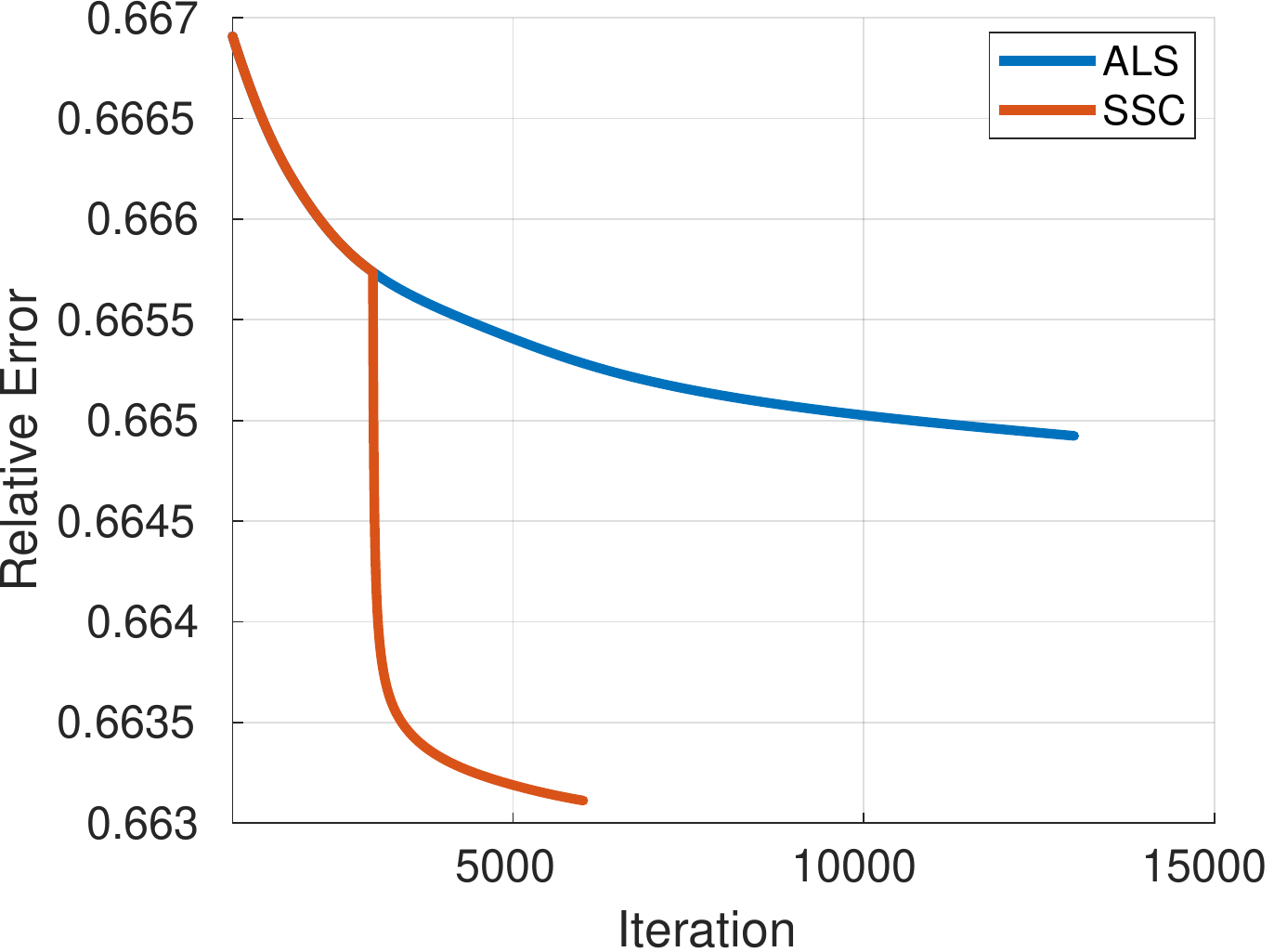}}
\subfigure[Convolutional layer 10]{\includegraphics[width=.30\linewidth, trim = 0.0cm 0cm 0cm 0cm,clip=true]{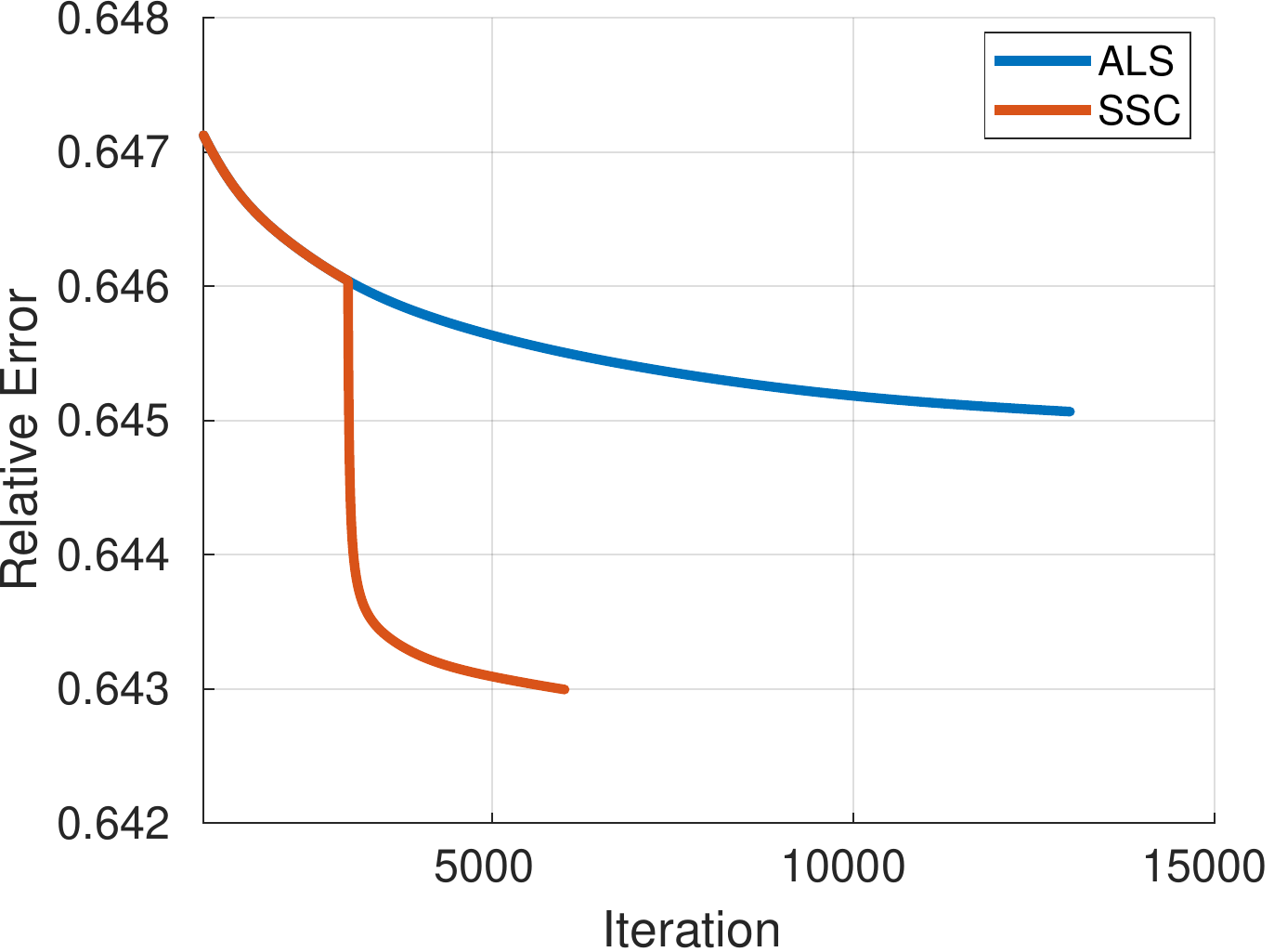}}
\subfigure[Convolutional layer 11]{\includegraphics[width=.30\linewidth, trim = 0.0cm 0cm 0cm 0cm,clip=true]{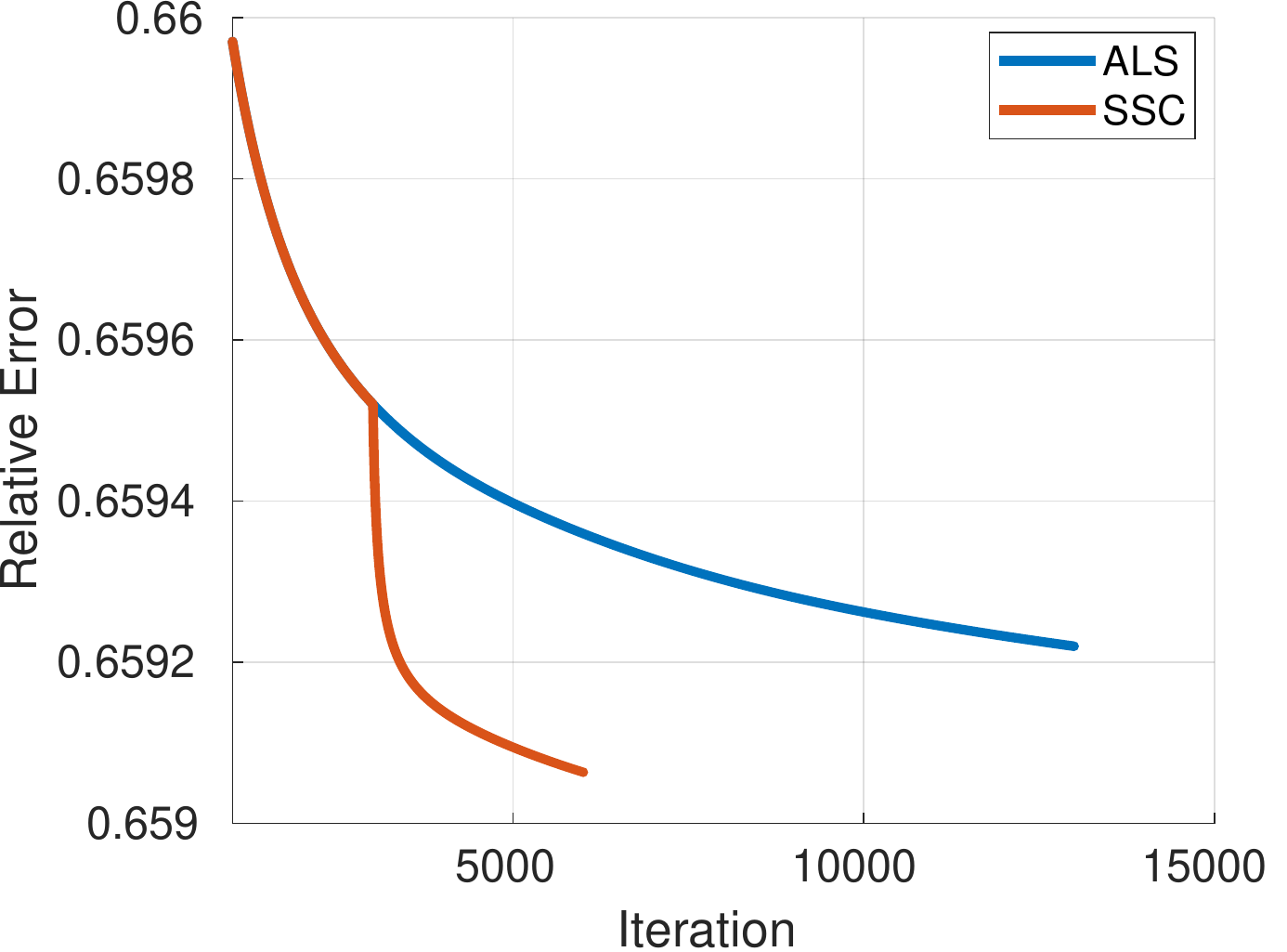}}
\subfigure[Convolutional layer 12]{\includegraphics[width=.30\linewidth, trim = 0.0cm 0cm 0cm 0cm,clip=true]{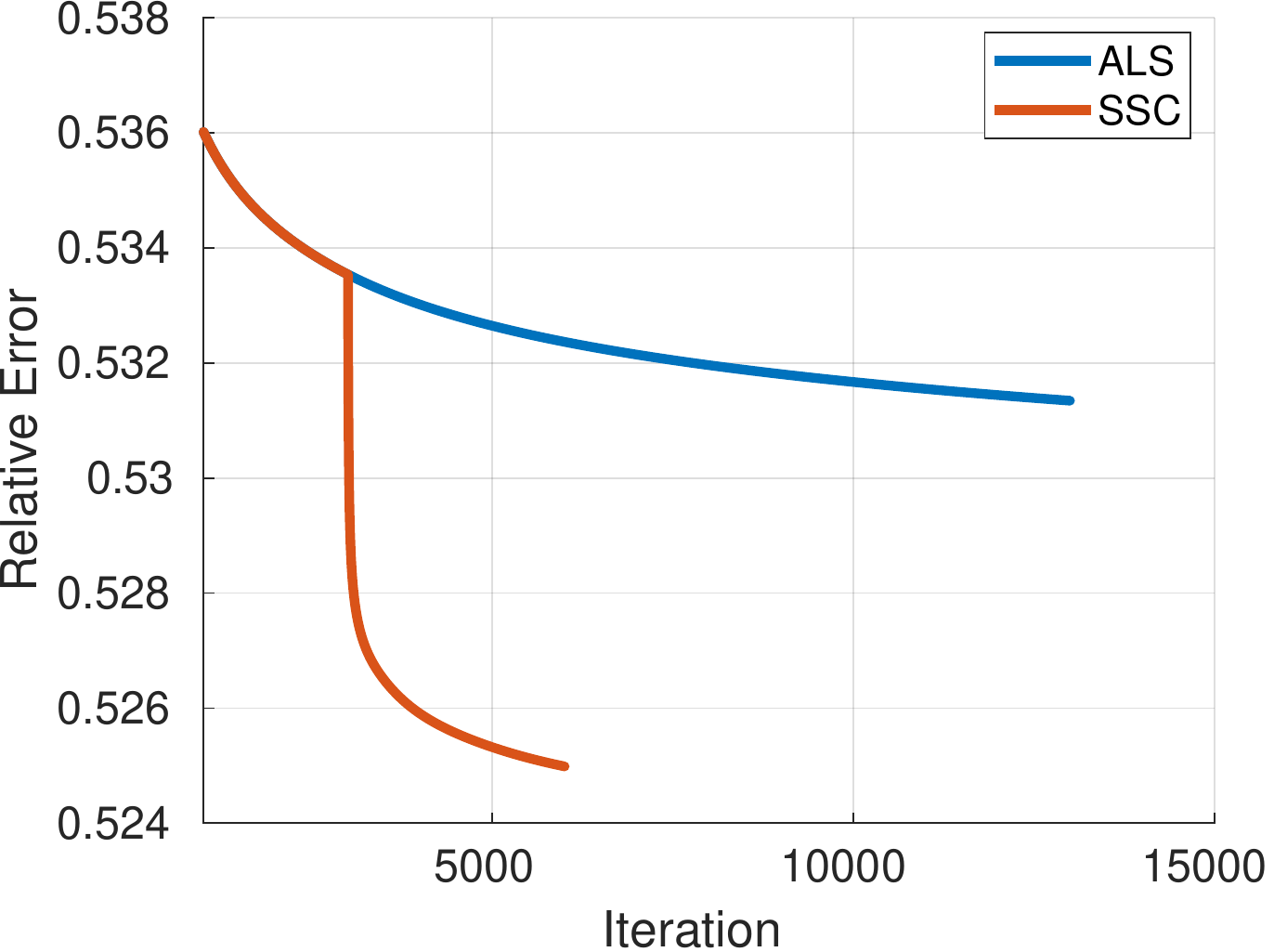}}
\subfigure[Convolutional layer 14]{\includegraphics[width=.30\linewidth, trim = 0.0cm 0cm 0cm 0cm,clip=true]{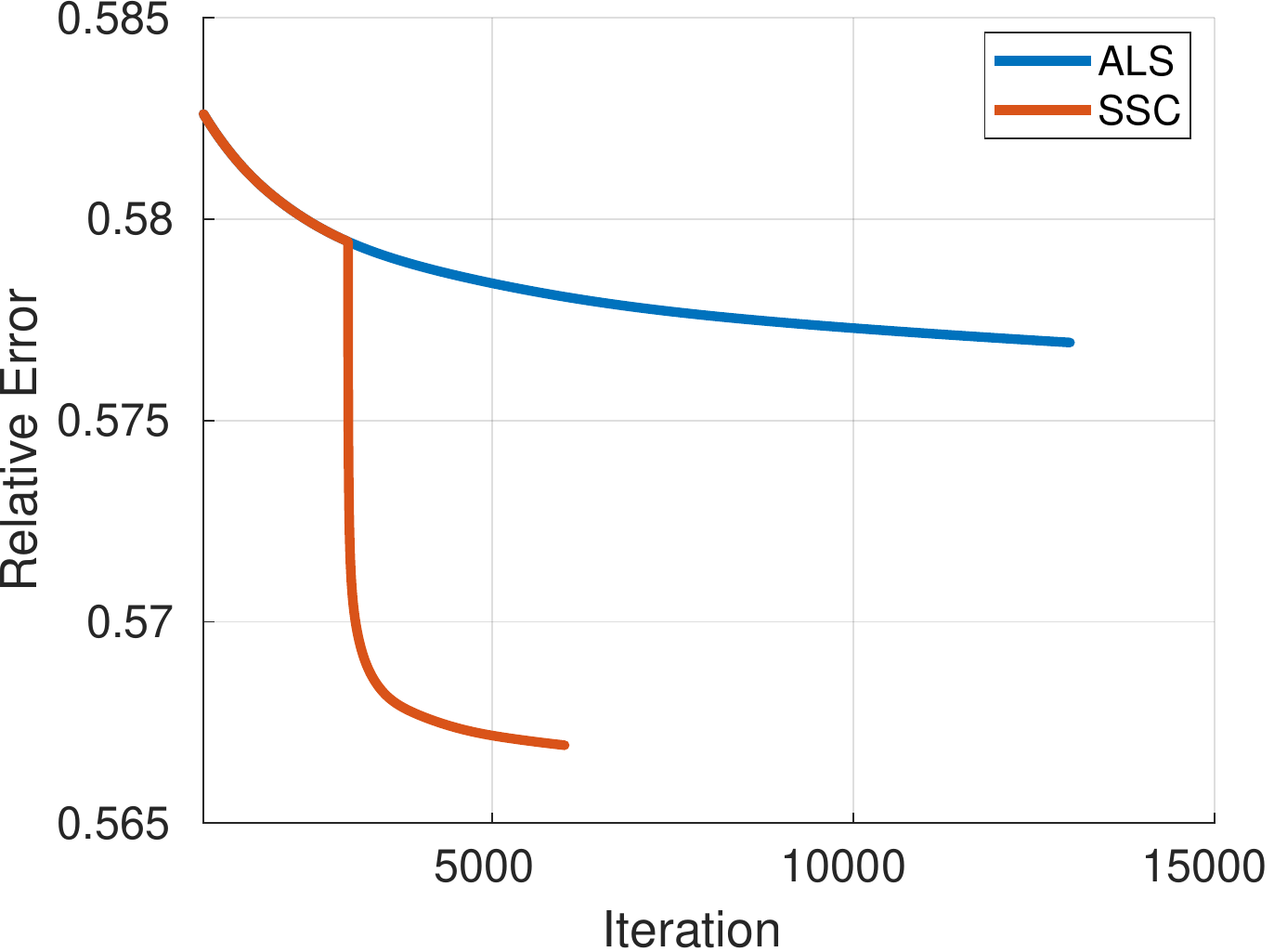}}
\subfigure[Convolutional layer 15]{\includegraphics[width=.30\linewidth, trim = 0.0cm 0cm 0cm 0cm,clip=true]{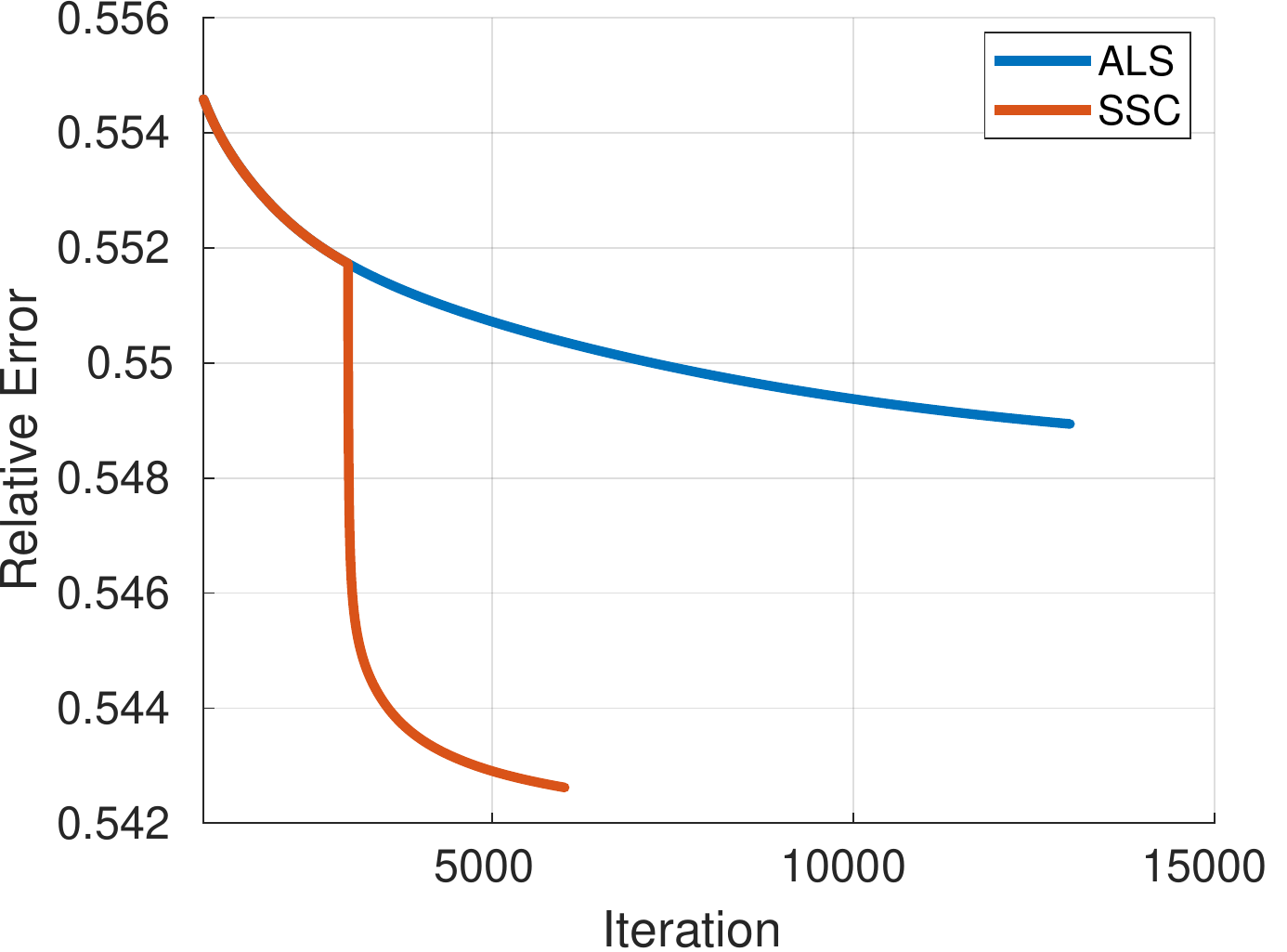}}
\subfigure[Convolutional layer 16]{\includegraphics[width=.30\linewidth, trim = 0.0cm 0cm 0cm 0cm,clip=true]{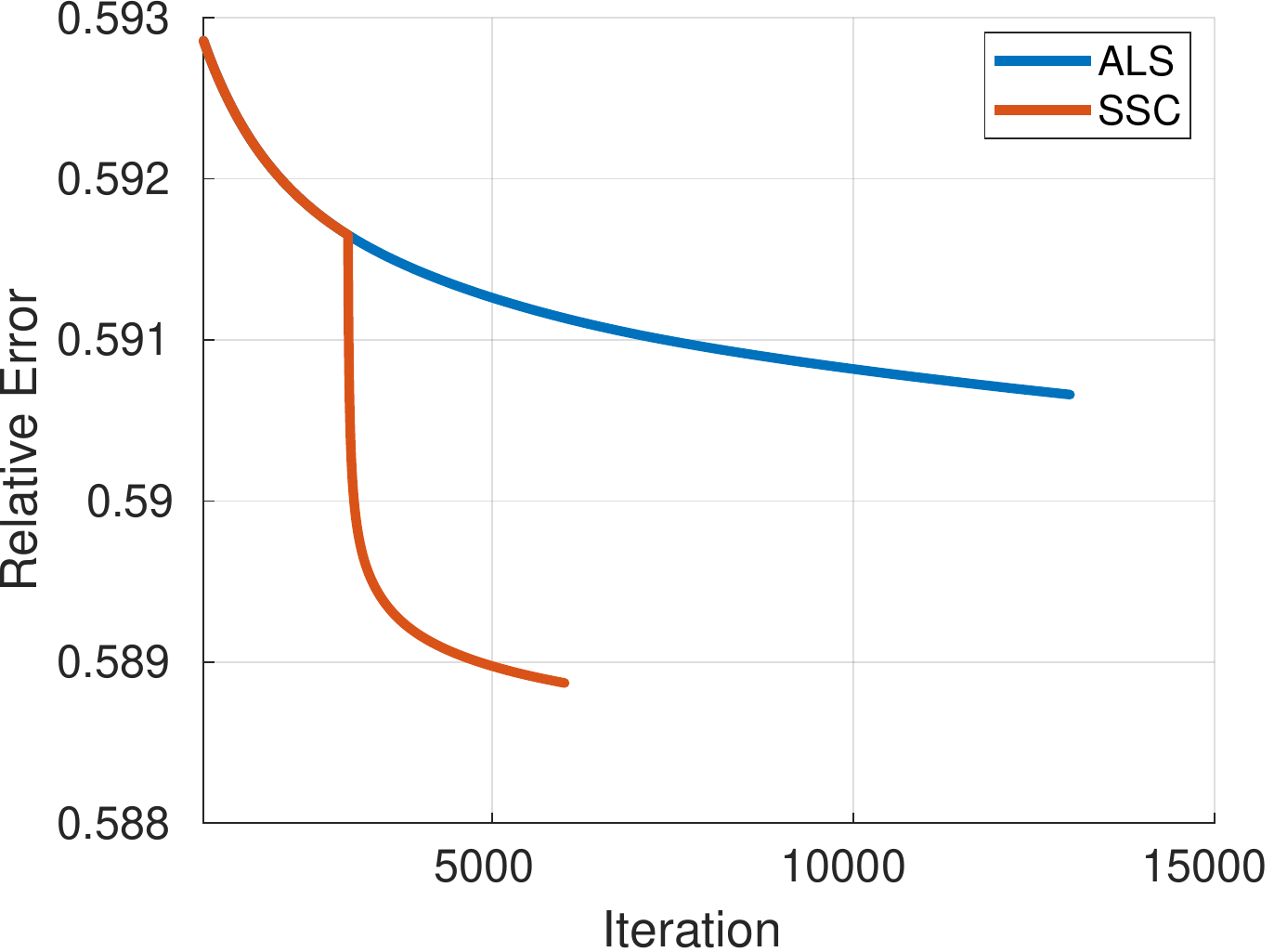}}
\subfigure[Convolutional layer 17]{\includegraphics[width=.30\linewidth, trim = 0.0cm 0cm 0cm 0cm,clip=true]{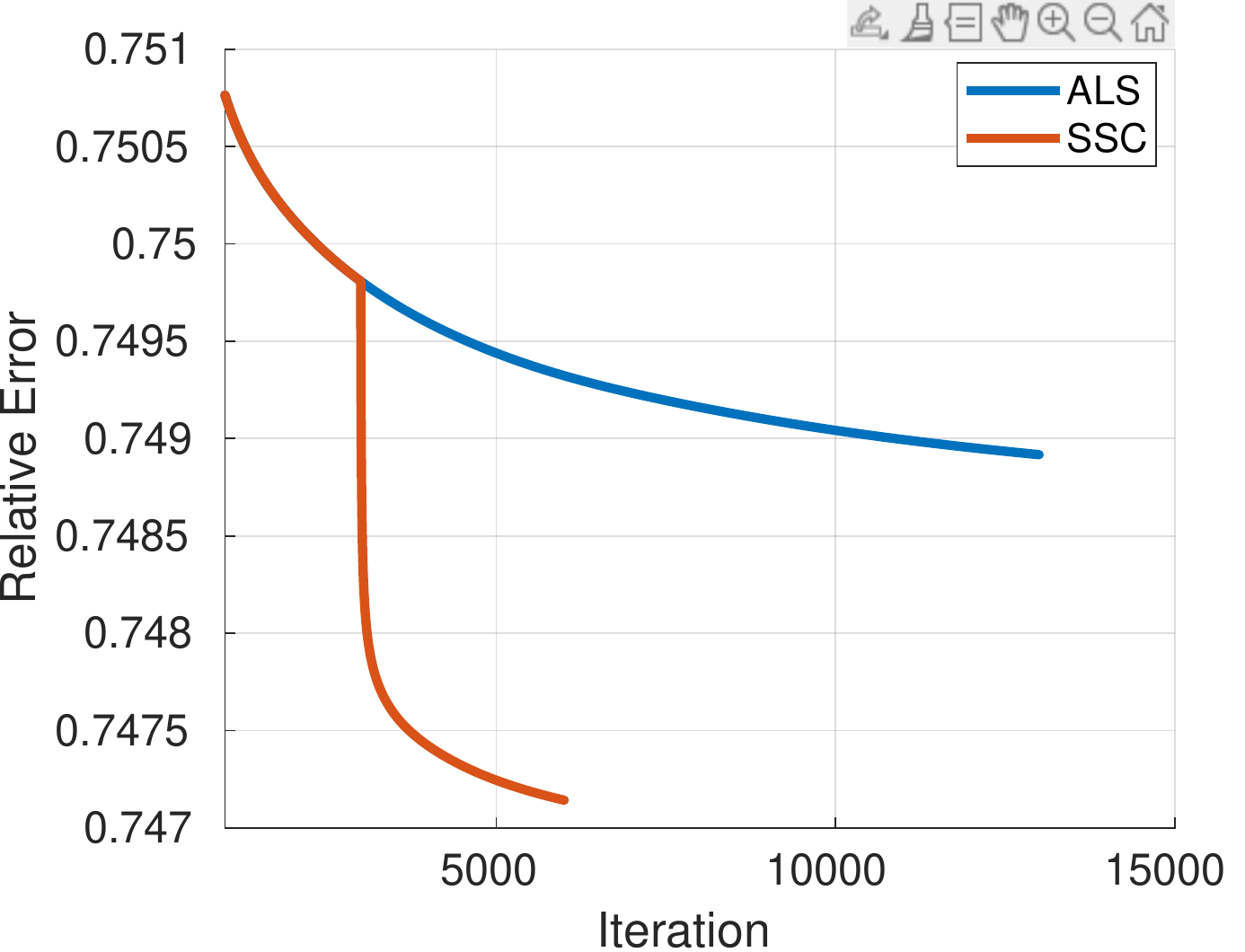}}
\subfigure[Convolutional layer 19]{\includegraphics[width=.30\linewidth, trim = 0.0cm 0cm 0cm 0cm,clip=true]{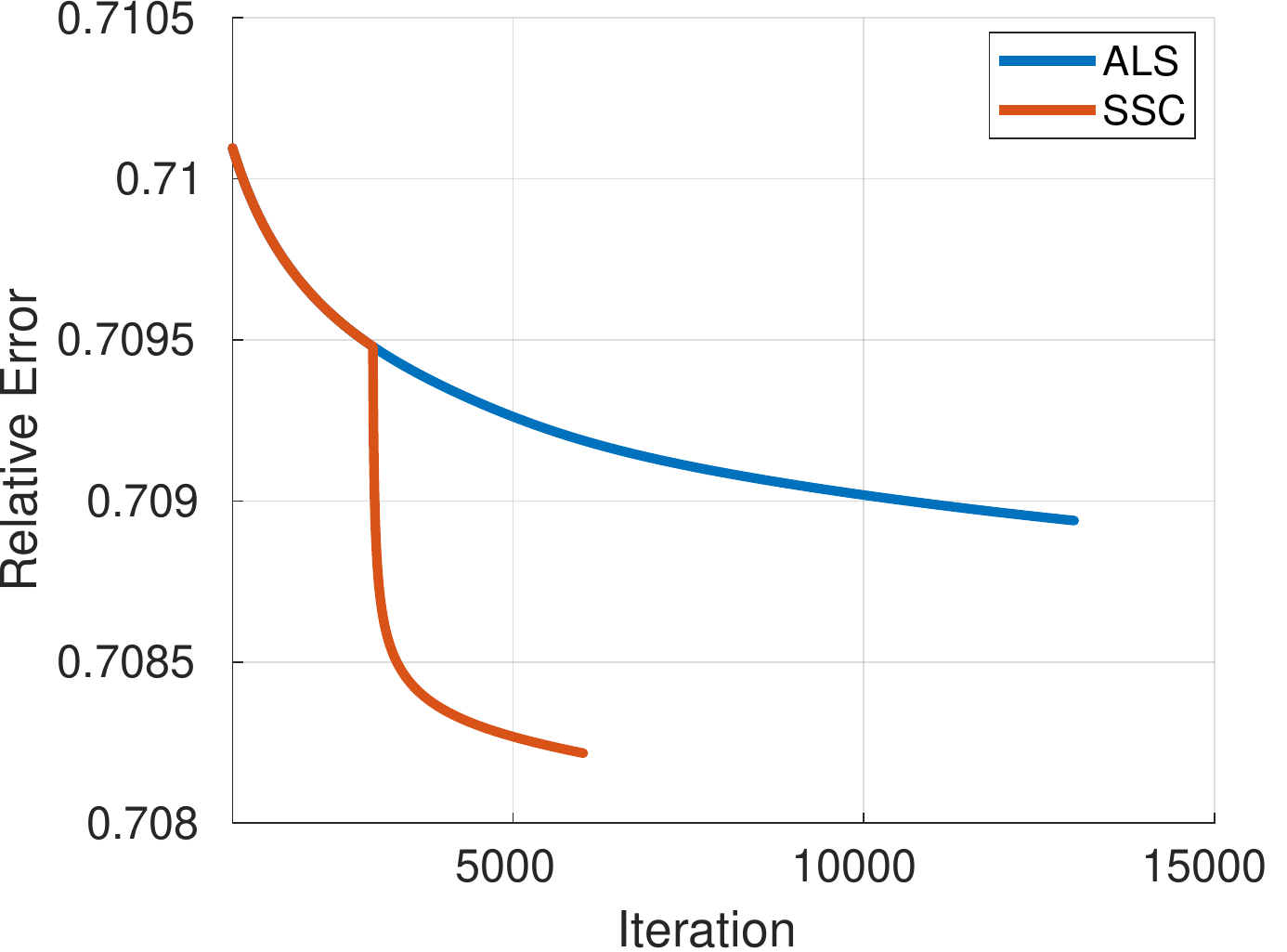}}
\subfigure[Convolutional layer 20]{\includegraphics[width=.30\linewidth, trim = 0.0cm 0cm 0cm 0cm,clip=true]{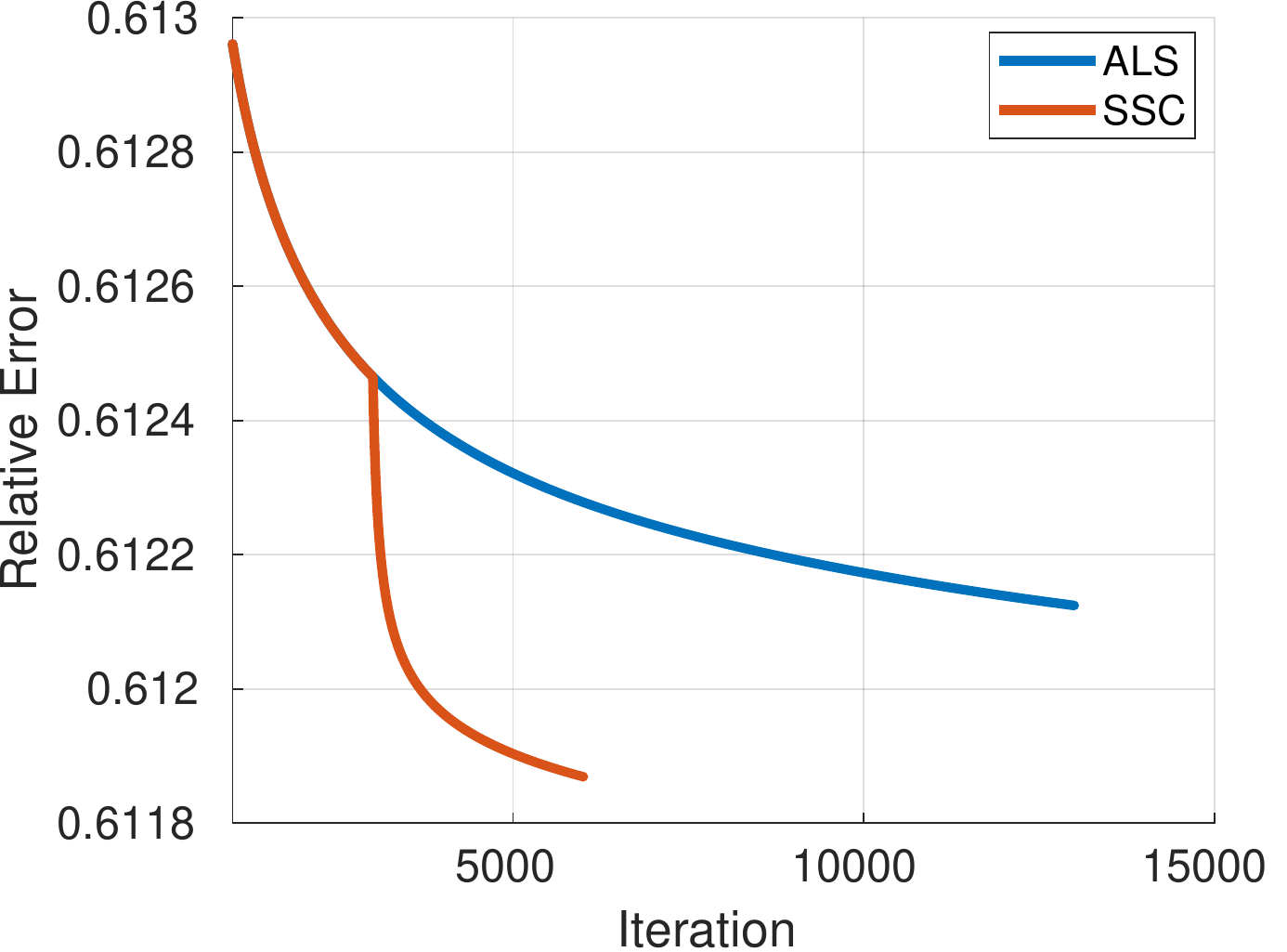}}
\caption{Relative approximation errors as function of iterations illustrate convergence of the decomposition of convoluional kernels using ALS and SSC.}\label{fig_ex3_err_it}
\end{figure}

\begin{figure}[!ht]
\centering
\subfigure[Convolutional layer 3]{\includegraphics[width=.32\linewidth, trim = 0.0cm 0cm 0cm 0cm,clip=true]{./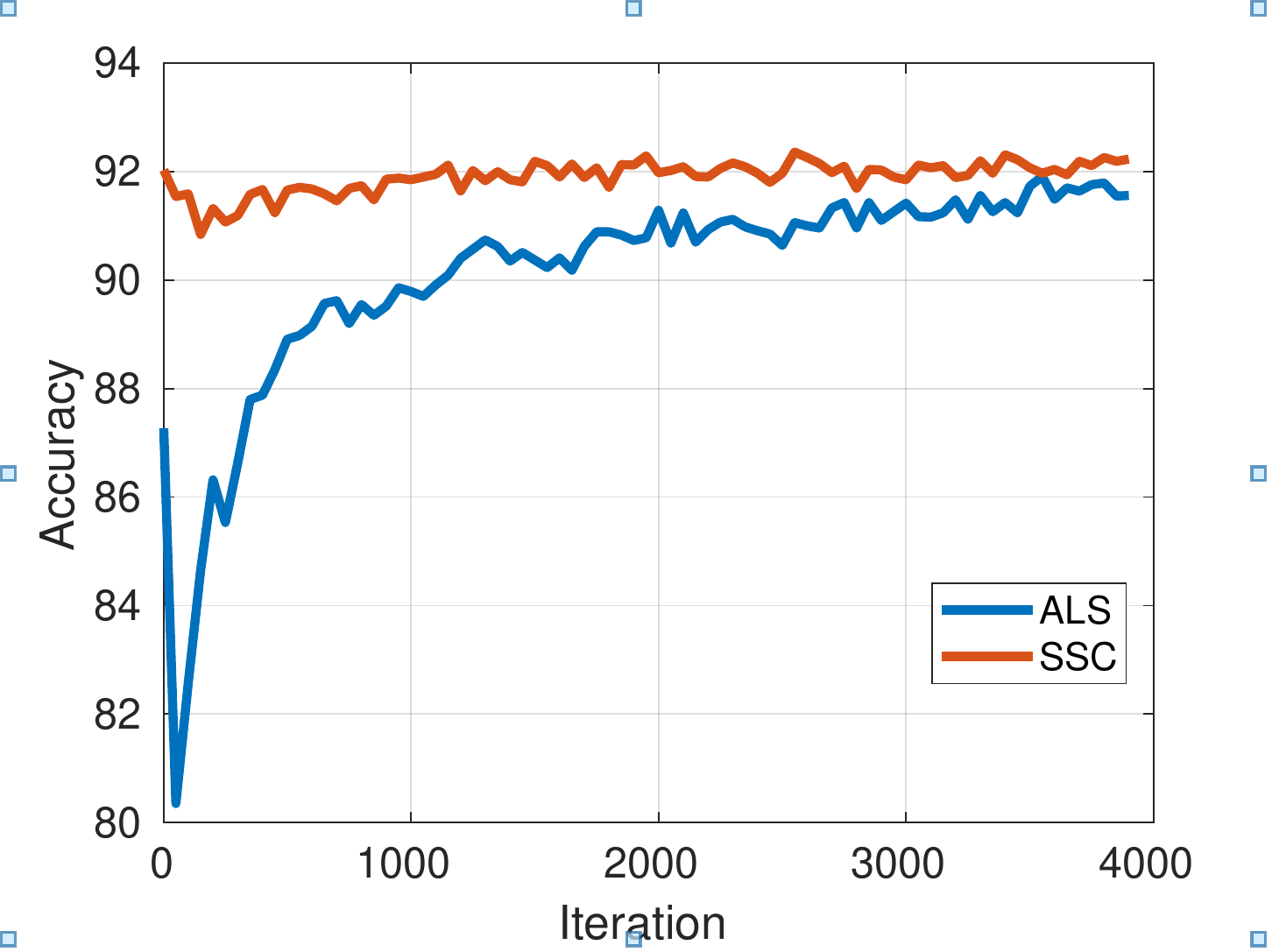}}
\subfigure[Convolutional layer 4]{\includegraphics[width=.32\linewidth, trim = 0.0cm 0cm 0cm 0cm,clip=true]{./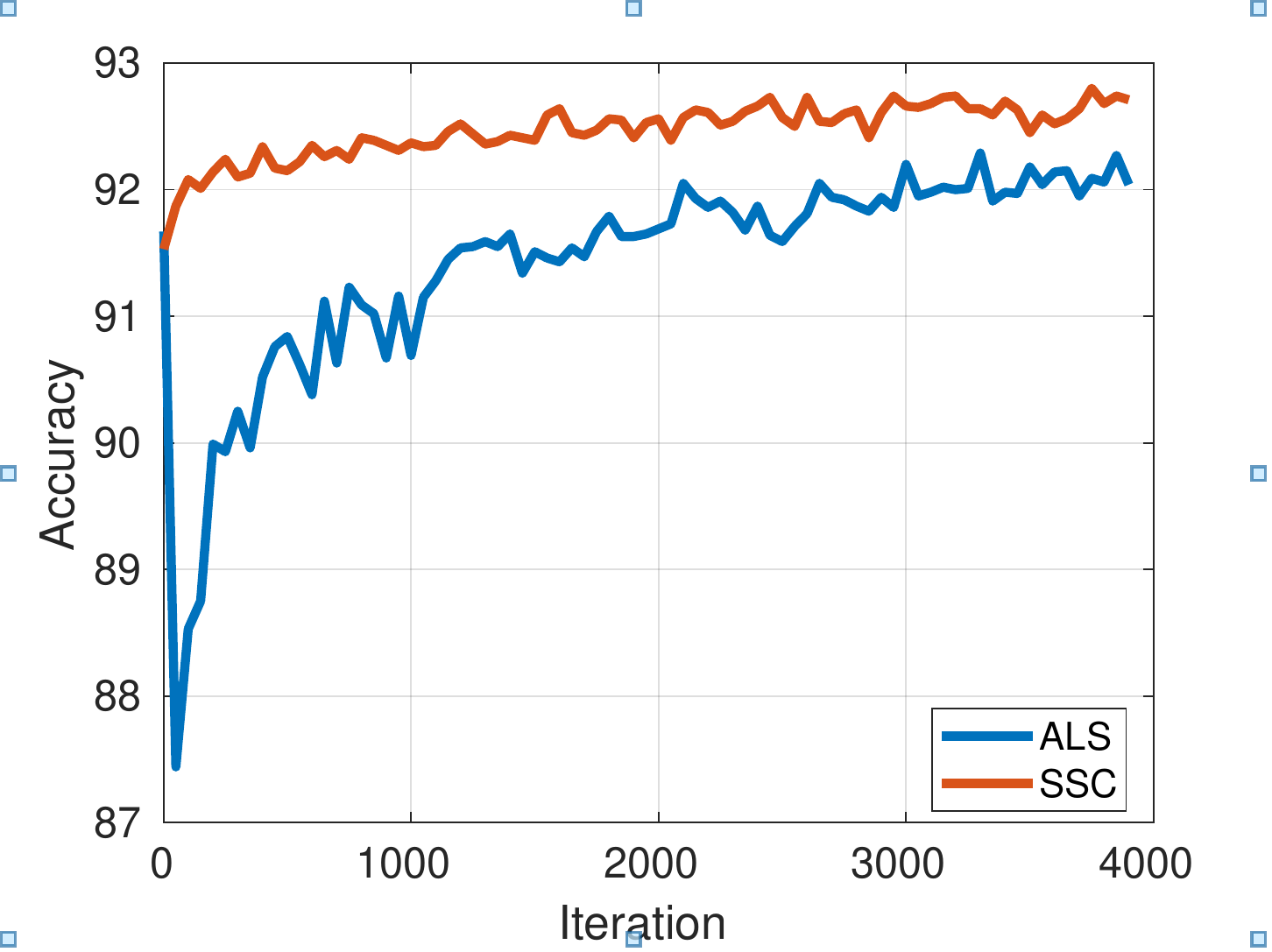}}
\subfigure[Convolutional layer 5]{\includegraphics[width=.32\linewidth, trim = 0.0cm 0cm 0cm 0cm,clip=true]{./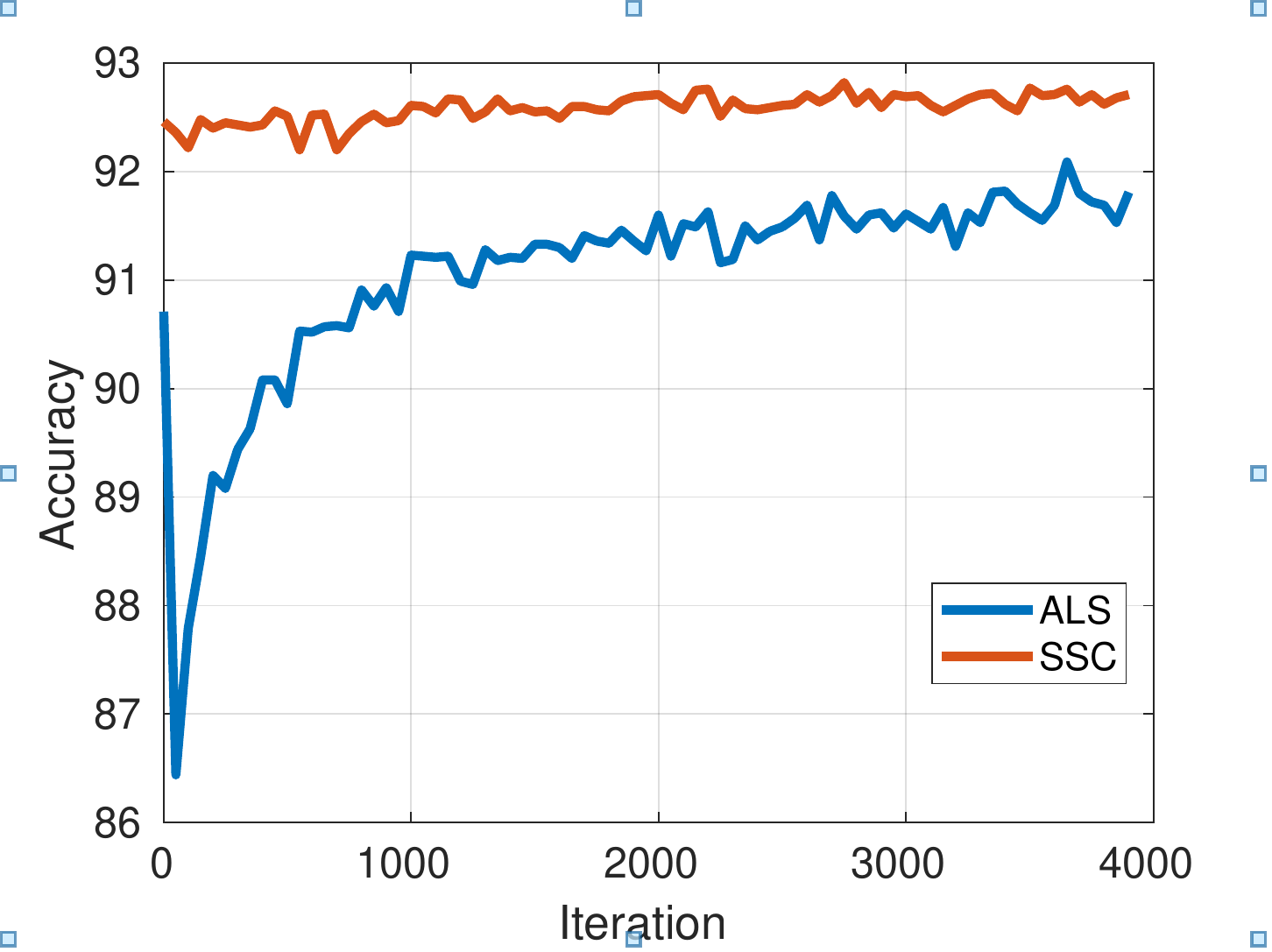}}
\subfigure[Convolutional layer 6]{\includegraphics[width=.32\linewidth, trim = 0.0cm 0cm 0cm 0cm,clip=true]{./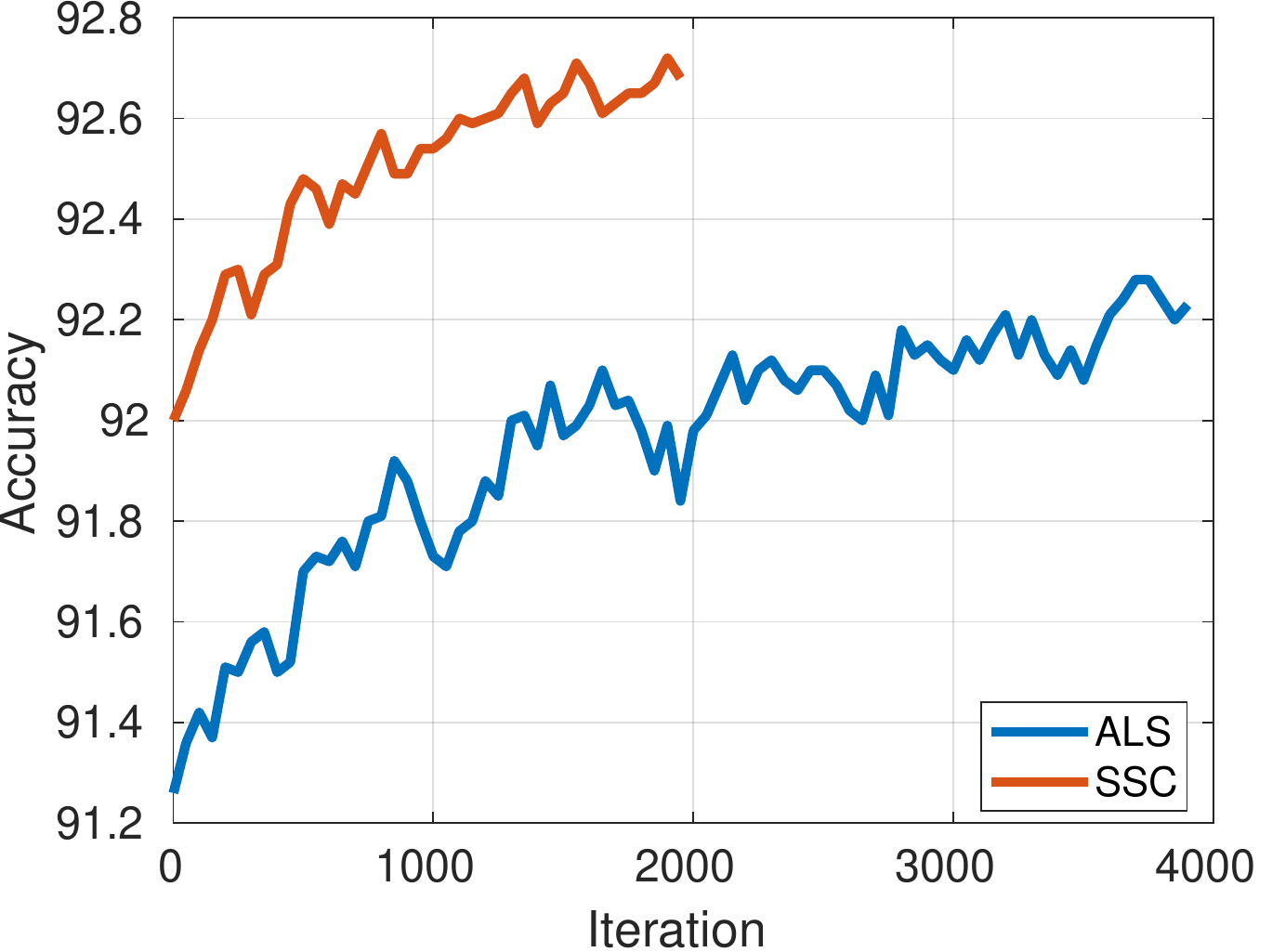}}
\subfigure[Convolutional layer 9]{\includegraphics[width=.32\linewidth, trim = 0.0cm 0cm 0cm 0cm,clip=true]{./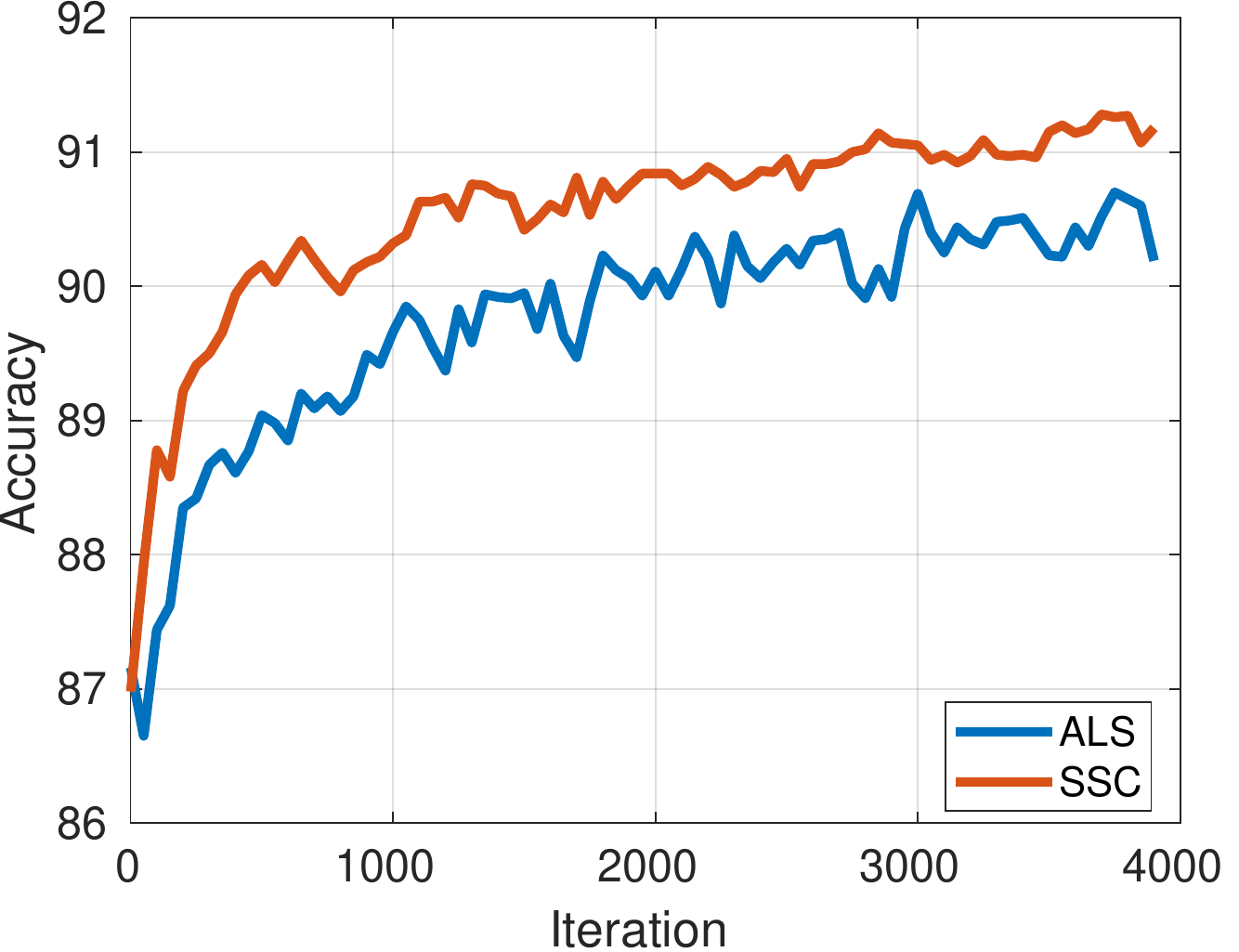}}
\subfigure[Convolutional layer 10]{\includegraphics[width=.32\linewidth, trim = 0.0cm 0cm 0cm 0cm,clip=true]{./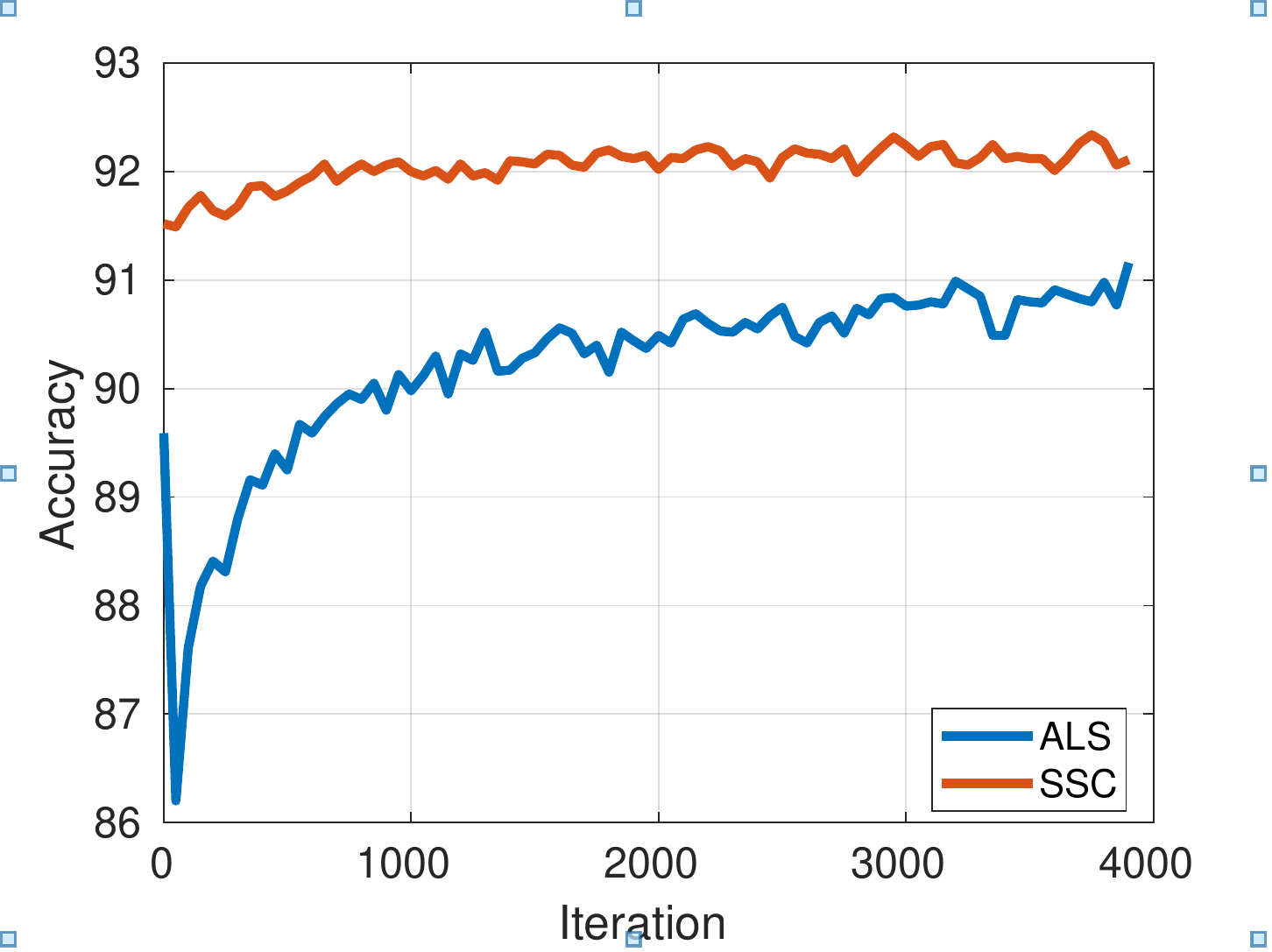}}
\subfigure[Convolutional layer 11]{\includegraphics[width=.32\linewidth, trim = 0.0cm 0cm 0cm 0cm,clip=true]{./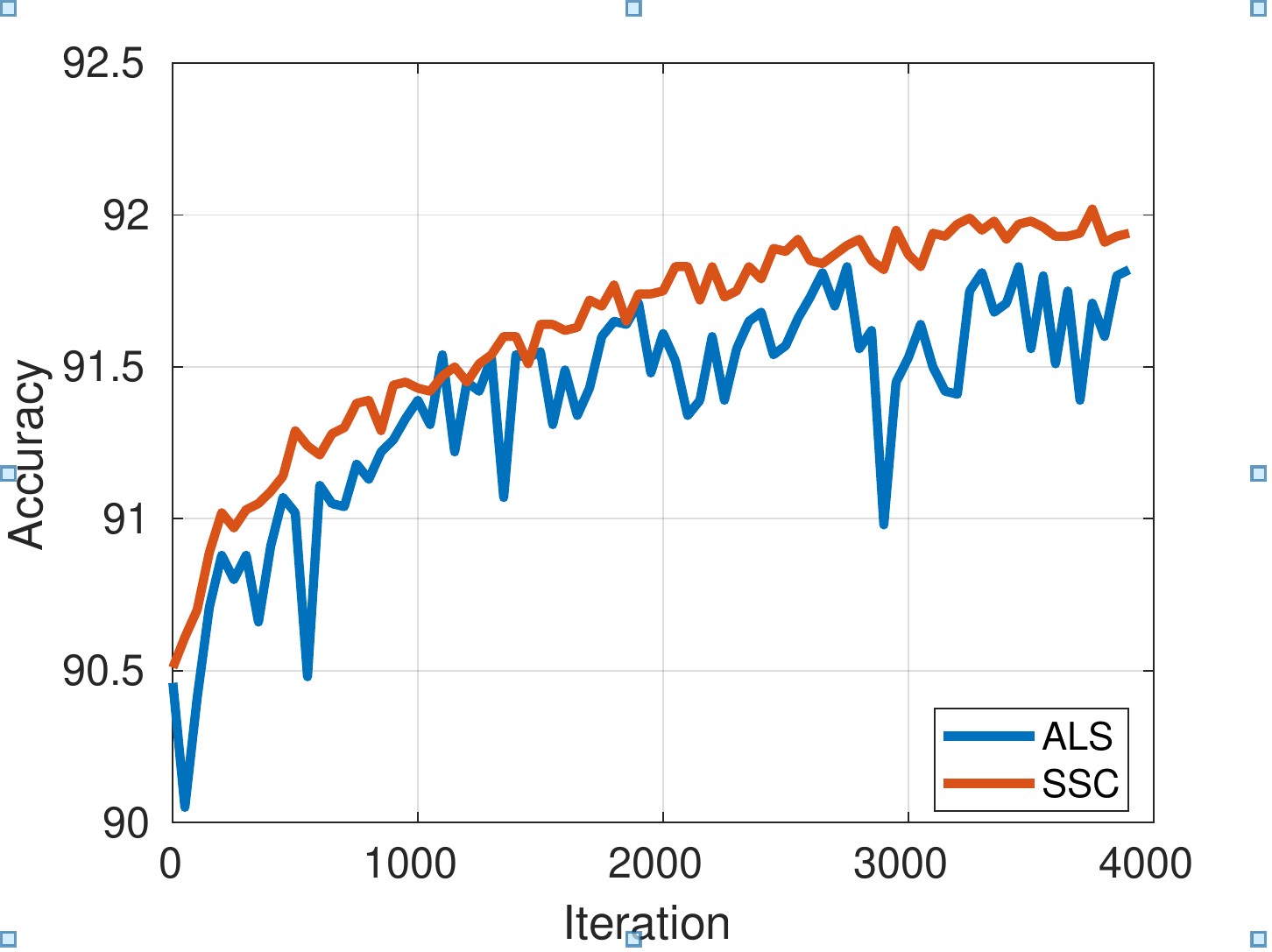}}
\subfigure[Convolutional layer 12]{\includegraphics[width=.32\linewidth, trim = 0.0cm 0cm 0cm 0cm,clip=true]{./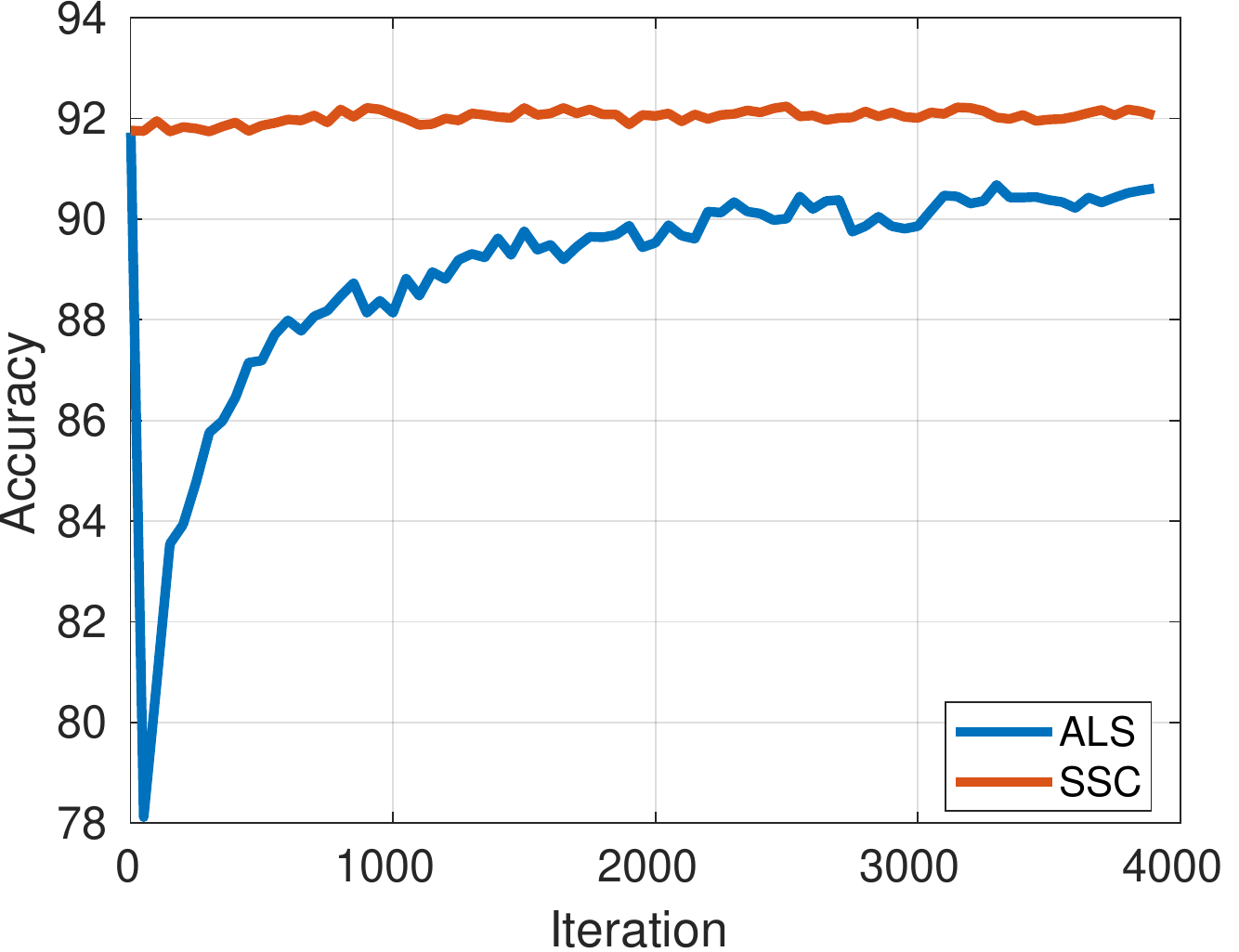}}
\subfigure[Convolutional layer 14]{\includegraphics[width=.32\linewidth, trim = 0.0cm 0cm 0cm 0cm,clip=true]{./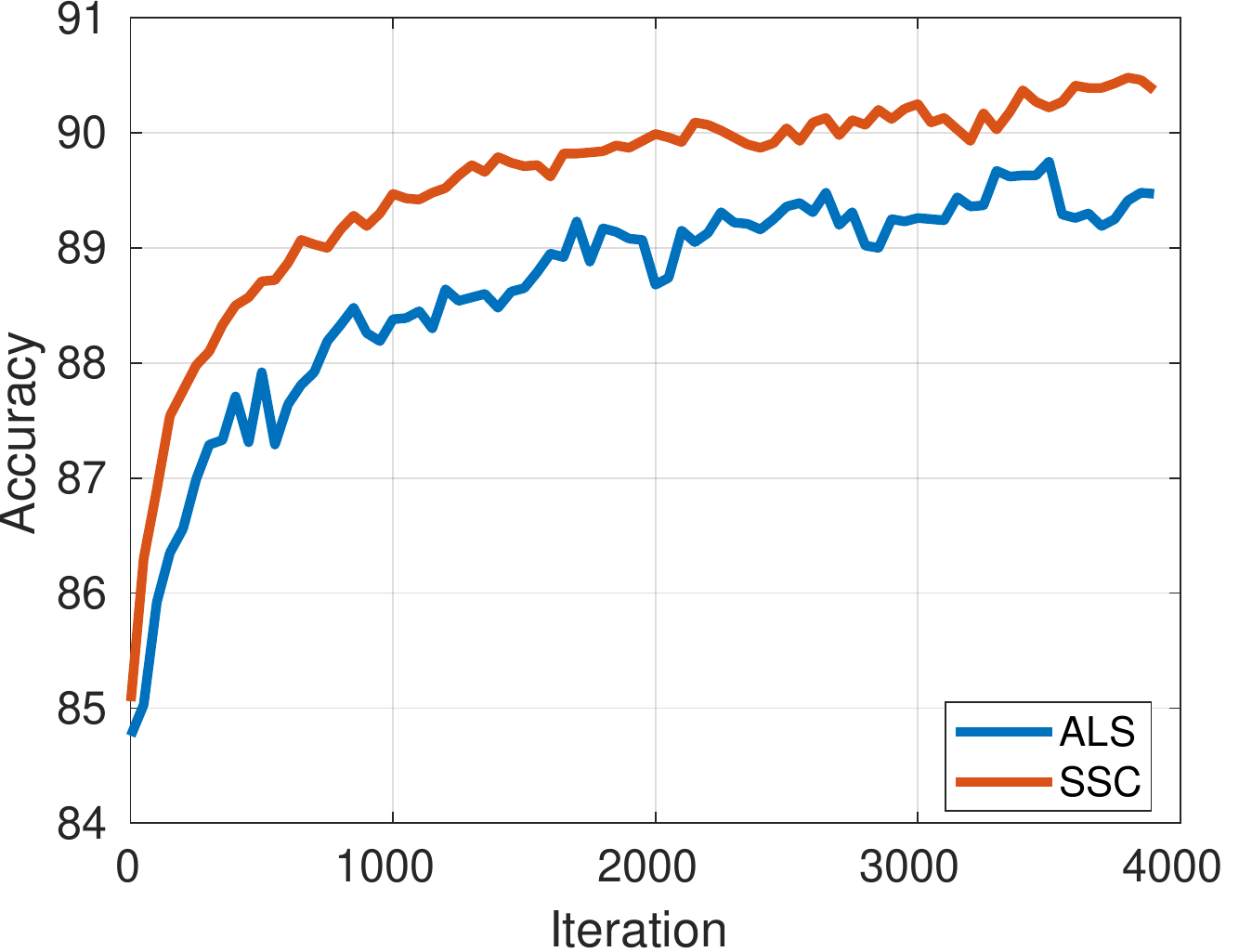}}
\subfigure[Convolutional layer 15]{\includegraphics[width=.32\linewidth, trim = 0.0cm 0cm 0cm 0cm,clip=true]{./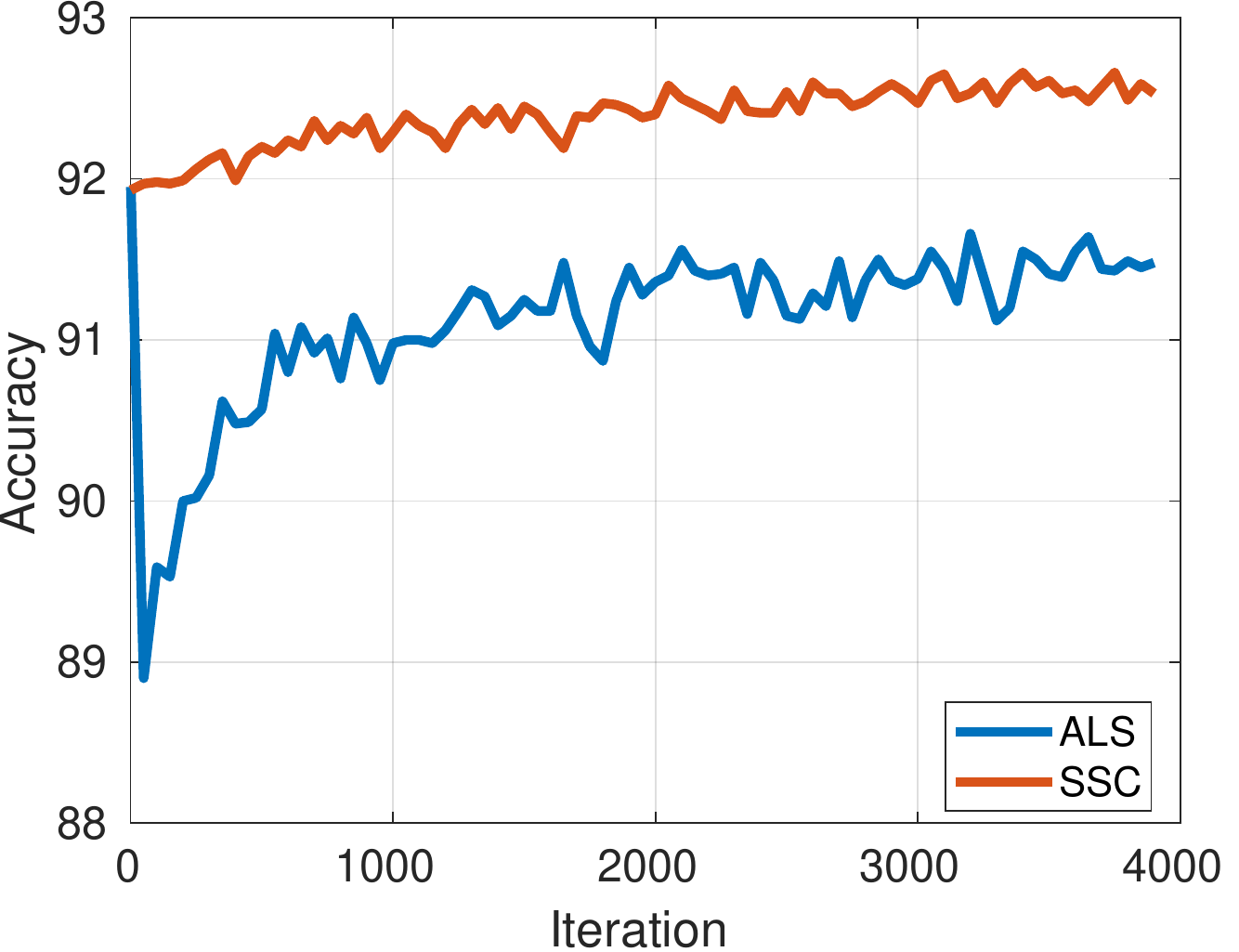}}
\subfigure[Convolutional layer 16]{\includegraphics[width=.32\linewidth, trim = 0.0cm 0cm 0cm 0cm,clip=true]{./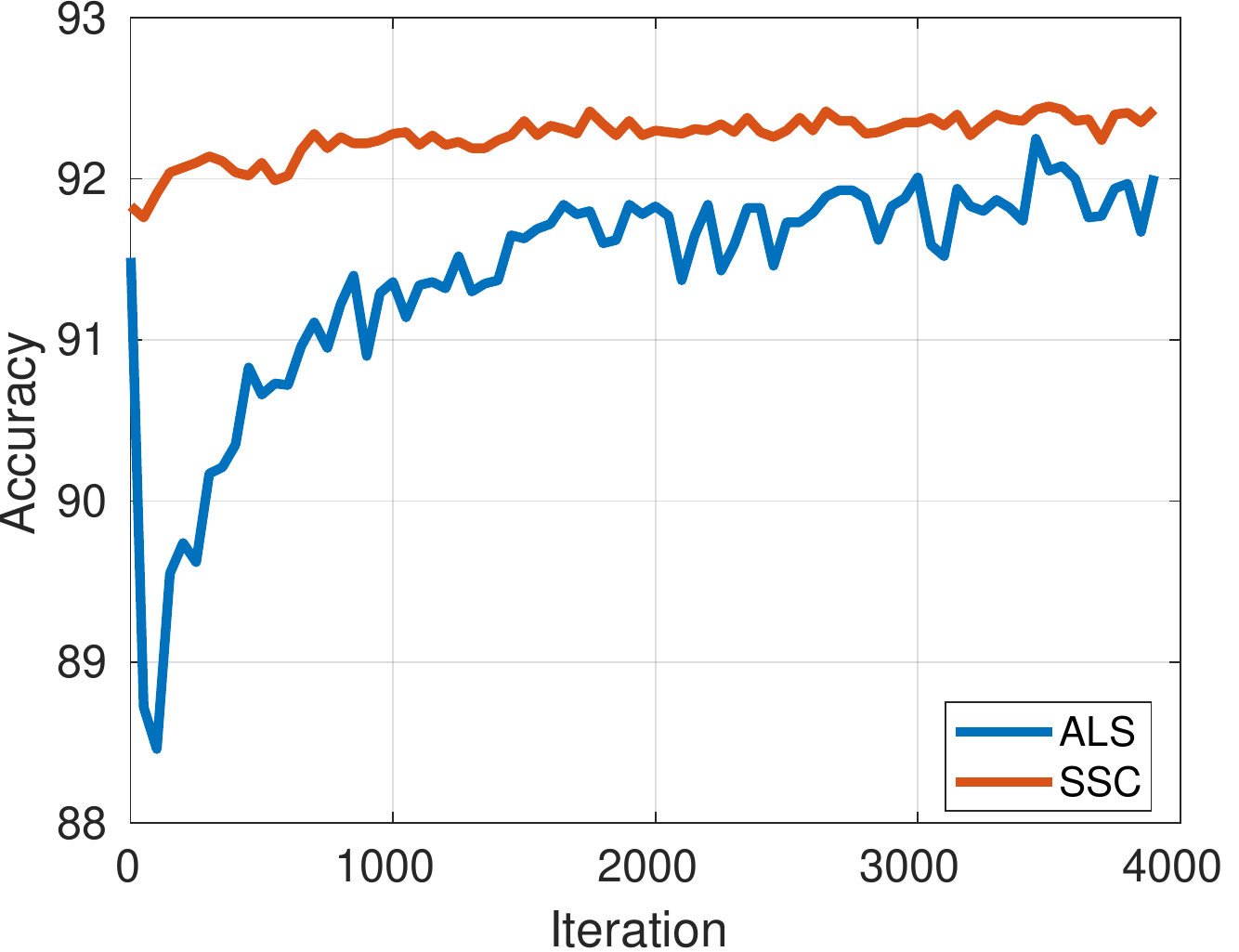}}
\subfigure[Convolutional layer 17]{\includegraphics[width=.32\linewidth, trim = 0.0cm 0cm 0cm 0cm,clip=true]{./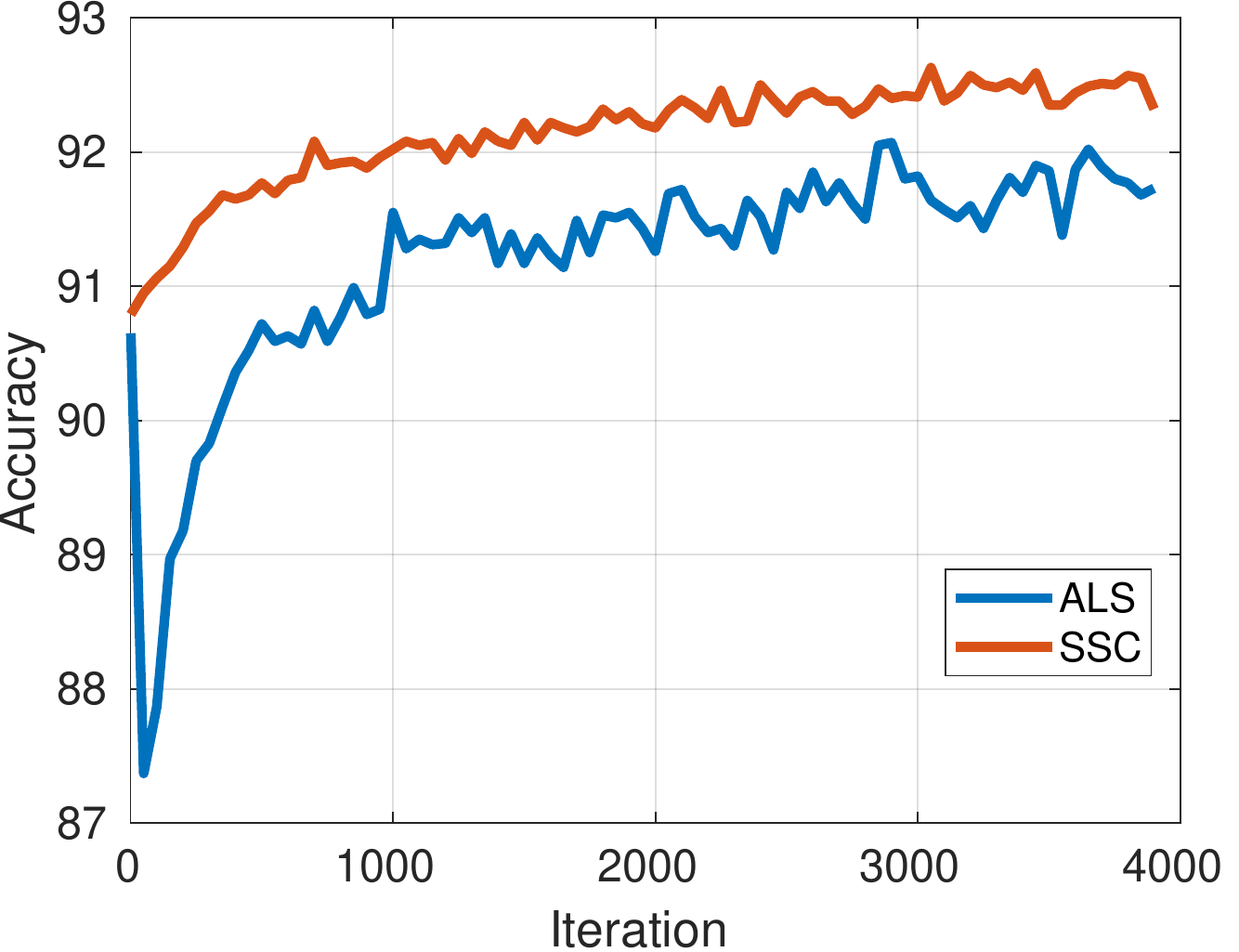}}
\caption{Learning curves for single layer compression of ResNet-18 finetuned on CIFAR-10 for Example~\ref{ex_resnet18_cifar10_v2}. Neural networks with TC layers whose convolutional coefficients are obtained by SSC are easier to train and converge faster than the same neural networks with weights estimated using ALS.}\label{fig_ex3_b}
\end{figure}

\begin{figure}[!ht]
\centering
{\includegraphics[width=.45\linewidth, trim = 0.0cm 0cm 0cm 0cm,clip=true]{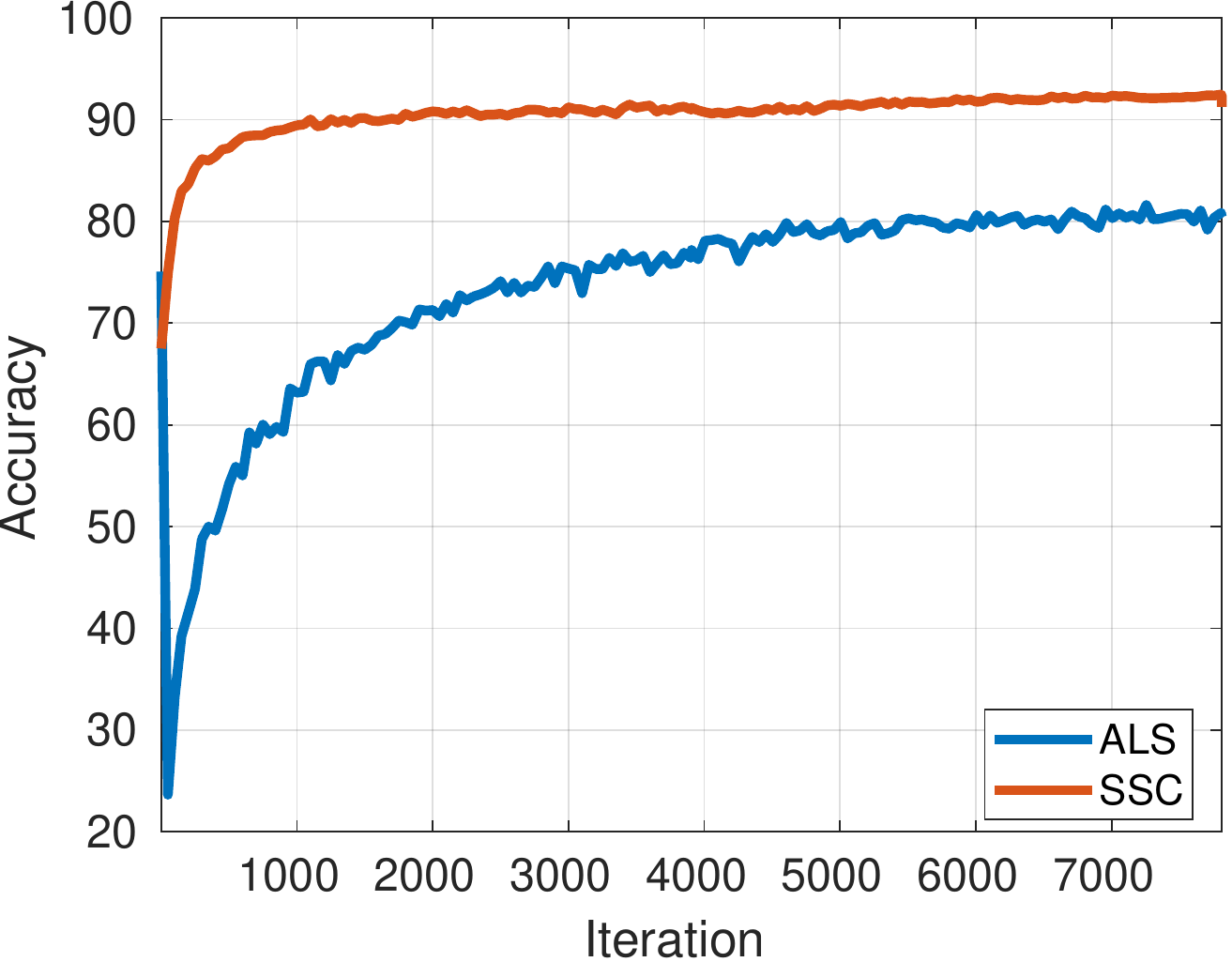}}
\caption{Accuracy of two new ResNet-18 with all convolutional kernels replaced by TC layers in Example~\ref{ex_resnet18_cifar10_v2}. The network whose kernels are estimated using ALS could not attain the original accuracy of ResNet-18 for CIFAR-10, 92.29\%. 
The network can reach the initial accuracy if the weights in all TC-layers are estimated using SSC.}\label{fig::resnet18_cifar_fullcompression}
\end{figure}

\begin{figure}[!ht]
\centerline{\includegraphics[width=.5\linewidth]{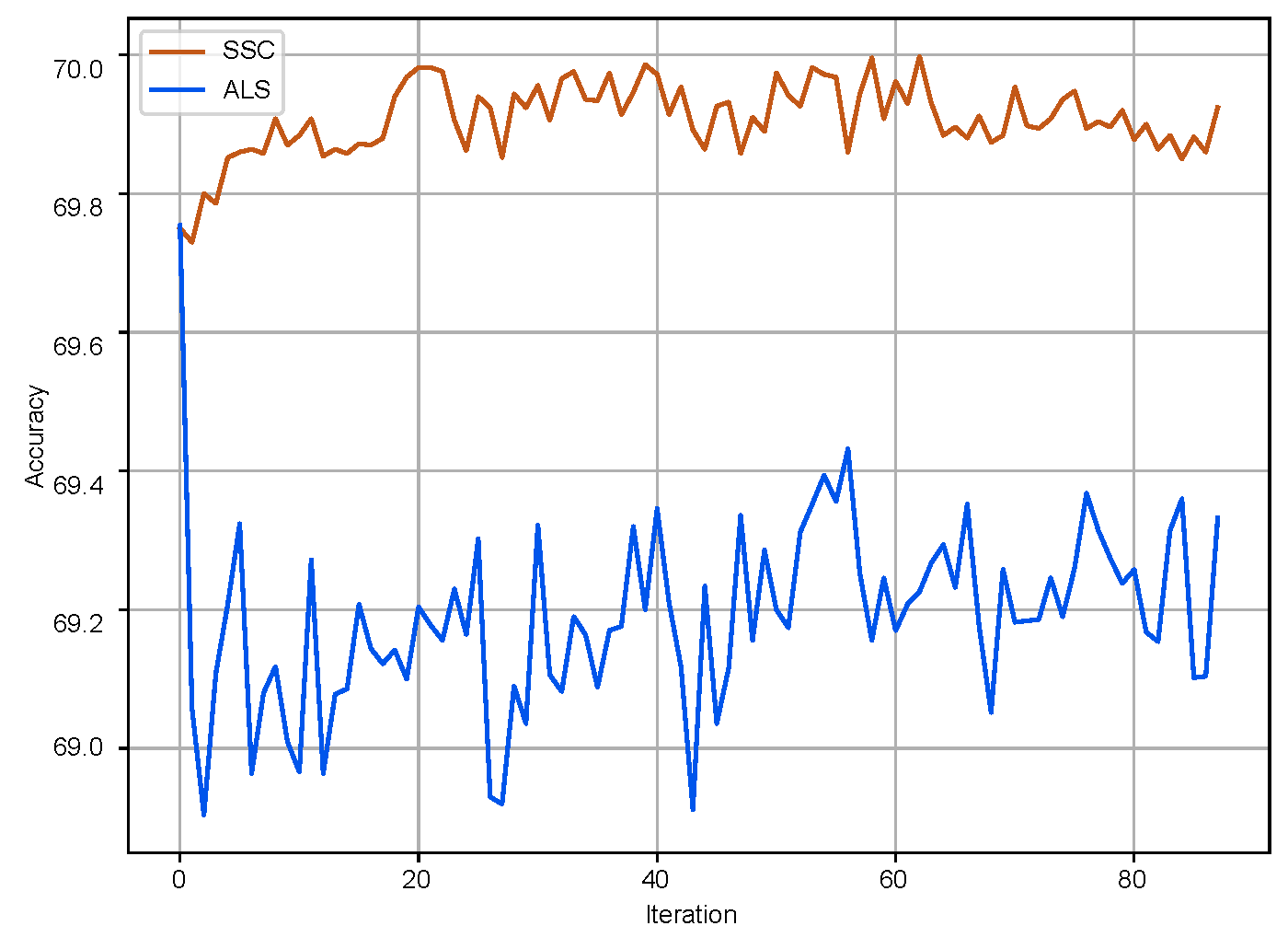}}
\caption{Accuracy vs iteration (one iteration is equal to 500 gradient steps with batch size 256) of single layer fine-tuning for layer4.1.conv2 trained on ILSVRC-12.}
\label{fig::resnet18_imagennet_layerwise}
\end{figure}

\end{document}